\documentclass[aoas]{imsart}

\usepackage{amssymb}
\RequirePackage{amsthm,amsmath,amsfonts,amssymb}
\RequirePackage[authoryear]{natbib}
\RequirePackage[colorlinks,citecolor=blue,urlcolor=blue]{hyperref}
\RequirePackage{graphicx}
\usepackage{bm}
\usepackage{amsmath}

\usepackage{algorithm}
\usepackage{algorithmic}
\usepackage{float}
\usepackage{caption}
\usepackage{setspace}

\newcounter{xalgsubstate}
\makeatletter

\makeatother
\newenvironment{xalgsubstates}
  {\setcounter{xalgsubstate}{0}%
   \renewcommand{\STATE}{%
     \refstepcounter{xalgsubstate}%
     \item[] {\footnotesize\alph{xalgsubstate}.}\space}}
  {}

\newcounter{xalgsubsubstate}
\makeatletter
\renewcommand{\thexalgsubsubstate}{\advance\leftskip\arabic{ALC@line}.(\roman{xalgsubsubstate})}
\makeatother
\newenvironment{xalgsubsubstates}
  {\setcounter{xalgsubsubstate}{0}%
   \renewcommand{\STATE}{%
     \refstepcounter{xalgsubsubstate}%
     \item[] {\hspace{2em}\footnotesize(\roman{xalgsubsubstate}).}\space}}
  {}


\startlocaldefs

\theoremstyle{plain}

\newtheorem{assumption}{Assumption}
\newtheorem{proposition}{Proposition}

\theoremstyle{remark}

\DeclareMathOperator*{\argmin}{argmin}
\DeclareMathOperator*{\argmax}{argmax}

 \newcommand{\ind}{\perp\!\!\!\!\perp} 

\endlocaldefs

\begin{document}

\begin{frontmatter}
\title{Dynamic Topic Language Model on heterogeneous Children's Mental Health Clinical Notes}

\begin{aug}
\author[A]{\fnms{}~\snm{Hanwen Ye}\ead[label=e1]{hanweny@uci.edu}},
\author[B]{\fnms{}~\snm{Tatiana Moreno}\ead[label=e2]{Tatiana.Moreno@choc.org}},
\author[B]{\fnms{}~\snm{Adrianne Alpern}\ead[label=e3]{AAlpern@choc.org}},
\author[B]{\fnms{}~\snm{Louis Ehwerhemuepha}\ead[label=e4]{LEhwerhemuepha@choc.org}}
\and
\author[A]{\fnms{}~\snm{Annie Qu}\ead[label=e5]{aqu2@uci.edu}}

\address[A]{Department of Statistics,
University of California, Irvine\printead[presep={,\ }]{e1,e5}}
\address[B]{Children's Hospital of Orange County\printead[presep={,\ }]{e2,e3,e4}}
\end{aug}

\begin{abstract}
Mental health diseases affect children's lives and well-beings which have received increased attention since the COVID-19 pandemic. Analyzing psychiatric clinical notes with topic models is critical to evaluating children's mental status over time. However, few topic models are built for longitudinal settings, and most existing approaches fail to capture temporal trajectories for each document. To address these challenges, we develop a dynamic topic model with consistent topics and individualized temporal dependencies on the evolving document metadata. Our model preserves the semantic meaning of discovered topics over time and incorporates heterogeneity among documents. In particular, when documents can be categorized, we propose a classifier-free approach to maximize topic heterogeneity across different document groups. We also present an efficient variational optimization procedure adapted for the multistage longitudinal setting. In this case study, we apply our method to the psychiatric clinical notes from a large tertiary pediatric hospital in Southern California and achieve a 38\% increase in the overall coherence of extracted topics. Our real data analysis reveals that children tend to express more negative emotions during state shutdowns and more positive when schools reopen. Furthermore, it suggests that sexual and gender minority (SGM) children display more pronounced reactions to major COVID-19 events and a greater sensitivity to vaccine-related news than non-SGM children. This study examines children's mental health progression during the pandemic and offers clinicians valuable insights to recognize disparities in children's mental health related to their sexual and gender identities.
\end{abstract}

\begin{keyword}
\kwd{Classifier-free}
\kwd{multistage topic language models}
\kwd{sexual and gender identity}
\kwd{time-consistent topics}
\kwd{variational inference}
\end{keyword}

\end{frontmatter}

\section{Introduction}
\subsection{Motivation}
Mental health conditions, such as anxiety, depression, and substance abuse, are prevalent among children and can have long-lasting impacts on their social relationships and academic performance. Without proper intervention, mental health conditions can lead to school absences, academic failure, isolation from peers, and-in some cases-an increased risk of suicide. Unfortunately, the outbreak of COVID-19 has intensified pediatric mental health issues due to the extent level of enforced physical/social isolation \citep{wu2021prevalence, ravens2022impact}. In particular, sexual and gender minority (SGM) youth who live in low-supportive home environments and lose access to previously affirming environments, such as schools and social activities, are placed at an increased risk for abuse and rejections from their family members \citep{mcgeough2018systematic,thoma2021disparities}. Therefore, to develop effective interventions and create post-pandemic support systems, it is crucial for mental health professionals to understand the dynamic changes and disparities in children's mental status concerning their sexual and gender identities during the pandemic \citep{salerno2020sexual}.

However, evaluating children’s mental health, in general, is challenging due to the self-reported symptoms and complex heterogeneity among subjects. For instance, studies found that girls and SGM children tend to report their symptoms differently, and are more likely to be diagnosed with depression compared to boys and non-SGM youth even though they express similar symptoms \citep{afifi2007gender,rosenfield2013gender,marshal2011suicidality,ploderl2015mental,russell2016mental}. To mitigate this bias, questionnaires and telephone surveys with validated rating scale \citep{penninx2008netherlands, barry2014midlife, boyd2013cohort} are conducted to quantify patients’ mental health symptoms with derived metrics. Yet, these study designs still suffer from selection bias and are unable to capture important life events or stressors impacting the patient’s mental health.

In this study, we leverage inpatient mental health unit notes from a large tertiary pediatric hospital in Southern California to evaluate children's mental health over the pandemic period. Compared to survey metrics, clinical notes contain a more contextualized background of a patient (e.g., mental health history, hospitalization reasons, interactions with clinicians, etc). Our goal is to identify major life events and stressors from these clinical records and track their dynamic shifts to quantify mental health status changes among nearly 2,600 inpatient children throughout the pandemic. Importantly, these contextual factors play a crucial role in revealing the prevalent and long-lasting themes in mental health, such as depression, anxiety disorders, and suicidal intentions \citep{scott1958research, 
ronald2010evolving, usnational, ciechanowski2023public}. By understanding the evolution trends of these themes, we offer valuable insights into the progression of children's mental health throughout the pandemic era. Moreover, taking into account each child's unique life experience and reactions, we incorporate individual-level heterogeneity and aim to uncover distinct trends between SGM and non-SGM children. This allows investigators to better recognize mental health disparity among SGM youth during COVID-19 and develop tailored post-pandemic support systems concerning sexual and gender identities.

\subsection{Literature review}
Unsupervised topic modeling is a popular statistical approach which aligns well with our objective of discovering abstract themes (i.e., topics) from a large corpus of text data. By clustering common patterns and keywords across multiple documents, topic models uncover the underlying semantics associations and summarize lengthy documents into a manageable number of interpretable topics. In the existing literature, topic modeling can be typically categorized into two major frameworks: Bayesian probabilistic topic models (BPTMs) and neural topic models (NTMs). A BPTM proposes a probabilistic generative model of a document and applies Bayesian inference procedures to estimate the posterior of latent topics. Representative methods include latent Dirichlet allocation (LDA) \citep{blei2003latent, blei2006correlated}, the dynamic LDA \citep{blei2006dynamic, wang2012continuous}, where topics evolve with time, and the supervised LDA \citep{mcauliffe2007supervised, roberts2014structural, li2015supervised, sridhar2022heterogeneous} with augmentation of document metadata. However, BPTMs generally suffer from low sampling efficiency and high technical difficulties in customizing the optimization procedure for each model prior specification. NTMs, on the other hand, are based on neural networks to model the relationships between words and topics. Benefiting from the standard gradient descent optimization procedure, NTMs can be easily integrated into different application cases and achieve high training efficiency on large datasets. Under the NTMs framework, one can find topics via clustering the word embedding representations \citep{thompson2020topic, sharifian2022analysing}, or modeling topics as latent variables in autoregressive models \citep{larochelle2012neural, gupta2019document}, generative adversarial networks (GANs) \citep{wang2019atm, hu2020neural}. In particular, the variational autoencoder (VAE)-based NTMs \citep{miao2016neural, srivastava2017autoencoding, lin2019sparsemax} have received the most attention due to their ability to capture complex word-topic associations with deep learning architecture, and meanwhile, provide probabilistic interpretations to the latent topics based on variational inference.

Among these frameworks discussed above, three methods offer promising solutions to our problem: the multistage dynamic LDA \citep{blei2006dynamic}, the supervised LDA \citep{mcauliffe2007supervised}, and SCHOLAR which is the VAE-NTM with metadata augmentation \citep{card2017neural}. However, none of these methods is directly applicable to our specific use case. First, the time-varying topics found by the dynamic LDAs may distort the meaning of each topic, fail to capture consistent mental health themes, and impose difficulties in interpreting the topic proportion trend due to the loss of time consistency. In addition, both supervised LDAs and SCHOLAR are single-stage topic models and rely on classifiers, instead of topic distributions, to differentiate groups. Moreover, few studies have extended the VAE-NTMs framework to the multistage longitudinal setting, though VAEs with spatiotemporal dependencies have been actively explored in the computer vision field \citep{gulrajani2016pixelvae, casale2018gaussian,fortuin2020gp, ramchandran2021longitudinal}.


\subsection{Contribution}

This paper proposes a novel multistage dynamic VAE-NTM, namely Heterogeneous Classifier-Free Dynamic Topic Model (HCF-DTM), with grouping information to address the challenges discussed above. In contrast to the dynamic LDAs, our method finds a number of time-consistent topics among all documents at any time point. This not only maintains the semantic meaning of discovered topics over the investigation period, but also enables the direct use of obtained topic proportions to infer the dynamic change in the popularity of each topic. Additionally, we augment the document metadata into the topic-finding procedure to account for the longitudinal heterogeneity among documents. Moreover, compared to the supervised LDAs, our proposed model increases the group-wise differences directly on the latent topic distribution level. Instead of relying on additional downstream classifiers, we introduce the counterfactual topic proportions and maximize the inter-distributional distances between topic proportions of the ground truth group and those as if the documents belonged to the other groups. As a result, the distinct characteristics of each group and their corresponding topic evolution trend can be easily identified.

The main clinical contributions of our paper are as follows. First, this work is among the first to utilize topic models on unstructured psychiatric clinical notes to unfold the longitudinal mental health disparities concerning sexual and gender identities during the pandemic. Knowledge of possible pronounced reactions among SGM children towards major COVID-19 events advocates for developing tailored post-pandemic treatments. Clinicians can design programs, such as virtual support groups and online mental health platforms \citep{whaibeh2022addressing, karim2022support,mcgregor2023providing}, to help address the heightened stress, anxiety, and isolation experienced by SGM youth when their previously accessible resources become limited due to the pandemic. In addition, our study assists clinicians in further examining SGM-related contextual stressors. This not only raises community awareness about the unique challenges faced by SGM children, promoting family education and fostering community support, but also informs future research on the mental health of SGM children and gets better prepared for future pandemics or social crises.


The remainder of this article is structured as follows. In Section \ref{sec:Background}, we introduce the notations and limitations of the dynamic LDA method. In Section \ref{sec:Methodology}, we propose the generative process of HCF-DTM, present variational inference details, and introduce a classifier-free approach to maximize heterogeneity between groups. Section \ref{sec:Implementation} explains the implementation algorithm. In Section \ref{sec:Simulation}, extensive simulation results are presented to illustrate the performance advantages of our proposed HCF-DTM method. In Section \ref{sec:real-data}, we apply the proposed method to the psychiatric inpatient clinical notes provided by a large tertiary pediatric hospital in Southern California. Lastly, we conclude with discussions in Section \ref{sec:Discussion}. Technical details and proofs are provided in Supplementary Materials \citep{YeSupplemental}. 

\section{Background and Related Works} \label{sec:Background}

\subsection{Notation and Preliminary}

Consider a balanced multistage study where $N$ participants undergo a total $T$ finite number of stages (visits). Each participant belongs to a corresponding group. For the illustration purpose, we consider a two-group scenario (e.g., SGM and non-SGM), denoted as $Y_i \in \mathcal{Y} = \{0,1\}$. At the $t^{th}$ stage, where $1 \le t \le T$, a set of subject's time-varying covariates and a clinical note (document) are recorded. The structured subjects' covariates are regarded as the metadata, denoted as $X_{it} \in \mathcal{X}_t$, and the unstructured clinical notes are the text data of interest. 

To formalize the unstructured text data, we assume that the number of unique words across all recorded documents is $V$ (vocabulary size). By assigning a unique id to each word, we represent any document of an arbitrary number of words $N_d$ with a vector of constant size $V$, i.e., $(\text{cnt}_{w_1}, \text{cnt}_{w_2}..., \text{cnt}_{w_v}) \in \mathbb{Z}_{\ge 0}^V$. This vector is known as the Bag of Words (BOW) representation, where each element counts the number of appearances of the corresponding word in a document. With BOW, we are able to vectorize the entire corpus of documents over $T$ time points with a structured tensor of size $T \times N \times V$, denoted as $\mathcal{D}=[\mathcal{D}_1,\mathcal{D}_2,...,\mathcal{D}_T]$. Now suppose at each time point $t$, there exist $K$ topics within documents $\mathcal{D}_t$, and each document is a mixture of these topics. To capture the proportions of each topic found in a document, we create a document-topic matrix $\Theta_{t,N \times K}$. Furthermore, we define a word-topic matrix $\beta_{t, V \times K}$ to represent the relevance of each word to the $K$ topics. The primary goal of topic modeling is to find the matrices $\Theta_{t,N \times K}$ and $\beta_{t,V \times K}$ which can best represent the documents $\mathcal{D}_t$ at time stage $t$ (i.e., $\mathcal{D}_{t, N \times V} \approx \Theta_{t,N \times K} \cdot \beta_{t,V \times K}^\intercal$).

\subsection{Multistage dynamic LDA}

Traditional non-negative matrix factorization (NMF) methods \citep{paatero1994positive, lee1999learning, li2021topic} view this problem as a matrix decomposition task, where $\Theta_{t, N \times K}$ and $\beta_{t, V \times K}$ are treated as two lower-rank matrices with $1 \le K << \min(N,V)$. By minimizing the distance (e.g., Frobenius norm) between $\mathcal{D}_{t, N \times V}$ and $\Theta_{t, N \times K} \cdot \beta_{t, V \times K}^\intercal$, the NMF methods estimate the representative topics and proportions. However, NMF is not a probabilistic method and therefore is limited in providing valid statistical inferences to the estimated topics. LDAs, on the other hand, provide probabilistic distributions on $\Theta_{t, N \times K}$ and $\beta_{t, V \times K}$ and incorporate them into the generative process of a document. For example, in the multistage dynamic LDA \citep{blei2006dynamic} generative process \ref{algo:D-LDA} defined as below, $\delta^2$, $\xi^2$, and $a^2$ are the variance priors for the latent topics and $\sigma(x_{1:V})_j=\frac{\exp{x_j}}{\sum_{j=1}^V \exp{x_j}}$ is a softmax function constraining the unbounded multivariate normal mean parameters to a valid multinomial probability simplex.

\makeatletter
\renewcommand\ALC@linenodelimiter{.} 
\makeatother
\floatname{algorithm}{Generative Process}
\begin{algorithm}[ht!]
\normalsize\setstretch{1.03}
  \begin{algorithmic}[1]
    \STATE{Draw word-topics distribution: $\beta_t\;|\; \beta_{t-1} \sim \mathcal{N}(\beta_{t-1}, \delta^2 I)$.}
    \STATE{Draw document-topics proportion: $\alpha_t\;|\; \alpha_{t-1} \sim \mathcal{N}(\alpha_{t-1}, \xi^2 I)$}
    \STATE{For each document $d$ at time $t$:}
    \begin{xalgsubstates}
      \STATE{Draw topics proportion: $\eta_{t,d,1:K} \sim \mathcal{N}(\alpha_t, a^2 I)$, \quad $\theta_{t,d,1:K} = \sigma(\eta_{t,d,1:K})$}
      \STATE{For each word at position $j$:}
      \begin{xalgsubsubstates}
            \STATE{Draw a topic for this word: $Z_{t,d,j} \sim \text{Mult}(\theta_{t,d,1:K})$}
            \STATE{Draw a word: $W_{t,d,j} \sim \text{Mult}(\sigma(\beta_{t,1:V, Z_{t,d,j}}))$.}
    \end{xalgsubsubstates}
    \end{xalgsubstates}
  \end{algorithmic}
\caption{Dynamic LDA}
\label{algo:D-LDA}
\end{algorithm}

Based on the Generative Process \ref{algo:D-LDA} formulation, we identify the following three limitations of the dynamic LDA to our application. First, the word-topics distribution $\beta_t$, which characterizes the semantic meaning of each topic, changes at each time stage. Commonly, the meaning of topics can be summarized by a broader theme, and the magnitude of change can be controlled via the variance prior $\delta^2$. However, finding a suitable prior requires manual inspections of the word-clouds at each time stage to interpret and confirm that the topics are under the same theme. As the number of stages and topics increases, the cumbersome nature of this process will inevitably impose a significant challenge in topic interpretations and representations. Secondly, the topic proportion $\theta_{t,d,1:K}$ is not directly correlated to its precursor $\theta_{t-1, d, 1:K}$ which describes the topic proportions of the same document $d$ from the previous time stage. Instead, all $\{\theta_{t,d,1:K}\}_{d=1}^N$ share a common corpus-level hyper-parameter $\alpha_t$, making it difficult for the current process to account for document-level heterogeneity. Lastly, the generative process above does not incorporate subjects' metadata and group information. Despite that the follow-up supervised LDA \citep{mcauliffe2007supervised} accounts for the group information by adding an extra classification task to the end of the generative process (i.e., $Y_{d} | Z_{t,d,.}, \phi, \gamma \sim \text{logit-Normal}(\phi^\intercal \cdot Z_{t,d,.}, \gamma^2))$, the performance of topic separation relies heavily on the prior parameters of the classifier, $\phi$ and $\gamma$, and still, it is challenging to distinguish whether a strong classification result is due to the quality of the classifier or the actual presence of interpretable and separable underlying topic distributions.




\section{Methodology} \label{sec:Methodology}
To address the challenges listed above, we propose the Heterogeneous Classifier-Free Dynamic Topic Model (HCF-DTM) with consistent topic interpretations and dynamic incorporation of documents' time-varying meta-data. In the following, we describe the generative process and a detailed variational inference procedure of our proposed model. In addition, we introduce a novel classifier-free approach to directly maximize the group-wise heterogeneity among topics with a notion of counterfactual topic distributions.

\subsection{Heterogeneous DTM with consistent topics}
The main objective of our proposed model is to identify a set of time-consistent topics while accounting for the evolving heterogeneity within documents over a specified longitudinal timeframe. This subsection presents the detailed specification of HCF-DTM in Generative Process \ref{algo: HDTM} and illustrates with the corresponding graphical model in Figure \ref{fig:Graphical-model-HDTM}.

\makeatletter
\renewcommand\ALC@linenodelimiter{.} 
\makeatother
\floatname{algorithm}{Generative Process}
\begin{algorithm}[hbt]
\normalsize\setstretch{1.05}
  \begin{algorithmic}[1]
    \STATE{Draw time-consistent word-topics distribution: $\beta \sim \mathcal{N}(\beta_0, \delta^2 I)$}
    \STATE{For each document $d$ at time $t$:}
    \begin{xalgsubstates}
    \STATE{Draw topic proportions: \\ \qquad  $\eta_{t,d,1:K} \;|\; \eta_{t-1,d,1:K}, X_{d,t}, Y_d, \phi_t \sim \mathcal{N}(f_{t, \phi_t}(\eta_{t-1,d,1:K}, X_{d,t}, Y_d), a^2 I)$,  \\ \qquad $\theta_{t,d,1:K} = \sigma(\eta_{t,d,1:K})$}
    \STATE{For each word at position $j$:}
      \begin{xalgsubsubstates}
            \STATE{Draw the word: $W_{t,d,j} \sim \text{Mult}(\theta_{t,d,1:K} \cdot \sigma(\beta)^\intercal)$}
    \end{xalgsubsubstates}
    \end{xalgsubstates}
  \end{algorithmic}
\caption{Heterogeneous DTM with consistent topics}
\label{algo: HDTM}
\end{algorithm}

Compared to the Generative Process \ref{algo:D-LDA} of previous DTMs, our method differentiates itself in three significant ways. First, instead of allowing each time point to have its own word-topic matrices $\{\beta_t\}_{t=1}^T$, we remove the time dependency and assume the existence of a single word-topic matrix, $\beta$, which is shared by all documents and held constant regardless of time or group memberships. Specifically, our approach achieves this by parameterizing the generative distribution of $\beta$ with a single time-invariant mean prior $\beta_0$ and sampling it once at the beginning of the process. In addition, we keep the generative variance prior, $\delta^2 I$, diagonal to ensure orthogonality and maximize disparities among topics. As a result, the topic matrix $\beta$ provides a consistent and distinguishable interpretation for each topic at every time point. This is more desirable as it helps uncover long-lasting topics and guarantees consistent themes without the need to navigate through a multitude of topics $\{\beta_t\}_{t=1}^T$ over time.

Apart from providing consistent topics, we augment documents' metadata into the generative process and establish individualized trajectories of the longitudinal topic proportions. Notably, since the topics $\beta$ do not vary with time in our design, the topic proportions $\{\theta_{t,d,1:K}\}_{t=1}^T$ encapsulate all temporal correlations within each document $d$. In particular, we utilize a function $f_t: \mathbb{R}^{K} \times \mathcal{X}_t \times \mathcal{Y} \mapsto \mathbb{R}^K$ parameterized by prior $\phi_t$ to capture the mean trend of the topic proportions. At every time stage $t$ for a document $d$, the mean function $f_t$ takes three inputs: the past topic proportions $\theta_{t-1,d,1:K}$, subject's longitudinal covariates $X_{d,t}$, and group membership $Y_d$ to provide predictions for the expected topic proportions. Consequently, the topic proportions $\Theta_t$ incorporate the dynamic heterogeneity among subjects, and are able to control for confounders which may affect their distributions. More importantly, we let the topic proportions $\Theta_t$ directly depend on $\Theta_{t-1}$ from the previous stage, rather than a corpus-level hyperparameter $\alpha_t$ used in dynamic LDA. This enables us to better track the individual progression of topic proportions on the document/subject level.

\begin{figure}[bt!]
    \centering
    \includegraphics[scale=0.5]{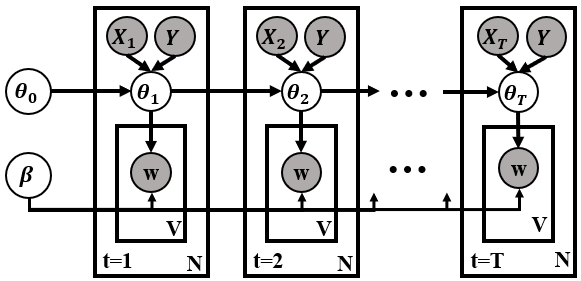}
    \caption{A graphical model of the heterogeneous DTM in a balanced multistage longitudinal setting, where $\theta_0$ is a prior for the initial topic proportions. Topics $\beta$'s are held constant and provided at every time stage. Both $\theta$ and $\beta$ are the latent variables, whereas $W$, $X$ and $Y$ are observed variables.}
    \label{fig:Graphical-model-HDTM}
\end{figure}

Furthermore, our heterogeneous DTM generative process simplifies the optimization procedure. By estimating the time-consistent latent topics simultaneously across different time periods, we reduce the number of model parameters and decrease the inference complexity level. Additionally, inspired by the work of \cite{miao2016neural}, we collapse the latent random variable $Z$, which determines the topic assignment for each word, from the word sampling distribution based on the following results,
 \begin{equation}
     p(w_t | \theta_t, \beta) = \sum_{z=1}^K p(z | \theta_t) \cdot p(w_t | z, \beta) = \sum_{z=1}^K \theta_{t,z} \cdot \sigma(\beta_{z}) = (\theta_t \cdot \sigma(\beta)^\intercal)_w.
 \end{equation}
Unlike the two-layer hierarchical dependency outlined in Generative Process \ref{algo:D-LDA}, we model the word generative process jointly with the topics $\beta$ and their proportions $\Theta_t$, as in $W_t \sim \text{Mult}(1, \theta_t \cdot \sigma(\beta)^\intercal)$. This collapsing approach has the advantage of eliminating the need to sample the discrete random variable $Z$ during the optimization process, which is considered to be more challenging. 

Finally, we aim to estimate the posterior of the latent topic proportions via the Bayes rule according to the defined prior distributions of the latent topics and the likelihood model of the documents in Generative Process \ref{algo: HDTM}. To simplify notations, we use $\theta_t$, $X_t$, and $Y$ as shorthand to represent $\theta_{t,d,1:K}$, $X_{d,t}$, and $Y_d$ respectively for each document $d$ from this point onward.
\begin{equation}
p(\theta_t \;|\; \theta_{1:(t-1)}, \bm{w}_t, X_t, Y, \beta) = \frac{p(\theta_t | \theta_{1:(t-1)},X_t, Y) \cdot p(\bm{w}_t|\theta_{t}, \beta)}{\int_{\theta_t} p(\bm{w}_t | \theta_t, \beta) \cdot p(\theta_t | \theta_{1:(t-1)}, X_t, Y) d\theta_t}.
\label{eq:posterior}
\end{equation}
With the posterior in Equation \eqref{eq:posterior}, we can infer the latent topic proportions $\theta_t$ at any time $t$ from the observed document data. However, obtaining this posterior distribution is extremely difficult. This is because the marginalized data distribution $p(\bm{w}_t | \theta_{1:(t-1)}, \beta)$ in the denominator is intractable due to the unbounded space of the latent variable $\theta_{t}$ during integration. To address this challenge, we leverage the variational Bayes and propose a novel solution to extend the inference procedure of heterogeneous topic models to a multistage longitudinal setting. A detailed explanation of the longitudinal variational Bayes approach under our generative process specifications is provided in the next subsection.

\subsection{Longitudinal variational inference}
\label{sec:method-variational}

Variational inference is a powerful technique for estimating the intractable posterior of interest $P$ with a parametrizable distribution $Q$, such as a Gaussian distribution. To ensure $Q$ is a valid approximation to $P$, variational methods minimize the Kullback-Leibler (KL)-divergence between the two distributions. This transforms the inference problem into an efficient optimization task, as future posterior inference can be conducted directly from $Q$. In our specific case, we aim to find the optimal set of parameters $\psi^*_{1:T}$ over the $T$ number of stages so that the parameterized approximating distribution $Q^* \doteq Q_{\psi^*_{1:T}}$ is as close as possible to the true topic posterior distribution $P$, i.e.,
\begin{align}
\psi^*_{1:T}&= \argmin_{\psi_{1:T} \in \Psi^{|T|}}\mathbb{KL}\left(Q_{\psi_{1:T}}(\theta_{1:T}|\bm{w}_{1:T},X_{1:T},Y) \;||\; P(\theta_{1:T}|\bm{w}_{1:T},X_{1:T},Y,\beta)\right) \label{eq: KL} \\[-0.5em]
&= \argmin_{\psi_{1:T} \in \Psi^{|T|}}\underbrace{\mathbb{KL} \left(Q_{\psi_{1:T}}(\theta_{1:T}|\bm{w}_{1:T},X_{1:T},Y) \;||\; p(\theta_{1:T}|X_{1:T},Y)\right)}_{\text{\textbf{Approximation error}: variational distribution $Q$ and geneartive prior $p$}} + \label{eq: KL-single} \\[-0.5em]
& \quad \qquad \qquad \underbrace{-\mathbb {E}_{\theta_{1:T} \sim Q_{\psi_{1:T}}} \left(\log P(\bm{w}_{1:T} | \theta_{1:T}, \beta)\right)}_{\text{\textbf{Reconstruction error}: model likelihood from variational topics}} \notag
\end{align}

In a single-stage setting, optimizing $\psi^*$ is straightforward through minimizing the negative evidence lower bound (ELBO) in Equation \eqref{eq: KL-single}. However, the inference is more difficult to define when there are multiple stages involved since additional temporal dependencies arise among the topic proportions $\theta_{1:T}$. Moreover, as $\theta_{1:T}$ are latent variables and not directly observable, we sample $\theta_t$ along with its temporal predecessors $\theta_{1:(t-1)}$ at each time stage, which increases the computational complexity exponentially with the number of stages. Therefore, it is more desirable to derive a variational objective which can efficiently break the long dependent sequence into smaller sets of stages while preserving temporal correlations in a multistage longitudinal setting. Before presenting our longitudinal ELBO, we first introduce the following regularity assumptions:
\vspace{-0.5em}
\begin{assumption}
(Markov property of latent topic proportions): Topic proportions at the next stage only depend on the current stage, not on any past stages: $\theta_{t+1} \ind \theta_1,...,\theta_{t-1} | \theta_t$.
\label{assumption:markov}
\end{assumption}
\vspace{-1.5em}
 \begin{assumption}
(Independence generation of documents): The distribution of every word from documents at time $t$ only depends on the time-consistent topics and current-stage topic proportions: $\bm{w}_t \ind \left(\bm{w}_{1:(t-1)}, \theta_{1:(t-1)}, X_{1:t},Y\right) \;|\; \theta_t, \beta$.
\label{assumption: independence}
\end{assumption}

\vspace{-0.25em}
Assumption \ref{assumption:markov} relaxes the temporal dependencies between topic proportions which are more than two stages apart since topics of a document are majorly influenced by the ones from the previous stage. In other words, current-stage topic proportions capture all the past information required for the proportions at the next stage. Assumption \ref{assumption: independence} states that it is sufficient to generate all words in a document given the current-stage topic proportions and topics, as subjects' heterogeneity and potential confounders have been considered in $\theta_t$ by the mean trend function $f_t$. Under these two assumptions and the variational distribution factorization $Q_{\psi_{1:T}}(\theta_{1:T}, \bm{w}_{1:T}, X_{1:T}, Y) = \prod_{t=1}^T q_{\psi_t}(\theta_t, \bm{w}_{t}, X_{t}, Y)$, we present the following ELBO as the variational objective under our multistage heterogeneous DTM Generative Process \ref{algo: HDTM}.
\vspace{-0.3em}
\begin{proposition} \label{prop:LELBO}
Under Assumptions \ref{assumption:markov}- \ref{assumption: independence}, the evidence lower bound (ELBO) for a single document generated by Process \ref{algo: HDTM} over a finite $T$-stage longitudinal time horizon is 
\vspace{-0.2em}
\begin{align}
\hspace{3em} & \log P(\bm{w}_{1:T} | X_{1:T}, Y, \beta) \ge \label{eq: ELBO-log-likelihood}\raisetag{-2.5em} \\[-0.15em]
& \hspace{1.5em}\underbrace{-\mathbb{KL}(q_{\psi_1}(\theta_1 | \bm{w}_1, X_1, Y) \;||\; p(\theta_1 | \theta_0, X_1, Y)) +  \mathop{\mathbb{E}}_{\theta_1 \sim q_{\psi_1}} (\log P(\bm{w}_1| \theta_1, \beta))}_{\text{Single-stage ELBO for the first stage}} + 
\label{eq: L-ELBO} \raisetag{-2.5em} \\[-0.5em]
& \hspace{-2.7em} \underbrace{\sum_{t=2}^T \left\{-\mathop{\mathbb{E}}_{\theta_{t-1} \sim q_{\psi_{t-1}}} \left( \mathbb{KL}[q_{\psi_t}(\theta_t | \bm{w}_t, X_t, Y) \;||\; p(\theta_t | \theta_{t-1}, X_t, Y)]\right) + \mathop{\mathbb{E}}_{\theta_t \sim q_{\psi_t}} (\log P(\bm{w}_t| \theta_t, \beta))\right\}}_{\text{Temporal-dependent ELBO for the follow-up stages}}  \notag
\end{align}
\end{proposition}
Equation \eqref{eq: L-ELBO} provides a lower bound for the log-likelihood of a document generated according to Generative Process \ref{algo: HDTM}. Maximizing this lower bound is equivalent to minimizing the negative ELBO in Equation \eqref{eq: KL}, where both optimizations result in the same set of optimal variational parameters $\psi_{1:T}^*$. Under the regularity assumptions, the presented longitudinal ELBO can be further decomposed into two components: the standard single-stage ELBO for the first stage, and the temporal-correlated ELBO for the follow-up stages. Specifically, the latter divides all future stages into adjacent-stage pairs based on Assumption \ref{assumption:markov}. By calculating the KL-divergence of current-stage topic proportions using the proportions sampled from the variational distribution at the previous stage, the approximation error across all time stages can be divided into adjacent stages. This leads to a more efficient inference process as the adjacent-stage dependency eliminates the need for sampling proportions of the entire time sequence at each stage. Additionally, based on Assumption \ref{assumption: independence}, the reconstruction error can be calculated individually at each time stage even without the pairwise dependency, which further enhances the efficiency of the variational learning process. 

In summary, the longitudinal ELBO shown in Equation \eqref{eq: L-ELBO} extends variational inference for DTMs to a multistage longitudinal setting. It provides an efficient optimization objective to approximate the true posterior of the topic proportions according to our proposed heterogeneous DTMs generative process. However, as the posterior of topic proportions $\theta_t$ depends on the topics $\beta$ shown in Equation \eqref{eq:posterior}, the proportion of a topic can vary significantly based on the learned latent topics. For instance, the differences in the topic proportion distributions between non-SGM and SGM youth may be more pronounced if the topic is related to mental health rather than physiological measures. Depending on the provided latent topics, the estimated topic proportions may not optimally present the heterogeneity from the groups. Thus, to better understand group-wise differences, we propose a classifier-free approach via distributional distances to learn the latent topics.

\subsection{Group-wise topic separation} 
In this subsection, our goal is not only to identify the most representative topics under Generative Process \ref{algo: HDTM}, but more importantly, to maximize the heterogeneity between groups in their respective proportions. To achieve this, we introduce the counterfactual topic distributions, which describe the topic proportion distributions for subjects who have the same documents $w_{1:T}$ and measurements $X_{1:T}$ but belong to different groups $Y$. The value function of HCF-DTM is presented as follows,
\begin{align}
\label{eq: KL-group}
V^{HCF}(\psi_{1:T}) &= \underbrace{-\mathbb{KL}\left(Q_{\psi_{1:T}}(\theta_{1:T}|\bm{w}_{1:T},X_{1:T},Y) \;||\; P(\theta_{1:T}|\bm{w}_{1:T},X_{1:T},Y,\beta)\right)}_{\text{Variational inference objective \eqref{eq: L-ELBO}}} + \\
&\quad \underbrace{\text{Dist} \left( \left\{Q_{\psi_{1:T}}(\theta_{1:T}|\bm{w}_{1:T},X_{1:T},Y=y) \right\}_{y\in \mathcal{Y}} \right)}_{\text{Group-wise topic proportion distribution distance}}. \notag 
\end{align}
where information radius \citep{sibson1969information}, average divergence score \citep{sgarro1981informational}, or mutual information (MI) in a two-group setting, can be selected as the distance metrics. To represent the counterfactual distribution, we use the constructed variational $Q$ by changing its group membership covariate $Y$ under the \textit{no unmeasured confounding} assumption \citep{robins1986new}. The main objective is to find the set of variational parameters which maximizes the value function, i.e., $\psi^*_{1:T}= \argmax_{\psi_{1:T} \in \psi^{|T|}} V^{HCF}(\psi_{1:T})$. In particular, the first term encourages closer approximation of variational distribution $Q$ to the underlying posterior $P$, meanwhile, the second term aims to increase the distances among the marginal topic proportion distributions, $P(\theta_{1:T} | Y)$, for each group membership Y.

 The proposed distance maximizing approach has the following two advantages. First, it provides explicit guidance to learn the latent topics that have the largest group-wise difference in their corresponding proportions. Due to the co-dependency between the latent topics and their proportions, the maximum amount of group heterogeneity which can be captured by the topic proportions depends on the provided latent topics. To illustrate this, consider the example of a two-group scenario shown in Figure \ref{fig:topic-distributions}. Compared to latent topics $\beta_1$ and $\beta_2$, topics $\Tilde{\beta_1}$ and $\Tilde{\beta_2}$ lead to a larger averaged distance between the topic proportion marginal distributions, and noticeably can better disentangle the group identities. To make $\Tilde{\beta_1}$ and $\Tilde{\beta_2}$ more likely to be identified in practice, the proposed second term in value function \eqref{eq: KL-group} explicitly maximizes the inter-distributional distances during the optimization procedure. As a result, latent topics that have larger distances in their proportion distributions are favored. In addition, regularized by the first variational term, the optimized latent topics are also pertained to the Generative Process \ref{algo: HDTM} and aimed to obtain the best representation of the documents.

\begin{figure}
    \centering
    \includegraphics[width=0.8\textwidth]{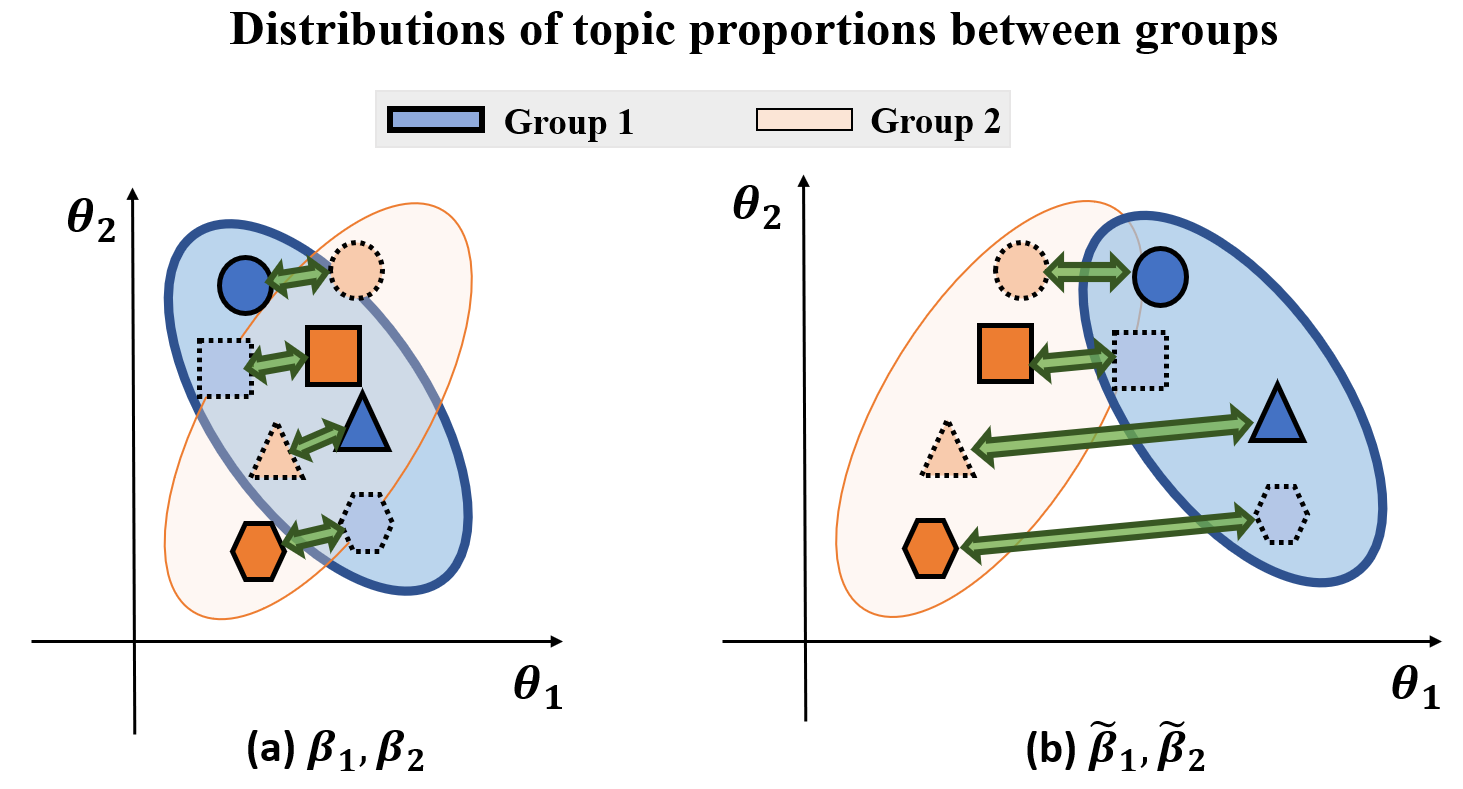}
    \caption{Comparison of topic proportion distributions under two sets of latent topics: $\beta_1$, $\beta_2$ versus $\Tilde{\beta}_1$, $\Tilde{\beta}_2$. The shapes displayed on a two-dimensional plane correspond to optimal topic proportions for each latent topic specification. Solid outlines indicate proportions from the true group identity, while dashed outlines represent counterfactual proportions.}
    \label{fig:topic-distributions}
\end{figure}

Secondly, we maximize the group disparity in topic proportions via a classifier-free manner. Unlike the previous supervised topic models \citep{mcauliffe2007supervised, card2017neural} which rely on a parameterized classifier to distinguish group identities, we evaluate group-wise topic proportion similarity by calculating the closed-form distance metrics directly from their distributions. At the group-wise distribution level, we optimize the variational free-parameters to increase the group-wise disparity. As a result, this approach not only relieves the need to posit and estimate extra parameters for the classifier, but also protects the latent topics from any semantic distortion resulting from potential projections applied by the classifier. Moreover, by leveraging the counterfactual topic proportions, we no longer require separating topic proportion distribution for each group, and therefore can increase the sample efficiency and reduce the label imbalance level.

To conclude, we propose the HCF-DTM which extends the VAE-NTM to a longitudinal setting and provides an efficient variational objective by segmenting the long dependent sequence into adjacent stages. To preserve the semantic meaning of the topics, HCF-DTM finds a number of time-consistent latent topics and maximizes group-wise heterogeneity via inter-distributional distances. In the next section, we present the implementation details of the proposed method.

\section{Implementation and Algorithm} 
\label{sec:Implementation}

The optimization procedure focuses on two sets of parameters of interest: the generative parameters $\Omega$ and the variational parameters $\Psi$. Precisely, $\Omega$ specifies the Generative Process \ref{algo: HDTM}, including the time-invariant topics $\beta$ and parameters $\phi_{1:T}$ of the mean trend functions $f_{1:T}$; whereas $\Psi$ parameterizes the mean functions $g^\mu_{1:T}$ and variance functions $g^\sigma_{1:T}$ of the variational distribution $Q$. Our goal is to find the optimal set of $\Omega$ and $\Psi$ which maximize the value function presented in Equation \eqref{eq: KL-group}.

The implementation detail is summarized in Algorithm \ref{algo}. It starts with learning the variational mean and variance, $\mu^q$ and $\sigma^q$, and then estimates the proposed value function which can be decomposed into three major components: the KL divergence between the variational and generative priors ($\mu^0$ and $\sigma^0$), the log-likelihood of the documents generated from the latent topics, and the distributional distances between the counterfactual topic proportions. For demonstration purpose, we consider a two-group setting and adopt the MI as the distance metrics. Calculating the log-likelihood and the group-wise MI is straightforward and can be conducted at each time stage. However, obtaining the longitudinal KL divergence is more challenging due to the temporal dependencies present in the follow-up stages. Though Equation \eqref{eq: L-ELBO} breaks down the long temporal dependency to adjacent stages, it is still required to compute an expectation of KL divergence over the intractable space of all latent topic proportions at the previous stage, as shown in  $\mathop{\mathbb{E}}_{\theta_{t-1} \sim q_{\psi_{t-1}}} \left( \mathbb{KL}[q_{\psi_t}(\theta_t | \bm{w}_t, X_t, Y) \;||\; p(\theta_t | \theta_{t-1}, X_t, Y)]\right)$. 

\makeatletter
\renewcommand\ALC@linenodelimiter{.} 
\makeatother
\setcounter{algorithm}{0}
\floatname{algorithm}{Algorithm}
\begin{algorithm}[hbt]
\normalsize\setstretch{1}
  \begin{algorithmic}[1]
    \STATE{\textbf{Initialize} generative parameters $\Omega=\{\beta, \phi_{1:T}\}$ and variational parameters $\Psi=\{\psi^\mu_{1:T}, \psi^\sigma_{1:T}\}$; variational mean functions $g^\mu_{1:T}$ and variance functions $g^\sigma_{1:T}$; topic proportion mean prior $\mu^0_1$ and variance prior $\sigma^0_1$; KL sample number $M$; learning rate $\lambda$; maximum iterations $T_{max}$; and a stopping error criterion $\epsilon_s$.}
    \STATE{\textbf{Input} all observed documents, metadata, and group $\{W_{i, 1:T}, X_{i, 1:T}, Y_i\}_{i=1}^N$.}
    \STATE{\textbf{For} $k \gets 1$ to $T_{max}$ \textbf{do}}
    \STATE{\quad Compute variational $\mu^q_t = g^{\mu}_t(W_t, X_t, Y)$ and std.err $\sigma^q_t = g^{\sigma}_t(W_t, X_t, Y)$}
    \STATE{\quad \textbf{For} $j \gets 1$ to $M$ \textbf{do}}
    \STATE{\quad \quad Sample Gaussian errors $\epsilon_{j,t} \sim N(0, 1)$} and reparametrize $\eta^q_{t,j} = \mu^q_t + \epsilon_{j,t} \cdot \sigma^q_t$
    \STATE{\quad \quad Compute prior mean $\mu^0_{1,j} = f_1(\mu^0_1, X_1, Y)$, and $\mu^0_{t,j} = f_t(\eta^q_{t-1,j}, X_t, Y)$}
    \STATE{\quad Compute counterfactual $\Tilde{\mu}^q_t = g^{\mu}_t(W_t, X_t, 1-Y)$ and $\Tilde{\sigma}^q_t = g^{\sigma}_t(W_t, X_t, 1-Y)$}
    \STATE{\quad Compute gradient $\nabla$ of Equation \eqref{eq:loss-func} w.r.t. $\Omega$ and $\Psi$}.
    \STATE{\quad Update $(\Omega, \Psi)^k \gets (\Omega, \Psi)^{k-1} - \lambda \cdot \nabla $}
    \STATE{\quad Stop if $|\mathcal{L}^k - \mathcal{L}^{k-1}| \le \epsilon$}
    \STATE{\textbf{Return} estimated topics $\beta$ and variational functions $\{g^{\mu}_{t}\}_{t=1}^T$ and $\{g^{\sigma}_t\}_{t=1}^T$ }
  \end{algorithmic}
\caption{Heterogeneous Classifier-Free Dynamic Topic Model}
\label{algo}
\end{algorithm}

To address this challenge, we apply the re-parametrization schemes $M$ times to have a random sample of $\theta_{t-1}$ with size $M$ drawn from the variational distribution. The sampled $\theta_{t-1}$ are later used to generate the prior distribution of $\theta_t$ based on the mean function $f_t$, which enables us to obtain an empirical estimate of the expected KL divergence at time $t$. Together with two other loss terms, the final objective function can be derived as follows,
\begin{align}
\label{eq:loss-func}
\quad &\left(\Omega^*, \Psi^*\right) = \argmin_{\Omega, \Psi} \frac{1}{N \cdot M}\sum_{i=1}^N \sum_{t=1}^T \sum_{j=1}^M  \underbrace{\left\{\log\left(\frac{\sigma^0}{\sigma^q_{i,t}}\right) + \frac{(\sigma^q_{i,t})^2 + (\mu^q_{i,t} - \mu^0_{i, j, t})^2)}{2\cdot (\sigma^0)^2} - \frac{1}{2}\right\}}_{\text{Gaussian KL divergence term}}- \\[-0.5em]
& \qquad \quad \underbrace{\vphantom{\log\left(\frac{\sigma^q}{\sigma^q_{i,t}}\right)} W_{i,t} \cdot \log\left\{\sigma(\eta^q_{i,t,j})\cdot \sigma(\beta)\right\}}_{\text{Multinomial likelihood term}} - \underbrace{\frac{1}{2}\left\{ \log\left(\frac{\sigma^q_{i,t} + \Tilde{\sigma}^q_{i,t}}{4 \cdot \sigma^q_{i,t} \cdot \Tilde{\sigma}^q_{i,t}}\right) + \frac{(\mu^q_{i,t} - \Tilde{\mu}^q_{i,t})^2}{\sigma^q_{i,t} + \Tilde{\sigma}^q_{i,t}} + \frac{1}{2}\right\}}_{\text{Mutual Information term}}, \notag
\end{align}
where $\epsilon_{t,j} \overset{iid}{\sim} \mathcal{N}(0,1)$, $\eta^q_{i,t,j} = \mu^q_{i,t} + \epsilon_{t,j} \cdot \sigma^q_{i,t}$ is the unnormalized variational topic proportion sample, and $\mu^0_{i,j,t} = f_t(\eta^q_{i,t-1, j}, X_{i,t}, Y_i)$ is the resulting prior proportion mean. Parameters of interest are updated jointly via stochastic gradient descent \citep{robbins1951stochastic}. For further reference, we present a graphical illustration of our model architecture and leave more detailed optimization choices, such as hyperparameter tuning, in Supplementary Materials \citep{YeSupplemental}.


\section{Simulation} \label{sec:Simulation}

In this section, we present extensive simulation studies to illustrate the longitudinal interpretability and group-wise separation ability of our proposed method. To compare, we consider several BPTM-methods: LDA \citep{blei2003latent}, multistage dynamic LDA (mdLDA) \citep{blei2006dynamic}, and supervised LDA (sLDA) \citep{mcauliffe2007supervised}; as well as NTM-based approaches: prodLDA \citep{srivastava2017autoencoding} and SCHOLAR \citep{card2017neural}. Note that sLDA and SCHOLAR are the two methods augmenting the document metadata, and mdLDA is the only multistage topic model which incorporates the longitudinal dependency of the documents. The main objective of the simulation is to investigate whether the topic models can recover the underlying topic distributions and identify subjects' group memberships under various settings, such as the number of time stages and topics.

The detailed simulation setting is described as follows. First, we consider a cohort of $N$ subjects over $T$ number of stages. At the baseline stage ($t=1$), we assign 20 random features $\{X_{i1p}\}_{p=1}^{20}$ generated from a standard normal distribution $N(0, 1)$ to each subject $i$. During all follow-up stages ($2 \le t \le T$), we let those features be correlated over time according to $X_{i,t,p} = X_{i,t-1,p} + \epsilon$, where $\epsilon \overset{iid}{\sim} N(0, 1)$. The subjects are randomly assigned to one of the two groups $Y \in \{-1, 1\}$ with equal probabilities. After the subjects' metadata $(\{X_{i,t,.}\}_{t=1}^T, Y_i)_{i=1}^N$ are generated, we define the document-level generative priors, i.e., topics $\beta$ and proportions $\Theta$. 

Under each simulation setting, we assume the existence of $K$ number underlying topics $\beta_{V \times K}$, where each topic $\beta_{1:V, k}$ describes a distribution over $V$ number of words. For ease of denotations, the words are represented as numerical integer values ranging from $1$ to $V$. To maximize the differences among topics, we let $\beta_{v, k} \sim \text{logit-Normal}(\mu_k, 1)$, where $\mu_k = \lfloor k \cdot \frac{V}{K} \rfloor$, and keep them constant across the time stages. Then, we design functions $\{f_{t, 1},...,f_{t,K}\}_{t=1}^T$ to generate time-varying topic proportions based on the document-level metadata. Specifically, the topics proportions at time $t$ for subject $i$ can be represented as,  $(\theta_{t,i,1},...,\theta_{t,i,K}) = \sigma\left(f_{t,1}(X_{i,t,.}, Y_i, \theta_{t-1,i,1}),...,f_{t,K}(X_{i,t,.}, Y_i, \theta_{t-1,i,K})\right)$, where each function $f_{t,k}$ 
is parameterized into the following three components: 
\begin{equation}
f_{t,k} = \underbrace{\gamma_{t,k}^m \cdot X_{i,t,.}}_{\text{covariates main effect}} + \underbrace{\gamma_t^\theta \cdot \theta_{t-1,i,k}}_{\text{dependency from previous $\theta$}} + \underbrace{Y_i \cdot \gamma_{t,k}^g \cdot X_{i,t,.}}_{\text{group membership effect}}.
\label{sim:f}
\end{equation}
The regression coefficients $\gamma_{t,k} \overset{iid}{\sim} N(0,1)$ are shared among all subjects at time $t$. In addition, Equation \eqref{sim:f} lists a linear relationship between the document metadata and the topic proportions. To increase the prior function complexity, we also impose non-linearity by transforming metadata based on a specified function basis $\{X, X^2, X^3, \arctan X, \text{sign}(X)\}$. Finally, we generate documents $d_{i, t} \overset{iid}{\sim} \text{Mult}(\text{cnt}_{i,t}, \theta_{t,i,1:K} \cdot \beta_{1:V,1:K}^\intercal)$ with number of words $\text{cnt}_{i, t} \overset{iid}{\sim} \text{Unif}(\{50,..,150\})$.

We train the topic models on $80\%$ of the generated documents and make inferences on a hold-out $1000$ document set. Each topic model provides a posterior of the topic distributions $\hat{\beta}$ and proportions $\hat{\theta}$, and can be evaluated based on the following three criteria: topics recovery rate, dominant topics identification, and group-wise topic separation capability. All evaluation metrics are reported after 50 times of repeated experiments under each parameter setting. For demonstration purpose, we set $N=1000$, $V=200$, $T \in \{3,5,8\}$, $K \in \{3,5,8\}$, and generate topic proportions under non-linear function setting. Additional parameter specifications and ablation studies are provided in Supplemental Materials section 3 \citep{YeSupplemental}. 

\subsection{Topics recovery rate}
To assess the model topic recovery rate, we compute the averaged empirical KL-divergence between the model estimated topic distributions $\hat{\beta}$ and ground-truth simulated topics $\beta$, i.e.,
\begin{equation}
\widehat{\mathbb{KL}}(\hat{\bm{\beta}}_{1:T} || \beta) 
= \frac{1}{K \cdot T} \sum_{t=1}^T \sum_{k=1}^K \sum_{v=1}^V \hat{\beta}_{t,v,k} \cdot \log(\beta_{v,k}/ \hat{\beta}_{t,v,k}),
\end{equation}
where a lower KL divergence indicates a smaller distributional-wise discrepancy and is more desired. However, due to the nature of unsupervised learning, the correspondence between the predicted and ground-true topics is undetermined. To incorporate this, we find the best permutation order $\mathcal{O}^*$ of the predicted topics, which reaches the minimum KL divergence metric, i.e., $\mathcal{O}^*_t = \argmin_{\mathcal{O}_t \in \text{perm}(1,..,K)} \widehat{\mathbb{KL}} (\hat{\beta}_{t, 1:V,\mathcal{O}_t} || \beta_{1:V,1:K})$, as the recovered correspondence at each time stage $t$. Following the above procedure, we summarize the obtained KL metrics under each simulation setting in Table \ref{tab:sim-topics}.

\begin{table}[H]
    \centering
        \def\arraystretch{1.5}%
    \resizebox{0.98\textwidth}{!}{
\begin{tabular}{|l|l|lllllll|}
\hline
  K & T &             LDA &            sLDA &         prodLDA &         SCHOLAR &            mdLDA &                   HCF-DTM &          Imp-rate \\
\hline
3 & 3 &  15.326 (2.363) &  19.319 (0.153) &  22.066 (0.030) &  15.663 (0.783) &   9.633 (5.594) &   \textbf{3.990} (0.429) &  \textbf{58.580}\% \\
  & 5 &  15.243 (2.278) &  19.274 (0.207) &  22.071 (0.024) &  15.298 (0.528) &  11.007 (5.933) &   \textbf{3.805} (0.390) &  \textbf{65.431}\% \\
  & 8 &  15.334 (2.327) &  19.293 (0.152) &  22.072 (0.015) &  15.022 (0.566) &  14.438 (2.683) &   \textbf{3.532} (0.395) &  \textbf{75.537}\% \\ \hline
5 & 3 &  18.976 (1.811) &  23.154 (0.825) &  27.007 (0.016) &  21.843 (0.491) &  13.042 (3.294) &  \textbf{11.772} (1.454) &   \textbf{9.738}\% \\
  & 5 &  19.051 (1.864) &  23.312 (0.711) &  27.002 (0.015) &  21.662 (0.615) &  16.773 (3.001) &   \textbf{9.017} (2.460) &  \textbf{46.241}\% \\
  & 8 &  19.037 (1.868) &  23.066 (0.695) &  27.003 (0.013) &  21.548 (0.596) &  18.708 (2.077) &   \textbf{8.740} (2.877) &  \textbf{53.282}\% \\ \hline
8 & 3 &  23.384 (1.117) &  27.544 (1.077) &  31.004 (0.012) &  27.097 (0.252) &  23.737 (0.606) &  \textbf{17.945} (1.629) &  \textbf{23.259}\% \\
  & 5 &  23.452 (1.051) &  27.577 (0.578) &  30.999 (0.010) &  26.971 (0.239) &  24.940 (0.670) &  \textbf{15.280} (1.939) &  \textbf{34.846}\% \\
  & 8 &  23.592 (1.051) &  27.815 (0.460) &  31.001 (0.009) &  26.763 (0.287) &  26.491 (0.543) &  \textbf{13.068} (2.041) &  \textbf{44.608}\% \\
\hline
\end{tabular}
}
\caption{Empirical KL divergence between the estimated and actual topic word distributions when the generative prior function is non-linear. Standard errors are summarized in the parentheses next to the estimated means. The improvement rate compares HCF-DTM against the best performer of the competing methods.}
\label{tab:sim-topics}
\end{table}

As shown in Table \ref{tab:sim-topics}, HCF-DTM outperforms all other competing methods with respect to the topic distribution KL divergence. In particular, under a fixed number of topics, HCF-DTM obtains a better topics recovery rate and the improvement margin enlarges compared to the best-performing competing method (mdLDA) when the number of time stages increases. This illustrates the advantage of HCF-DTM which assumes a time-constant topic distribution by design. Such an assumption minimizes the number of inference parameters and eases the optimization procedure. Notably, though the single-stage topic model could adaptively search the optimal topics at each time stage, our time-consistent assumption, in alignment with the generative process, allows HCF-DTM to fully utilize documents across all time stages, and thus, makes it powerful to recover underlying consistent topics if there exists any. 

\subsection{Dominant topic identification}
\label{sec: simulation-deominant-topic}
In this subsection, we test model inference performance on topic proportions and evaluations according to the dominant topic alignment accuracy, i.e.
\begin{equation}
    \widehat{\text{ACC}}_{\text{dom-topic}} = \frac{1}{T \cdot N} \sum_{t=1}^T \sum_{i=1}^N  \mathbb{I}\left\{\argmax(\hat{\theta}_{t,i,1:K}) = \argmax(\theta_{t,i,1:K})\right\}.
\end{equation}
A topic is considered dominant when its probabilistic proportion is the largest among all other topics for each document, and therefore, accurately identifying the dominant topics is essential as the number of dominant topics at each time stage could be utilized to track the topic's popularity. After updating the order of the estimated topics with the sequential correspondence from previous simulations, we provide simulation results for $N=1000$ and $K=8$ in Figure \ref{fig:dominant-acc-sim}.

\begin{figure}[hbt!]
    \centering
    \includegraphics[width=0.8\textwidth]{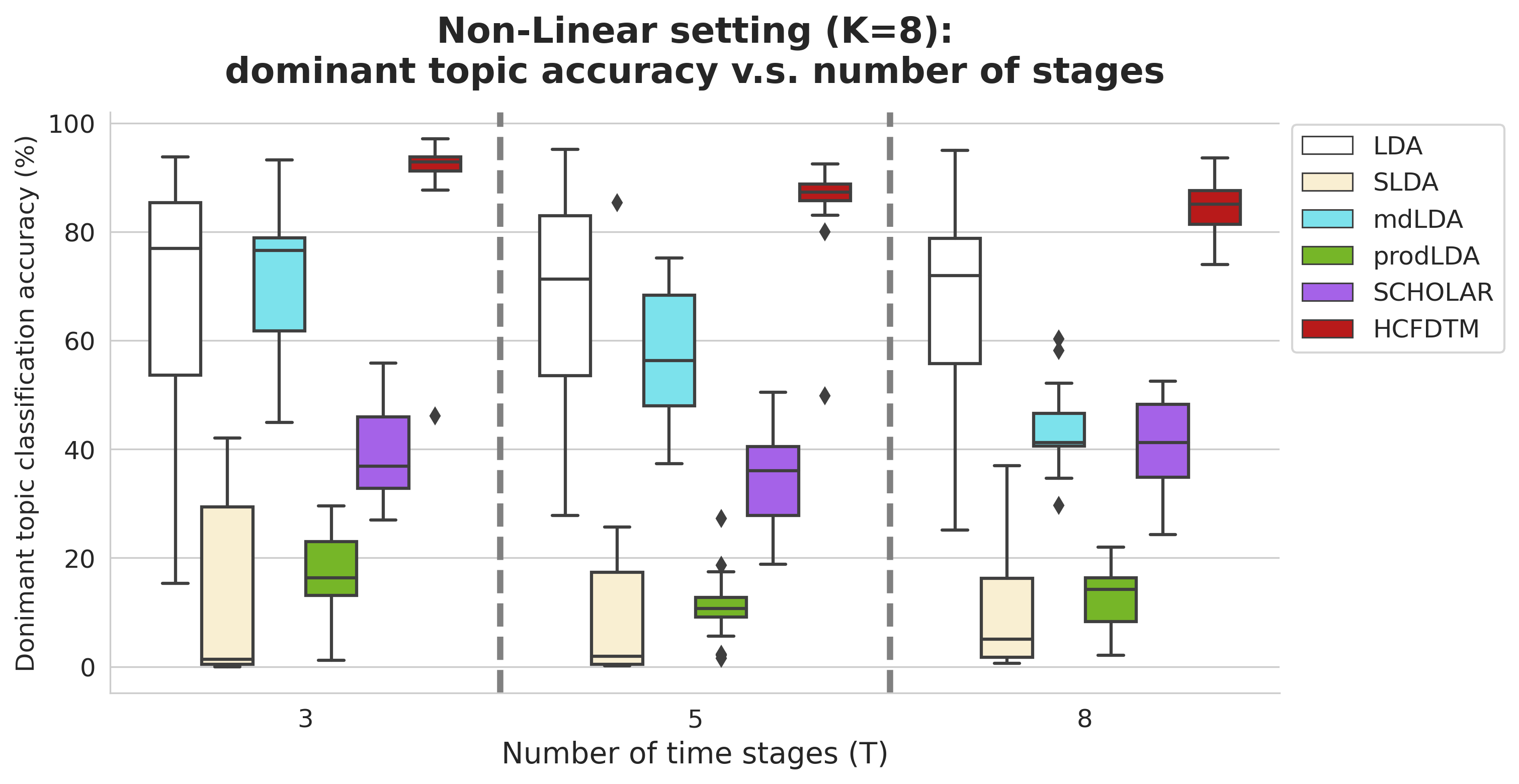}
    \caption{Boxplots of the dominant topic accuracy from estimated topic proportions versus the number of time stages when N=1000, K=8, where the generative prior function is non-linear. The left-to-right order of boxplot methods matches the top-to-bottom order of the legend.}
    \label{fig:dominant-acc-sim}
\end{figure}

Based on Figure \ref{fig:dominant-acc-sim}, we notice that the proposed HCF-DTM method obtains the highest averaged dominant topic identification accuracy, and meanwhile, achieves the smallest standard errors, compared to all the rest competing methods. This result is expected due to the following three reasons. First, compared to the single-stage methods, the proposed HCF-DTM incorporates the dynamic document metadata, which utilizes the topic proportion generative function with the subject-level longitudinal information and is able to control potential confounders. Secondly, instead of letting the topic proportions share a common corpus-level hyper-parameter as mdLDA, our method directly establishes temporal dependency on topic proportions from previous stages so that the information on dominant topics can be propagated into the future stages. Lastly, due to the mutual dependency between the topics and corresponding proportions, the topic proportions generated by HCF-DTM can be enhanced from the previously best-performing estimated topic distributions. 

\subsection{Group-wise topic separation}
\label{sec:simulation-group}
In this simulation, we examine the model capability of distinguishing topic proportions based on the subjects' group membership. In terms of evaluation metrics, we fit a simple logistic regression on the topic proportion posterior to predict subjects' group membership, i.e., $Y_i \sim \text{Logistic}(\hat{\theta}_{t,i,1:K})$ and use the resulting classification accuracy to evaluate the modeling of group-wise topic separation capability. As the classification task is directly applied to the obtained topic proportions, a higher accuracy score indicates more group-wise information having been incorporated within the proportions and thus more desired. According to this criteria, we show the simulation results in Figure \ref{fig:topic-separation-sim}.

\begin{figure}[hbt!]
    \centering
    \includegraphics[width=0.8\textwidth]{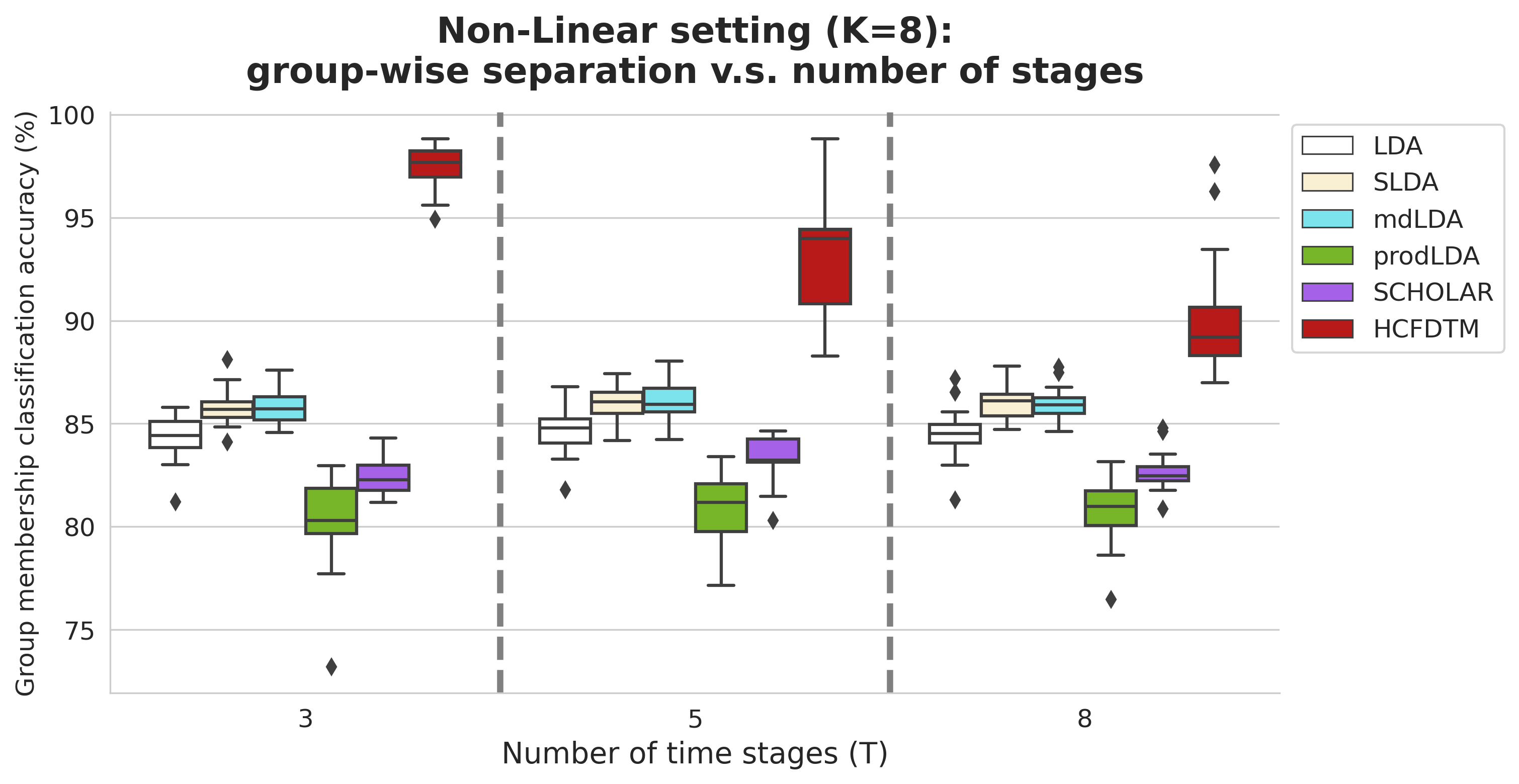}
    \caption{Boxplots of the group-membership accuracy from estimated topic proportions versus the number of time stages when N=1000, K=8, and the generative prior function is non-linear. The left-to-right order of boxplot methods matches the top-to-bottom order of the legend.}
    \label{fig:topic-separation-sim}
\end{figure}

From Figure \ref{fig:topic-separation-sim}, we observe that while the HCF-DTM model performance drops as the number of time stages increases, HCF-DTM still outperforms the rest of competing methods under each simulation setting. In particular, when there are three number of stages ($T=3$), HCF-DTM improves the group-membership accuracy most substantially due to the small number of inference tasks. More importantly, according to the simulation results, we conclude that the proposed classifier-free approach to maximize the topic proportion differences among different groups is more efficient in embedding group-wise heterogeneity to the topic proportions compared to the additional classifiers imposed by the SCHOLAR and sLDA.


\section{Real data analysis}
\label{sec:real-data}

In this section, we apply the proposed topic model HCF-DTM to the clinical notes provided by the pediatric mental health department of the Children's Health of Orange County (CHOC). Over a five-year period from 01/10/2019 to 06/01/2023, CHOC has collected a total number of 25,957 notes from 2,564 inpatient children with an average age of 14.48 years. These notes contain detailed contextual information about a patient such as hospitalization reasons, initial nursing assessment, and conversations/interactions with the clinicians. The goal of this study is to understand the progression of children's mental health from the text data during COVID-19. In particular, we investigate any disparities in the SGM and non-SGM children's mental responses to stress factors associated with the pandemic.

In our analysis, we first consolidate the time frame of interest by selecting three important events based on the timeline of COVID-19 \citep{cdc}, which are 03/15/2020 when the states began to implement shutdown, 05/13/2021 when the vaccines were released and became available to the 12+ teenagers, and 08/01/2021 when the school reopen policy was announced. These three events segment the collection of notes into four distinct time periods and are able to formulate the task into a four-stage longitudinal topic modeling setting. Additionally, as the sexual and gender identity of a patient is undisclosed, we search terms like ``pronouns'' and ``transgender'' from the notes guided by the LGBTQIA+ glossary \citep{lgbtqterm}, which reveals more than one-third ($36\%$) of the children in our dataset potentially belonging to SGM. Remarkably, this number is much larger compared to the $2.2\%$-$4.0\%$ survey estimates of SGM proportion in the United States \citep{gates2014lgbt}. 


Within each specified time period, we pre-process the clinical notes according to standard procedure, including stemming, lemmatization, and removal of stop words. However, there still remain significant challenges. Due to the unstructured and distinctive characteristics of clinical notes, we encounter frequent spelling errors and extensive usage of medical terminology abbreviations, which largely increase the vocabulary size. To maintain a more manageable bag of words, we select terms which can be found in more than one-third of the notes. This filtering process results in a set of 273 instead of 37,890 number of unique words. Together with the tabular patients' demographic information and vital measurements, we apply the proposed HCF-DTM and competing methods to extract consistent themes from the notes to investigate inpatient children's mental status over the four stages of the pandemic.

In our application, all methods are evaluated under two criteria: the accuracy in predicting children's sexual and gender identity based on the inferred topic proportions, and the UCI coherence score \citep{newman2010evaluating} for the extracted topics. Specifically, the first measures the extent of group-wise heterogeneity captured by the topic proportions, and the later score quantifies the semantic interpretability of the topics. To comprehensively analyze model performance, we randomly select 80\% of the notes as a training set and repeat the process 20 times to obtain a Monte-Carlo sample of the model performance scores. The results are summarized in Table \ref{tab:DA}, where higher accuracy and coherence scores indicate better model performance. 

\begin{table}[H]
    \centering
        \def\arraystretch{1.5}%
    \resizebox{0.98\textwidth}{!}{
\begin{tabular}{|l|l|llllll|}
\hline
           & K &              LDA &             sLDA &          prodLDA &          SCHOLAR &             mdLDA &           HCF-DTM \\
\hline
Accuracy & 3 &  62.979\% (1.627) &  65.119\% (2.181) &  62.060\% (1.455) &  65.693\% (1.969) &  62.186\% (1.922) &  \textbf{66.200}\% (1.705) \\
              & 5 &  62.956\% (1.663) &  65.642\% (2.289) &  62.060\% (1.455) &  65.456\% (2.026) &  63.147\% (1.866) &  \textbf{68.017}\% (2.205) \\
              & 8 &  64.512\% (1.450) &  66.592\% (2.231) &  62.060\% (1.455) &  65.759\% (2.086) &  64.749\% (2.425) &  \textbf{70.615}\% (2.906) \\ \hline
Coherence & 3 &   -0.162 (0.005) &   -0.035 (0.038) &   -0.344 (0.051) &    0.096 (0.024) &   -0.202 (0.023) &    \textbf{0.152} (0.037) \\
              & 5 &   -0.091 (0.021) &    0.099 (0.012) &   -0.282 (0.044) &    0.108 (0.110) &   -0.219 (0.026) &    \textbf{0.161} (0.058) \\
              & 8 &   -0.143 (0.007) &    0.096 (0.020) &   -0.246 (0.017) &    0.164 (0.036) &   -0.116 (0.027) &    \textbf{0.181} (0.035) \\
\hline
\end{tabular}
}
\caption{Model performance scores evaluated on the testing set of Monte-Carlo samples. Each test set consists of 1,587 notes from 564 patients. Standard errors are summarized in the parentheses next to the estimated means.}
\label{tab:DA}
\end{table}

According to Table \ref{tab:DA}, our HCF-DTM outperforms all other methods under each specification of the number of topics. Compared to the supervised models (sLDA and SCHOLAR) which rely on additional classifiers, our increased accuracy scores demonstrate that the introduced group-wise topic separation component of our model can be more effective in incorporating the sexual and gender identities of patients into the inferred topic proportions. Meanwhile, our augmentation of longitudinal covariate information further improves the obtained topics by capturing more enriched time-dependent heterogeneity among subjects. Furthermore, the proposed model's generative assumption of time-invariant topics, as opposed to the dynamically changing topics found by mdLDA, significantly enhances the interpretability of the topics, indicated by the higher coherence scores. We summarize that our approach, combining document longitudinal metadata and group membership, efficiently addresses existing heterogeneity and thus attains the best performance in this data application.

Furthermore, with the highest coherence score achieved, we proceed to retrain HCF-DTM using the entire clinical note dataset with three specified topics for demonstration purposes. Detailed justification of this choice can be found in Supplemental Materials section 4.3 \citep{YeSupplemental}. Figure \ref{fig:wordcloud} displays the wordclouds of the extracted topics that persist over time. Upon analyzing the word distributions, we assign the label ``Interaction" to the first topic due to the recurring words such as ``parent", ``school", and ``interaction" which describe the social support related to inpatient children. Similarly, we name the second and third topics ``Positive" and ``Negative" respectively to reflect their positive or negative emotion and feelings. Finally, we identify the dominant topic with the highest topic proportion for each patient and time period. The dominant topic, which reflects a patient's primary mental status, illustrates the evolution of inpatient children's mental health over time. We first visualize the progression of the dominant topic proportions on a two-dimensional latent space extracted by t-SNE \citep{van2008visualizing} in Figure \ref{fig:tSNE}.

\begin{figure}[hbt]
    \centering
    \includegraphics[width=\textwidth]{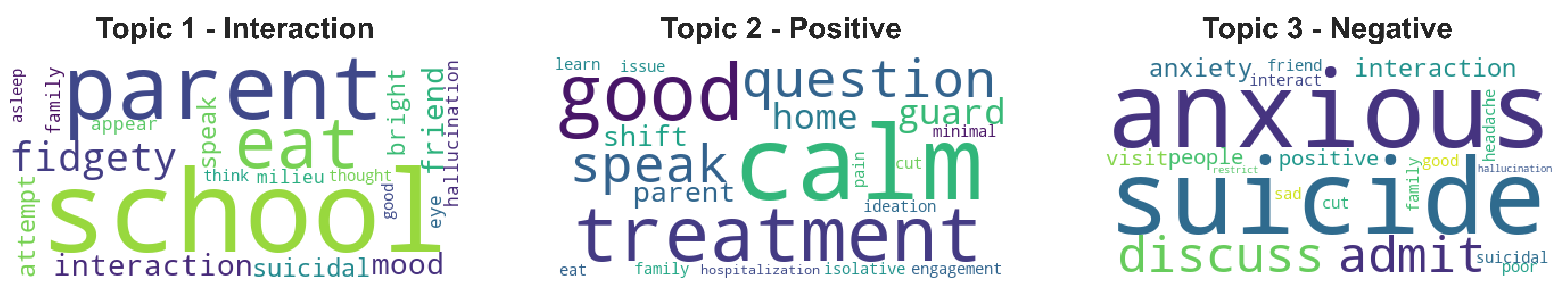}
    \caption{Wordclouds of three topics extracted by HCF-DTM.}
    \label{fig:wordcloud}
\end{figure}

A direct observation of Figure \ref{fig:tSNE} reveals that each topic is well-separated and their relative positions on the two-dimensional latent space change dynamically across the four time periods. Notably, during the implementation of shutdown measures, the inter-distances between three topics decrease while their variations increase, suggesting a significant impact on topic distributions from the shutdown event. Similarly, there is an increase in the ``Negative" topic's variance in the latent space upon the reopening of schools. To further investigate how each topic progresses over the four COVID periods, we compute the percentage changes of the dominant topics and illustrate it in Figure \ref{fig:topic-progression}.

\begin{figure}[hbt]
    \centering
    \includegraphics[width=\textwidth]{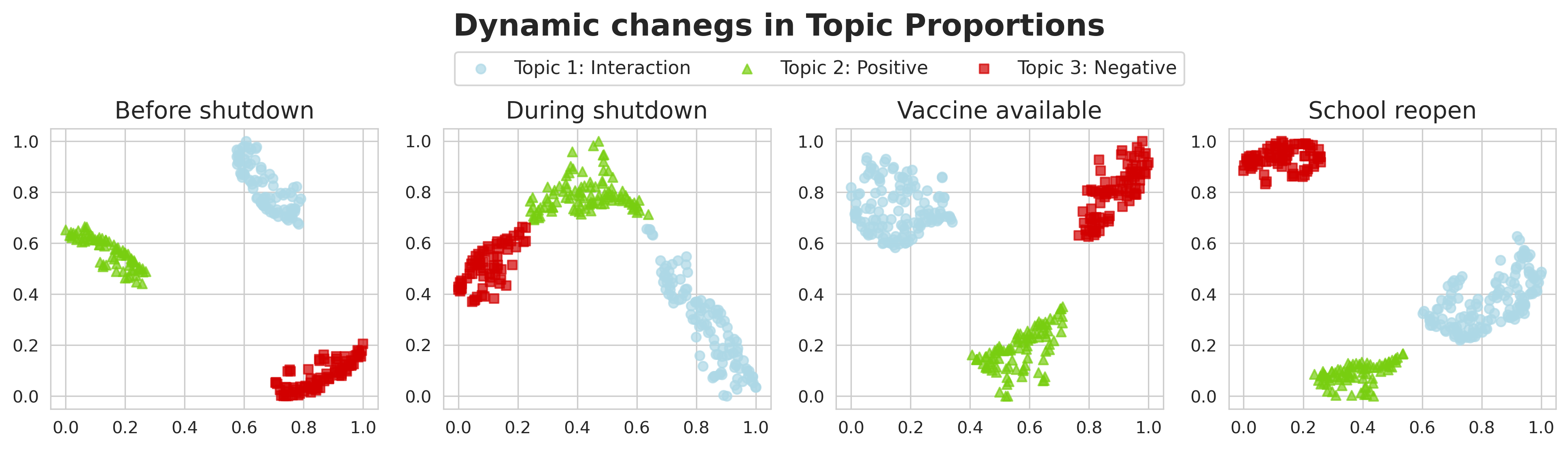}
    \caption{Dynamic changes in the topic proportions on a latent space extracted by the t-SNE. Each shape represents the corresponding dominant topic.}
    \label{fig:tSNE}
\end{figure}

Figure \ref{fig:topic-progression} shows there is an increase in the prevalence of ``Negative" emotions and a decrease in the ``Interaction" topics among both SGM and non-SGM children after the implementation of state shutdowns; whereas both trends reverse once schools began to reopen. In particular, compared to non-SGM children, SGM inpatient children exhibit more pronounced shifts, and their ``Negative" emotions started to decline one stage earlier when news of vaccine availability was released. These disparities in topic trajectories suggest the existence of heterogeneity associated with children's sexual and gender identity. Importantly, our findings also align with recent research which shows that social isolation due to quarantine measures has led to elevated levels of anxiety but decreased vaccine hesitancy in the SGM community during COVID-19 \citep{pharr2022impact, adzrago2023associations}.

\begin{figure}[h]
    \centering
    \includegraphics[width=0.9\textwidth]{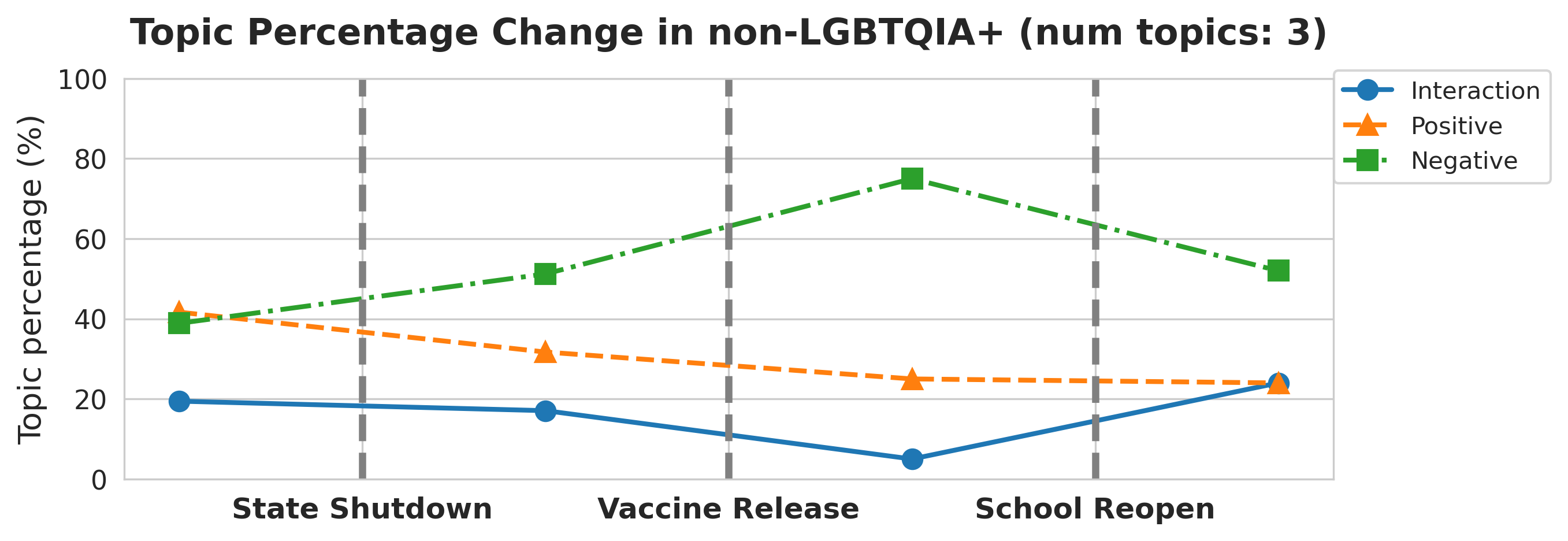}
    \includegraphics[width=0.9\textwidth]{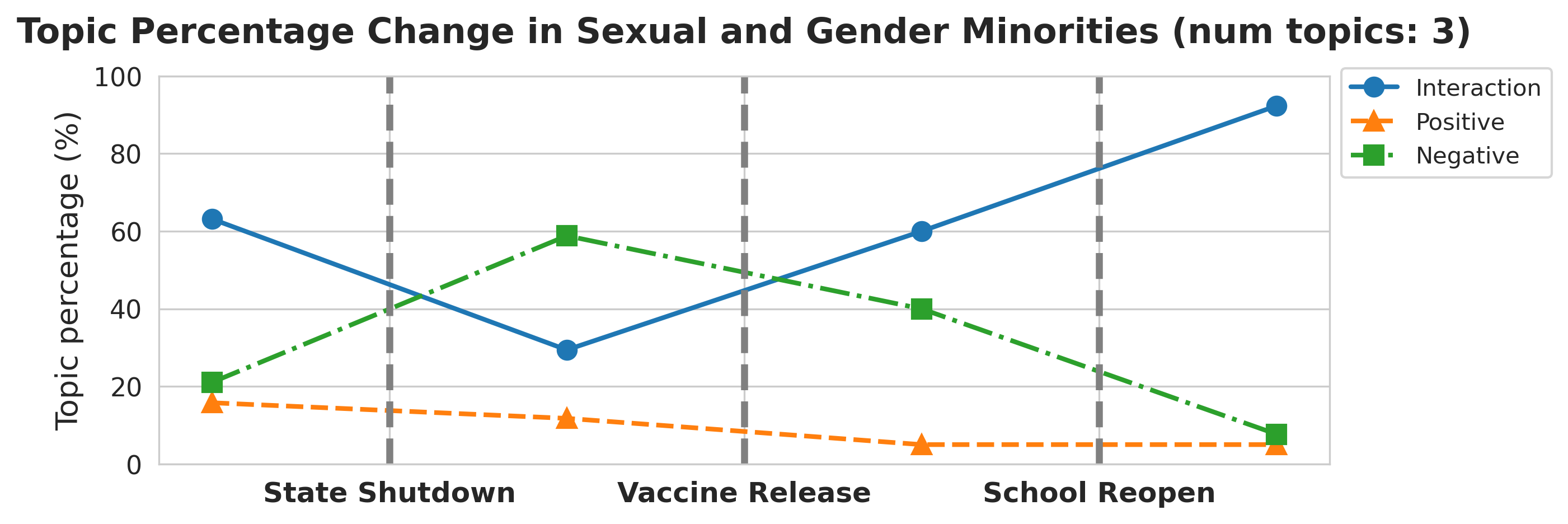}
    \caption{Percentage changes in the three topics fitted by HCFDTM within two patient groups according to sexual and gender identities. }
    \label{fig:topic-progression}
\end{figure}

To conclude, the proposed HCF-DTM method effectively extracts interpretable topics from the unstructured clinical notes. Our classifier-free group-wise separation approach reveals noticeable disparities in the progression of mental status between SGM and non-SGM children. This study effectively demonstrates the feasibility of using clinical notes to evaluate children's mental status and provides valuable insights for clinicians.

\section{Discussion} \label{sec:Discussion}

In this paper, we develop a heterogeneous dynamic topic model with an efficient variational inference procedure. The proposed model is designed to extract consistent topics in a multi-stage longitudinal setting. Specifically, our method maintains a set of time-invariant topics and incorporates document metadata into the topic proportions, where the first preserves the semantic meaning of each topic and the second captures the document temporal heterogeneity. In addition, when the documents can be categorized, we introduce a classifier-free topic learning approach that utilizes counterfactual topic distribution and inter-distributional distances to maximize topic heterogeneity across different document groups.

The proposed topic model is applied to clinical notes data from a large tertiary pediatric hospital in Southern California to evaluate inpatient children's mental health concerning their sexual and gender identities during the COVID-19 pandemic. We demonstrate three unique advantages of our method in this data application. First, without the need to navigate through a multitude of similar topics at each time stage, the extracted time-invariable topics readily represent children's mental status over time, and enable us to quantify the children's mental health progression via the corresponding topic proportions. Second, the augmentation of documents metadata can efficiently incorporate the heterogeneity among inpatient children, such as their demographics features and evolving vital measurements. Lastly, our classifier-free group-wise heterogeneity maximization approach can effectively identify any disparity in children's mental status related to their sexual and gender identities. Importantly, our model can be applied to other scenarios using topic modeling for longitudinal text data, particularly in the presence of heterogeneity.

Our real data analysis indicates that children tend to express more negative emotions during the state shutdowns and more positive when schools reopen. In particular, SGM children exhibit more pronounced reactions towards major COVID-19 events and greater sensitivity to vaccine-related news. This implies that increased social isolation due to enforced quarantine has a noticeable negative impact on children's mental health, especially among SGM children. As a result, engaging more social activities and support can be crucial for children's mental well-being, and our study may facilitate clinicians to understand the importance of rebuilding social connections and activities as a post-pandemic support system for children to recover from mental distress.

\begin{supplement}

\stitle{Supplementary Materials to ``Dynamic Topic Language Model on heterogeneous Children's Mental Health Clinical Notes"}

\sdescription{The supplement contains variational inference proof, details of the optimization algorithm, additional simulations and ablation studies, and descriptions of real-world applications along with an examination of model assumptions.}

\end{supplement}

{
\small
\bibliographystyle{imsart-nameyear} 
\bibliography{bibliography}       
}

\end{document}


\title{Supplementary Materials \\ to ``Dynamic Topic Language Model on heterogeneous Children's Mental Health Clinical Notes"}
\begin{aug}
\author[]{\fnms{}~\snm{Hanwen Ye}{}},
\author[]{\fnms{}~\snm{Tatiana Moreno}{}},
\author[]{\fnms{}~\snm{Adrianne Alpern}{}},
\author[]{\fnms{}~\snm{Louis Ehwerhemuepha}{}},
\and
\author[]{\fnms{}~\snm{Annie Qu}{}}
\end{aug}

\tableofcontents

\section{Variational inference proof}

In this section, we prove the following proposition \ref{prop:LELBO} from Section \blue{3.2} under Assumptions \blue{1} and \blue{2}.
\begin{proposition} \label{prop:LELBO}
    Under Assumptions \blue{1}-\blue{2}, the evidence lower bound (ELBO) for a single document generated by Process \blue{2} over a finite $T$-stage longitudinal time horizon is 
    \vspace{-0.2em}
    \begin{align}
    \hspace{3em} & \log P(\bm{w}_{1:T} | X_{1:T}, Y, \beta) \ge \label{eq: ELBO-log-likelihood}\raisetag{-2.5em} \\[-0.15em]
    %
    & \hspace{1.5em}\underbrace{-\mathbb{KL}(q_{\psi_1}(\theta_1 | \bm{w}_1, X_1, Y) \;||\; p(\theta_1 | \theta_0, X_1, Y)) +  \mathop{\mathbb{E}}_{\theta_1 \sim q_{\psi_1}} (\log P(\bm{w}_1| \theta_1, \beta))}_{\text{Single-stage ELBO for the first stage}} + 
    \label{eq: L-ELBO} \raisetag{-2.5em} \\[-0.5em]
    %
    & \hspace{-2.7em} \underbrace{\sum_{t=2}^T \left\{-\mathop{\mathbb{E}}_{\theta_{t-1} \sim q_{\psi_{t-1}}} \left( \mathbb{KL}[q_{\psi_t}(\theta_t | \bm{w}_t, X_t, Y) \;||\; p(\theta_t | \theta_{t-1}, X_t, Y)]\right) + \mathop{\mathbb{E}}_{\theta_t \sim q_{\psi_t}} (\log P(\bm{w}_t| \theta_t, \beta))\right\}}_{\text{Temporal-dependent ELBO for the follow-up stages}}  \notag
    \end{align}
\end{proposition}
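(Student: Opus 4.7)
The plan is to apply the standard variational argument: introduce a tractable factorized posterior over the latent topic proportions $\theta_{1:T}$, apply Jensen's inequality to $\log P(\bm{w}_{1:T}\mid X_{1:T},Y,\beta)$, and then use the Markov and conditional-independence structure implied by Assumptions \blue{1}--\blue{2} to rewrite the resulting expectation into the claimed per-stage pieces. Concretely, I would take the amortized mean-field family
\begin{equation*}
q_\psi(\theta_{1:T}\mid \bm{w}_{1:T},X_{1:T},Y) \;=\; \prod_{t=1}^{T} q_{\psi_t}(\theta_t \mid \bm{w}_t, X_t, Y),
\end{equation*}
which mirrors the generative chain so that every KL appearing later is well-defined stage by stage.

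The main calculation has three substeps. First, multiply and divide by $q_\psi$ inside the marginal likelihood and apply Jensen's inequality to obtain
\begin{equation*}
\log P(\bm{w}_{1:T}\mid X_{1:T},Y,\beta) \;\ge\; \mathop{\mathbb{E}}_{q_\psi}\!\bigl[\log P(\bm{w}_{1:T},\theta_{1:T}\mid X_{1:T},Y,\beta) - \log q_\psi(\theta_{1:T}\mid \bm{w}_{1:T},X_{1:T},Y)\bigr].
\end{equation*}
Second, substitute the factorizations given by Assumption \blue{1} (first-order Markov transition on $\theta_t \mid \theta_{t-1}$) and Assumption \blue{2} (observations $\bm{w}_t$ conditionally independent of everything else given $\theta_t,\beta$), so that the joint equals $p(\theta_1\mid \theta_0,X_1,Y)\prod_{t=2}^{T} p(\theta_t \mid \theta_{t-1},X_t,Y)\prod_{t=1}^{T} P(\bm{w}_t \mid \theta_t,\beta)$, and do the analogous product split for $q_\psi$. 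Third, split the logarithm by stage and integrate each summand only against those $q_{\psi_s}$ whose $\theta_s$ actually appear; for $t=1$ this collapses into $-\mathbb{KL}(q_{\psi_1}\,\|\,p(\theta_1\mid \theta_0,X_1,Y)) + \mathop{\mathbb{E}}_{q_{\psi_1}}[\log P(\bm{w}_1\mid \theta_1,\beta)]$, exactly the first-stage ELBO in \eqref{eq: L-ELBO}.

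The main obstacle, and essentially the only nonroutine step, is the bookkeeping for $t\ge 2$: because $p(\theta_t \mid \theta_{t-1},X_t,Y)$ couples consecutive stages, its contribution is integrated against the product $q_{\psi_{t-1}}(\theta_{t-1})\,q_{\psi_t}(\theta_t)$. The key move is to perform the $\theta_t$-integration first and recognize that $\mathop{\mathbb{E}}_{\theta_t \sim q_{\psi_t}} \log\bigl[q_{\psi_t}(\theta_t\mid\bm{w}_t,X_t,Y)/p(\theta_t\mid\theta_{t-1},X_t,Y)\bigr]$ equals $\mathbb{KL}\bigl(q_{\psi_t}(\theta_t\mid\bm{w}_t,X_t,Y)\,\|\,p(\theta_t\mid\theta_{t-1},X_t,Y)\bigr)$, which is still a function of $\theta_{t-1}$; the residual expectation under $q_{\psi_{t-1}}$ then produces the outer $\mathop{\mathbb{E}}_{\theta_{t-1}\sim q_{\psi_{t-1}}}$ wrapping the KL in the displayed bound. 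The care required is not to collapse this conditional-on-$\theta_{t-1}$ KL into an ordinary KL against a marginal prior: Assumption \blue{1} is exactly what licenses keeping $p(\theta_t\mid\theta_{t-1},\cdot)$ as the correct stage-$t$ prior, while Assumption \blue{2} is what reduces the $t$-th data term to the clean $\mathop{\mathbb{E}}_{q_{\psi_t}}[\log P(\bm{w}_t\mid \theta_t,\beta)]$ with no dependence on past latents or observations.
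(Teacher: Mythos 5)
Your proposal is correct and follows essentially the same route as the paper: the paper obtains the lower bound via non-negativity of $\mathbb{KL}(Q\,\|\,P(\theta_{1:T}\mid\mathbf{w}_{1:T},\cdot))$ rather than Jensen's inequality (an equivalent derivation), and then performs exactly your stage-by-stage split, integrating out the irrelevant $q_{\psi_j}$ factors and recognizing the inner integral over $\theta_t$ as a conditional KL whose residual $\theta_{t-1}$-dependence yields the outer expectation $\mathbb{E}_{\theta_{t-1}\sim q_{\psi_{t-1}}}$. One small bookkeeping note: you have the two assumptions swapped relative to the paper, which invokes Assumption 2 for the Markov factorization of the prior $p(\theta_{1:T}\mid X_{1:T},Y)$ and Assumption 1 for the conditional independence $P(\mathbf{w}_{1:T}\mid\theta_{1:T},\cdot)=\prod_t P(\mathbf{w}_t\mid\theta_t,\beta)$.
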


\begin{proof}
We show the proof in two steps. First, we lower bound the log-likelihood of the documents, $\log P(\bm{w}_{1:T} | X_{1:T}, Y, \beta)$. By the the objective of the variational inference, we minimize the KL-divergence between the variational distribution and the true underlying posterior, i.e., $\mathbb{KL}(Q(\theta_{1:T} | \mathbf{w}_{1:T}, X_{1:T}, Y) || P(\theta_{1:T} | \mathbf{w}_{1:T}, X_{1:T}, Y, \beta))$. The equalities are established below,
\begin{align}
    &\mathbb{KL}(Q(\theta_{1:T} | \mathbf{w}_{1:T}, X_{1:T}, Y) || P(\theta_{1:T} | \mathbf{w}_{1:T}, X_{1:T}, Y, \beta)) \label{eq: obj-KL}\\
&= \mathbb{E} \left( Q(\theta_{1:T} | \mathbf{w}_{1:T}, X_{1:T}, Y) \cdot \log\frac{Q(\theta_{1:T} | \mathbf{w}_{1:T}, X_{1:T}, Y)}{P(\theta_{1:T} | \mathbf{w}_{1:T}, X_{1:T}, Y, \beta))}\right) \notag \\
&= \mathbb{E} \left( Q(\theta_{1:T} | \mathbf{w}_{1:T}, X_{1:T}, Y) \cdot \log\frac{Q(\theta_{1:T} | \mathbf{w}_{1:T}, X_{1:T}, Y)}{P(\theta_{1:T}, \mathbf{w}_{1:T} | X_{1:T}, Y, \beta)) \;/\; P(\mathbf{w}_{1:T} | X_{1:T}, Y, \beta))}\right) \notag \\
&= \mathbb{E} \left( Q(\theta_{1:T} | \mathbf{w}_{1:T}, X_{1:T}, Y) \cdot \log\frac{Q(\theta_{1:T} | \mathbf{w}_{1:T}, X_{1:T}, Y)}{P(\theta_{1:T}, \mathbf{w}_{1:T} | X_{1:T}, Y, \beta)}\right) + \notag \\
& \quad \; \mathbb{E} \left( Q(\theta_{1:T} | \mathbf{w}_{1:T}, X_{1:T}, Y) \cdot \log P(\mathbf{w}_{1:T} | X_{1:T}, Y, \beta)\right) \notag \\
&= \mathbb{KL}(Q(\theta_{1:T} | \mathbf{w}_{1:T},X_{1:T}, Y) \;||\; P(\theta_{1:T}, \mathbf{w}_{1:T} | X_{1:T}, Y, \beta)) + \log P(\mathbf{w}_{1:T} | X_{1:T}, Y, \beta), \label{eq: obj-KL-derive}
\end{align}

\noindent where Equation \eqref{eq: obj-KL-derive} is based on $\mathbb{E} \left( Q(\theta_{1:T} | \mathbf{w}_{1:T}, X_{1:T}, Y) \cdot \log P(\mathbf{w}_{1:T} | X_{1:T}, Y, \beta)\right) = \log P(\mathbf{w}_{1:T} | X_{1:T}, Y, \beta) \cdot \mathbb{E} \left( Q(\theta_{1:T} | \mathbf{w}_{1:T}, X_{1:T}, Y)\right)$ and $\mathbb{E}(Q(\theta_{1:T} | \mathbf{w}_{1:T}, X_{1:T}, Y)) = 1$. Since the KL divergence \eqref{eq: obj-KL} is non-negative, we find the lower bound of the log-likelihood of the documents, i.e., 
\begin{align}
    &\log P(\mathbf{w}_{1:T} | X_{1:T}, Y, \beta) \\
    &\ge - \mathbb{KL}(Q(\theta_{1:T} | \mathbf{w}_{1:T},X_{1:T}, Y) \;||\; P(\theta_{1:T}, \mathbf{w}_{1:T} | X_{1:T}, Y, \beta)) \label{eq:lower-bound} \\
    &= \mathbb{E}\left(Q(\theta_{1:T} | \mathbf{w}_{1:T},X_{1:T}, Y) \cdot \log \frac{Q(\theta_{1:T} | \mathbf{w}_{1:T},X_{1:T}, Y)}{P(\theta_{1:T}, \mathbf{w}_{1:T} | X_{1:T}, Y, \beta)}\right) \notag \\
    &= \mathbb{E}\left(Q(\theta_{1:T} | \mathbf{w}_{1:T},X_{1:T}, Y) \cdot \log \frac{Q(\theta_{1:T} | \mathbf{w}_{1:T},X_{1:T}, Y)}{p(\theta_{1:T} | X_{1:T}, Y) \cdot P(\mathbf{w}_{1:T} | \theta_{1:T}, X_{1:T}, Y, \beta)}\right) \notag \\
    &= \underbrace{\mathbb{KL}(Q(\theta_{1:T} | \mathbf{w}_{1:T},X_{1:T}, Y) \;||\; p(\theta_{1:T} | X_{1:T}, Y))}_{\text{\textbf{Approximation error}: variational distribution $Q$ and geneartive prior $p$}} + \label{eq: approximation-term}\\
    & \quad \;\; \underbrace{-\mathbb{E}_{\theta_{1;T} \sim Q(\theta_{1:T} | \mathbf{w}_{1:T},X_{1:T}, Y)}(\log P(\mathbf{w}_{1:T} | \theta_{1:T}, X_{1:T}, Y, \beta))}_{\text{\textbf{Reconstruction error}: model likelihood from variational topics}} \label{eq: reconstruction-error}.
\end{align}

Next, we aim to break down the two terms which involves all $T$ number of stages into individual adjacent stages to ease the computations. By further decomposing the approximation error \eqref{eq: approximation-term}, we obtain the following equalities,
\begin{align}
    &\mathbb{KL}(Q(\theta_{1:T} | \mathbf{w}_{1:T},X_{1:T}, Y) \;||\; p(\theta_{1:T} | X_{1:T}, Y)) \notag \\
    &= \mathbb{E}\left(Q(\theta_{1:T} | \mathbf{w}_{1:T},X_{1:T}, Y) \cdot \log \frac{Q(\theta_{1:T} | \mathbf{w}_{1:T},X_{1:T}, Y)}{p(\theta_{1:T} | X_{1:T}, Y)}\right) \notag \\
    %
    &= \mathbb{E}\left( \prod_{j=1}^T q(\theta_j | \mathbf{w}_j, X_j, Y) \cdot \sum_{t=1}^T \log \frac{q(\theta_t | \mathbf{w}_t, X_t, Y)}{p(\theta_t | \theta_{t-1}, X_{1:t}, Y)}\right)  \label{eq:assumption-1}\\
    %
    &= \mathbb{E}\left(\log \frac{q(\theta_1 | \mathbf{w}_{1}, X_{1}, Y)}{p(\theta_1 | \theta_{0}, X_{1}, Y)} q(\theta_1 | \mathbf{w}_1, X_1, Y) \prod_{j=2}^{T} q(\theta_j | \mathbf{w}_j, X_j, Y)\right) +  \notag \\
    & \;\; \sum_{t=2}^T \mathbb{E}\left(\log \frac{q(\theta_t | \mathbf{w}_{t}, X_{t}, Y)}{p(\theta_t | \theta_{t-1}, X_{1:t}, Y)} \prod_{j \in \{t-1, t\}} q(\theta_j | \mathbf{w}_j, X_j, Y) \prod_{j \notin \{t-1, t\}}^{T} q(\theta_j | \mathbf{w}_j, X_j, Y)\right) \notag \\
    %
    &= \mathbb{E}\left(q(\theta_1 | \mathbf{w}_1, X_1, Y)\log \frac{q(\theta_1 | \mathbf{w}_{1}, X_{1}, Y)}{p(\theta_1 | \theta_{0}, X_{1}, Y)}\right) \cdot \prod_{j=2}^{T} \mathbb{E}(q(\theta_j | \mathbf{w}_j, X_j, Y)) + \label{eq:independence-of-q}\\
    & \;\; \sum_{t=2}^T \mathbb{E}\left(\log \frac{q(\theta_t | \mathbf{w}_{t}, X_{t}, Y)}{p(\theta_t | \theta_{t-1}, X_{1:t}, Y)} \prod_{j \in \{t-1, t\}} q(\theta_j | \mathbf{w}_j, X_j, Y)\right) \cdot \prod_{j \notin \{t-1, t\}}^{T} \mathbb{E}\left(q(\theta_j | \mathbf{w}_j, X_j, Y)\right) \notag \\
    %
    &= \mathbb{KL}(q(\theta_1 | \mathbf{w}_1, X_1, Y) \;||\; p(\theta_1 | \theta_{0}, X_{1}, Y)) \cdot 1 + \vphantom{\sum_{i=2}^T} \label{eq: valid-distribution}\\
    &\;\; \sum_{t=2}^T \int_{\theta_{t-1}} q(\theta_{t-1} | \mathbf{w}_{t-1}, X_{t-1}, Y) \left( \int_{\theta_t} q(\theta_t | \mathbf{w}_t, X_t, Y) \log \frac{q(\theta_t | \mathbf{w}_{t}, X_{t}, Y)}{p(\theta_t | \theta_{t-1}, X_{1:t}, Y)} d \theta_t \right)d \theta_{t-1} \cdot 1 \notag \\
    %
    &= \underbrace{{\mathbb{KL}(q(\theta_1 | \mathbf{w}_1, X_1, Y) \;||\; p(\theta_1 | \theta_{0}, X_{1}, Y))}}_{\text{Single-stage KL term for the first stage}} + \vphantom{\sum_{i=2}^T} \label{eq: KL-stage}\\
    &\;\; \underbrace{\sum_{t=2}^T \mathbb{E}_{\theta_{t-1} \sim q(\theta_{t-1} | \mathbf{w}_{t-1}, X_{t-1}, Y)} \left( \mathbb{KL}(q(\theta_t | \mathbf{w}_t, X_t, Y) \;||\; p(\theta_t | \theta_{t-1}, X_{1:t}, Y))\right)}_{\text{Temporal-dependent KL term for the follow-up stages}}. \notag
\end{align}

\vspace{1em}
\noindent Note that Equation \eqref{eq:assumption-1} satisfies as $p(\theta_{1:T} | X_{1:T}, Y) = \prod_{j-1}^{T} p(\theta_i | \theta_{i-1}, X_{1:i}, Y)$ under Assumption \blue{2}; Equation \eqref{eq:independence-of-q} results from the independent factorization of the variational distribution $Q$; and Equation \eqref{eq: valid-distribution} is based on the fact that the variational distribution $q$ at each stage is a valid distribution. Similarly, we decompose the reconstruction error term \eqref{eq: reconstruction-error} into,
\begin{align}
& \mathbb{E}_{\theta_{1;T} \sim Q(\theta_{1:T} | \mathbf{w}_{1:T},X_{1:T}, Y)}(\log P(\mathbf{w}_{1:T} | \theta_{1:T}, X_{1:T}, Y, \beta)) \notag \\
&= \mathbb{E}_{\theta_{1;T} \sim Q(\theta_{1:T} | \mathbf{w}_{1:T},X_{1:T}, Y)} \left( \sum_{t=1}^T \log P(\mathbf{w}_t | \theta_t, \beta) \right) \qquad \text{(by Assumption \blue{1})}\notag \\
&= \sum_{t=1}^T \mathbb{E}_{\theta_t \sim q(\theta_{t} | \mathbf{w}_{t},X_{t}, Y)} \left( \log P(\mathbf{w}_t | \theta_t, \beta) \right) \label{eq: reconstruction-stage}.
\end{align}

Finally, by combining Equation \eqref{eq: KL-stage} and \eqref{eq: reconstruction-stage}, we obtain the ELBO for a single document generated by Process \blue{2} over a finite $T$-stage longitudinal horizon and complete the proof.
\end{proof}

\section{Algorithm: model architecture}

To further illustrate our algorithm, we first introduce the classic single-stage variational autoencoder \citep{kingma2013auto} and then fully explain our model with extension to multistage longitudinal setting in this section.


\subsection{Single-stage VAE}

VAE was first developed to embed variational inference to the deep neural network architecture \citep{kingma2013auto}. According to Figure \ref{fig:VAE}, VAE mainly consists of two major components: encoder and decoder. The encoder intends to learn a low-dimensional latent space of the original data to estimate the mean ($\theta_\mu$) and scale ($\theta_\sigma$) parameters of the variational distribution $Q(\cdot)$, then the decoder aims to reconstruct the original documents from $Q(\cdot)$. Importantly, in order to draw samples from the variational distribution meanwhile enabling gradient descent for model training, VAE constructs variational samples, $\theta = \theta_\mu + \theta_\sigma \cdot \epsilon$, with $\epsilon \sim \mathcal{N}(0,1)$, so that $\theta \sim Q(\theta_\mu, \theta_\sigma^2)$ when $Q(\cdot)$ is specified as a normal distribution. In a traditional single-stage setting, the optimization procedure involves improving the evidence lower bound (ELBO), i.e., 
\begin{equation}
   \psi^*, \beta^* = \argmax_{\psi, \beta} -\mathbb{KL}(Q_{\psi}(\theta |\bm{w}, X, Y) \;||\; p(\theta | X, Y)) + \mathbb{E}_{\theta \sim Q_{\psi}(\cdot)}(\log P(\bm{w} | \theta, \beta)),
\end{equation}
where $p(\cdot)$ is the prior of the topic proportions given the document metadata, $X$, and group membership, $Y$, whereas $P(\cdot)$ is the likelihood of the documents, $\bm{w}$. By maximizing ELBO, we can obtain the optimal topics $\beta$, and estimate topic proportions from $Q(\theta_\mu, \theta_\sigma^2)$.
\begin{figure}[H]
    \centering
    \includegraphics[width=0.7\textwidth]{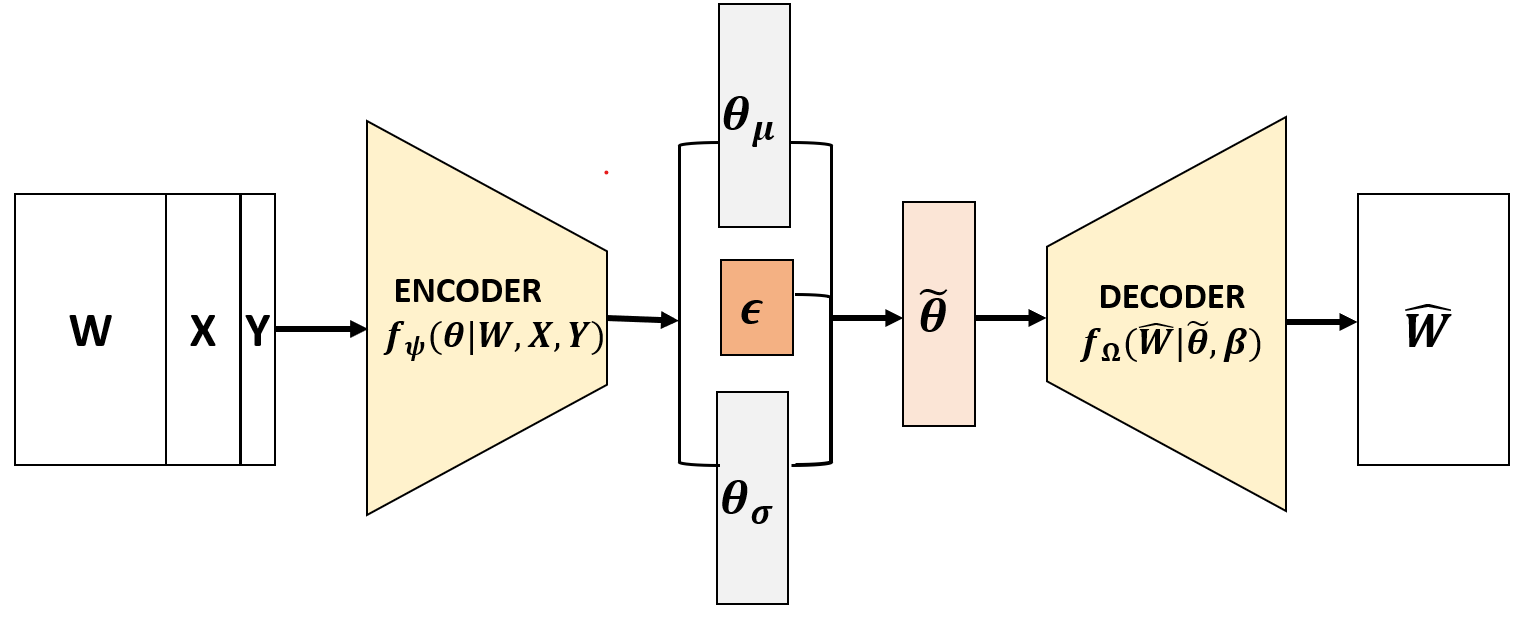}
    \caption{Architecture of the traditional VAE. The encoder and decoder are two parametrizable functions, $f_\psi$ and $f_\Omega$, implemented by fully-connected neural networks. The variational samples $\theta = \theta_\mu + \theta_\sigma \cdot \epsilon$ follows the variational distribution $Q(\theta_\mu, \theta_\sigma^2)$ when $Q \doteq \mathcal{N}$ and $\epsilon \sim \mathcal{N}(0,1)$. }
    \label{fig:VAE}
\end{figure}

\vspace{-2em}
\subsection{Multistage longitudinal VAE}

However, when extending the topic models to a multi-stage longitudinal setting, there involves time-dependency in the topic proportions, i.e., $\theta_t \sim \mathcal{N}(f_{\phi_t}(\theta_{t-1}, X_t, Y), \sigma^2\mathbb{I})$. The prior of topic proportions, $\theta_t$, is now dependent on the ones from the previous stage, $\theta_{t-1}$. As a result, we cannot simply divide the KL-term into individual single stages because of the following,
\begin{equation}
    \mathbb{KL}(Q_{\Psi}(\theta_{1:T} | \bm{w}_{1:T}, X_{1:T}, Y) \;||\; p(\theta_{1:T} | X_{1:T}, Y)) \neq \sum_{t=1}^T \mathbb{KL}(q_{\psi_t}(\theta_{t} | \bm{w}_t, X_{t}, Y) \;||\; p(\theta_{t} | X_{t}, Y)).
    \notag
\end{equation}
To encounter the challenges, we introduce the longitudinal VAE framework, leveraging the derived proposition as follows,
\begin{align}
    &\mathbb{KL}(Q_{\Psi}(\theta_{1:T} | \bm{w}_{1:T}, X_{1:T}, Y) \;||\; p(\theta_{1:T} | X_{1:T}, Y)) \notag \\ 
    &= \sum_{t=1}^T \mathop{\mathbb{E}}_{\theta_{t-1} \sim q_{\psi_{t-1}}} \left( \mathbb{KL}[q_{\psi_t}(\theta_t | \bm{w}_t, X_t, Y) \;||\; p(\theta_t | \theta_{t-1}, X_t, Y)]\right). \notag
    \notag
\end{align}
The architecture of the proposed longitudinal VAE is displayed in Figure \ref{fig:L-VAE}. In comparison, our longitudinal VAE takes $M$ samples of the variational parameters to obtain an empirical estimate of the expectation $\mathbb{E}_{\theta_{t-1} \sim q_{\psi_{t-1}}}$ with respect to the topic proportions from the previous time stage. This enables preserving the time dependencies among the topic proportions.
\begin{figure}[H]
    \centering
    \includegraphics[width=0.8\textwidth]{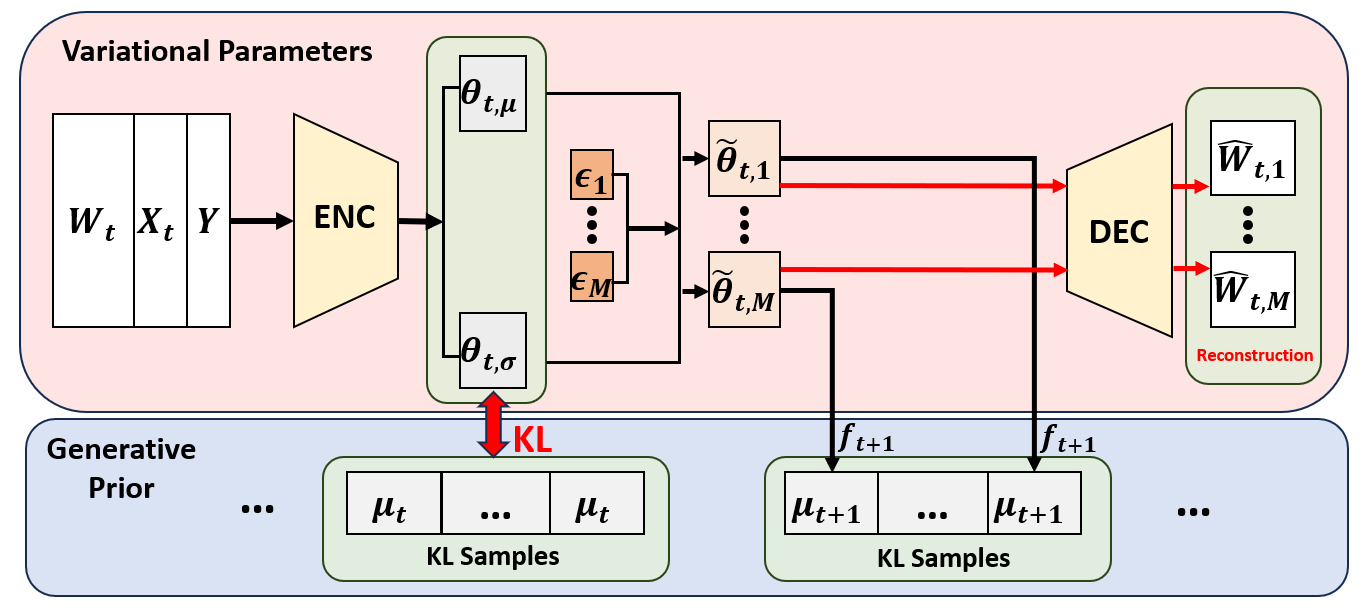}
    \caption{The architecture of the proposed longitudinal VAE. The variational samples $\epsilon_1,...,\epsilon_M$ are taken M times per time stage to evaluate the expectation $\mathbb{E}_{\theta_{t-1} \sim q_{\psi_{t-1}}}$,}
    \label{fig:L-VAE}
\end{figure}

\subsection{HCF-DTM architecture}
Under the extended multistage VAE framework, we present the graphical architecture of our proposed method at time $t$ in Figure \ref{fig:VAE-structure}. As shown, the model first takes in three inputs, i.e., the text data $W_t$, the document's metadata $X_t$, and the corresponding group membership $Y$, to calculate the mean $\mu^q_t$ and standard deviation $\sigma^q_t$ of the variational distribution via the parameterized functions $g^\mu_t$ and $g^\sigma_t$. Then, in a two-group setting, by inverting the group membership input to $-Y$, we obtain $\tilde{\mu}^q_t$ and $\tilde{\sigma}^q_t$ of the counterfactual variational distribution. As both variational and counterfactual variational distributions are defined, we calculate the inter-distributional distance using the Jensen-Shannon divergence measure and include it as the first component of the loss function. Next, we randomly sample $M$ number of latent topic proportions, $\theta_{t, 1:M}$ from the variational distribution. Along with the latent topics $\beta$, we compute the log-likelihood of the observed documents for the reconstruction loss in the loss function. Lastly, for each sampled latent topic proportion $\theta_{t, m}$, we apply the generative prior function $f_{t+1}$ to derive the generative topic proportion prior $\mu_{t+1, m}$ at next time stage. This enables us to obtain an empirical estimate of the KL divergence at time $t+1$, which completes the last piece of the loss function. By repeating the above process for $T$ number of time stages, we obtain the total loss function for the entire longitudinal time horizon.

\begin{figure}[h]
    \centering
    \includegraphics[width=0.8\textwidth]{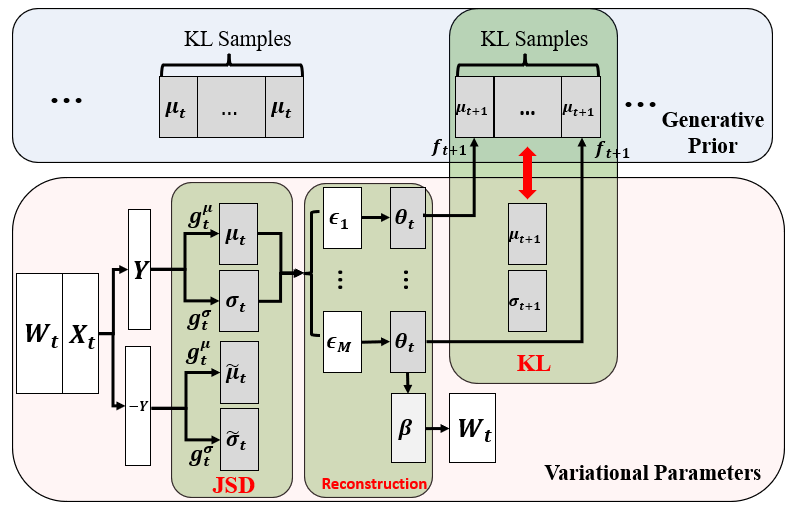}
    \caption{A graphical illustration of the proposed HCF-DTM at time $t$. The loss function can be separated into three pieces (Group-wise Jensen-Shannon distance, reconstruction loss, KL divergence) as shown in green. Note that the generative topic proportion prior $\mu_{t+1}$ is dependent on the re-parametrized variational proportions $\theta_t$ at the previous time stage. A random sample of $\mu_{t+1}$ is collected to obtain an empirical estimate of the KL divergence. }
    \label{fig:VAE-structure}
\end{figure}

The main objective of this optimization procedure is to find the optimal set of parameters of the variational functions $g^\mu_{1:T}$ and $g^\sigma_{1:T}$, which can minimize the loss function. In our choice of models, we adopt the long-short term memory (LSTM) network \citep{hochreiter1997long}, which are effective at capturing the longitudinal dependency among the topics and document's metadata. The network parameters are updated via the stochastic gradient descent (SGD) algorithm \citep{robbins1951stochastic}. To improve the convergence performance, one can consider using the Adam optimizer \citep{Adam}, applying the cosine annealing warm-restart schedule \citep{cosAnnealing} and different initialization seeds \citep{initialseed}, or tuning network hyperparameters, such as the learning rate and training epochs. Each model specification is evaluated via cross-validations and the parameters with the best averaged performance are selected. Notably, we do not limit the parametric form of the variational functions. One can parameterize the functions and estimate via the more conventional Broyden–Fletcher–Goldfarb–Shanno (BFGS) method \citep{head1985broyden}, or the Nelder–Mead method \citep{olsson1975nelder}.

\section{Simulation}

In this section, we elaborate the fitting procedure, include more simulation settings and model performance results to Section 5. Specifically, we conduct sensitivity analysis to investigate the change in method performance when the time-consistent topic assumption breaks, the number of group memberships increases, the group-wise heterogeneity effect is reduced, and the distance metrics is changed. In addition, we include coherence and perplexity scores to evaluate the interpretability of the extracted topics. 

\subsection{Model fitting procedure and evaluation metrics}
Our simulations consider five competing methods: LDA \citep{blei2003latent}, supervised LDA (sLDA) \citep{mcauliffe2007supervised}, multistage dynamic LDA (mdLDA) \citep{blei2006dynamic}, prodLDA \citep{srivastava2017autoencoding} and SCHOLAR \citep{card2017neural}. We summarize the categories of each method in Table \ref{tab:competing_method}. Note that, due to the limited availability of longitudinal topic models, the mdLDA proposed by \cite{blei2006dynamic} is the only competing method among five to dynamically model the topics across time stages simultaneously.

\begin{table}[H]
    \centering
    \resizebox{0.9\textwidth}{!}{
    \begin{tabular}{|l|l|cccccc|}
    \hline   &  & LDA & sLDA & mdLDA & prodLDA & SCHOLAR & HCF-DTM \\ \hline
    \multirow{2}{*}{Model type} & Bayesian & \checkmark  & \checkmark  & \checkmark & & & \\ \cline{2-8}
    & Neural & & & &  \checkmark  & \checkmark & \checkmark\\ \hline 
    \multirow{2}{*}{Stage type} & Single-stage  & \checkmark & \checkmark& & \checkmark & \checkmark & \\ \cline{2-8}
    & Multi-stage & & & \checkmark& & & \checkmark\\ \hline
    \multirow{2}{*}{Data augmentation} & Metadata (X)  &  & & & & \checkmark &\checkmark  \\ \cline{2-8}
    & Label group (Y) & &\checkmark & & & \checkmark & \checkmark\\ \hline
    
    \end{tabular}
    }
    \caption{Categories of considered methods based on model types: Bayesian probabilistic topic models (BPTMs) and neural topic models (NTMs); stage types: single-stage and multi-stage; and data augmentation. }
    \label{tab:competing_method}
\end{table}

\subsubsection{Model fitting procedure}
Fitting the longitudinal topic models is straightforward. Given the simulated bag of words (BOWs) representation of documents, $W_{N \times T \times V}$, both mdLDA and our proposed method will directly estimate the topic-word distribution $\hat{\beta}_{T \times V \times K}$ and the topic proportions $\hat{\theta}_{T \times N \times K}$ ($N$: number of subjects, $T$: number of specified time stages, $V$: vocabulary size,  $K$: number of latent topics). In particular, due to our model specification, the topic-word distribution from our method will have $\hat{\beta}_{1, V \times K } = \hat{\beta}_{2, V \times K} = \cdot\cdot\cdot = \hat{\beta}_{T, V \times K}$. On the other hand, for each of the rest four single-stage methods, we fit the model independently on each time stage, which will yield $\{\hat{\beta}_{V \times K}\}_{t=1}^T$ and $\{\hat{\theta}_{N \times K}\}_{t=1}^T$ respectively. 

Next, since the topics from single-stage methods fail to incorporate time dependencies and could have random numeric order over time (i.e., the second topic $\hat{\beta}_{1,1:V,2}$ at time $1$ could correspond to the first topic $\hat{\beta}_{2,1:V,1}$ at time $2$), we need to re-map the order of extracted topics at each time stage. In practice, the re-mapping procedure could be followed by evaluating each topics and categorizing the ones with similar semantic meanings into the same group. However, this process is noticeably cumbersome when the number of stages or topics increases. To automate this process in simulation, we utilize the ground-true topic distribution $\beta^0_{T \times V \times K}$ and find the permutation order $\mathcal{O}_t$ of fitted topics which minimizes their KL-divergence, i.e.,
\begin{equation}
   \mathcal{O}^*_t = \argmin_{\mathcal{O} \in \text{perm}(1,..,K)} \widehat{\mathbb{KL}} (\hat{\beta}_{t, 1:V, \mathcal{O}} || \beta_{t,1:V,1:K}).
\end{equation}
With the new orderings, topics having the same topic index across time stages (e.g., the first topic $\{\hat{\beta}_{t, 1:V,  \mathcal{O}_1}\}_{t=1}^T$) will have the same semantic interpretation. To simplify the notations, we continue denoting the re-ordered topics as $\hat{\beta}_{T \times V \times K}$ in the following. 

\subsubsection{Evaluation metrics} 
After applying the pre-processing steps on estimated topics as described above, we assess the methods based on the following five metrics: 
\begin{itemize}
    \item Empirical KL-divergence between estimated and ground-true topic distributions.
    \begin{equation}
        \widehat{\mathbb{KL}}(\hat{\bm{\beta}} \;||\; \bm{\beta}) 
= \frac{1}{T \cdot K} \sum_{t=1}^T \sum_{k=1}^K \left(\sum_{v=1}^V \hat{\beta}_{t,v,k} \cdot \log(\beta_{t,v,k}/ \hat{\beta}_{t,v,k})\right).
\label{metric:KL}
    \end{equation}
    A smaller KL-divergence value represents a closer distributional distance between the estimated topics and the ground-true topics. When the estimated topic word probabilities equals to the ground-truth, the KL-divergence will reach its minimum at 0.
    
\vspace{1em}
    \item Coherence scores of top 15 words within each fitted topics.
    \begin{equation}
    \text{coherence}_{\text{umass}}(\hat{\bm{\beta}}) = \frac{1}{T \cdot K}\sum_{t=1}^T \sum_{k=1}^K \left(\sum_{i=1}^{15} \sum_{j \neq i}^{15} \log \frac{D(v_i^{(t, k)}, v_j^{(t, k)}) + 1}{D(v_j^{(t, k)})}\right),
    \label{metric:coherence}
    \end{equation}
    where $\{v_i^{(t,k)}\}_{i=1}^{15}$ are the top 15 most probable words from $\hat{\beta}_{t,1:V,k}$, $D(v_i^{(t,k)}, v_j^{(k,t)})$ is the number of documents at time t containing both words $v_i^{(t,k)}$ and $v_j^{(t,k)}$ , and $D(v_j^{(t,k)})$ is the number of documents at time t containing the word $D(v_j^{(t,k)})$ alone. Noticeably, when two words appear more often compared to a single word, i.e., $D(v_i^{(t,k)}, v_j^{(k,t)}) > D(v_j^{(t,k)})$, a larger coherence score will be obtained.
   
    \vspace{1em}
    \item Perplexity scores of fitted models over observed documents.
    \begin{equation}
    \footnotesize
        \text{perplexity}(\hat{\bm{\beta}}, \hat{\bm{\theta}}; \bm{W}) = \frac{1}{T} \sum_{t=1}^T \exp \left\{\frac{1}{N} \sum_{i=1}^N \left(-\frac{1}{\sum_{v=1}^V W_{i,t,v}} \sum_{v=1}^V W_{i,t,v} \cdot \log(\hat{\theta}_{t,i,1:K} \cdot \hat{\beta}_{t,v,1:K}^\intercal)\right) \right\}.
    \label{metric:perplexity}
    \end{equation}
    As shown, the perplexity is defined as the exponentiated average negative log-likelihood of a sequence. Thus, a smaller perplexity represents a larger likelihood of observing such corpus under the fitted model. 
    
    \vspace{1em}
    \item Alignment accuracy of dominant topics between fitted and ground-true topic proportions. 
    \begin{equation}
    \text{Accuracy}_{\text{align}}(\hat{\bm{\theta}}, \bm{\theta}) = \frac{1}{T \cdot N} \sum_{t=1}^T \sum_{i=1}^N \mathbb{I} \left(\argmax_{k} \hat{\theta}_{t,i,1:K} = \argmax_{k} \theta_{t,i,1:K} \right).
    \label{metric:acc-align}
    \end{equation}
    The dominant topic is the one having the largest topic proportions at a specified timepoint.
    
    \vspace{1em}
    \item Classification accuracy of group memberships based on fitted topic proportions.
    \begin{equation}
        \text{Accuracy}_{\text{cls}}(\hat{\bm{\theta}}, \bm{Y}) = \frac{1}{T \cdot N} \sum_{t=1}^T \sum_{i=1}^N \mathbb{I} \left( f_{\text{cls}} (\hat{\theta}_{t,i,1:K}) = Y_i \right),
    \label{metric:acc-group}
    \end{equation}
    where we apply a simple logistic regression solely on estimated topic proportions to predict group membership, i.e., $Y \sim \text{logistic}(\hat{\bm{\theta}})$.
\end{itemize}
\vspace{1em}

The five metrics individually inspects the model from three aspects: topics recovery rate, dominant topics identification, and group-wise topic separation capability. In particular, KL divergence \eqref{metric:KL} directly quantifies how close $\hat{\bm{\beta}}$ are to the ground-true ones; alignment accuracy \eqref{metric:acc-align} indicates the quality of $\hat{\bm{\theta}}$; classification accuracy \eqref{metric:acc-group} measures the amount of group-wise information contained in $\hat{\bm{\theta}}$; and both coherence \eqref{metric:coherence} and perplexity scores \eqref{metric:perplexity} connect the model estimations back to the observed documents, further examining the topic interpretability and goodness-of-fit.

\subsection{Ablation studies and assumption examination}
\subsubsection{Break of the time-consistent topic assumption} \label{A:sim:break-consistent}
One of the key assumption under our proposed generative process is that there exists a long-lasting set of topics which remain constant over time periods. This is useful when one is interested to understand the underlying consistent theme of a topic, and intend to quantify the popularity trend of each topic by solely using the topic proportions. However, we haven't discussed scenarios where the time-consistent topic assumption breaks. In this section, we conduct sensitivity analysis to investigate our model performance when the topic distribution is dynamically changing.

To establish temporal dependencies on the topic distributions, we consider constructing the ground-true topic distributions $\bm{\beta}$ based on the following process, i.e., 
\begin{equation}
    \beta_{t,v,k} = p_{\mathcal{N}}(v; \mathcal{N}(\mu_{t,k}, 1 + \frac{t}{T}\cdot \phi)),
\end{equation}

\noindent where $p_{\mathcal{N}}(\cdot)$ is the density of a normal distribution, $\mu_k = \lfloor k \cdot \frac{V}{K} \rfloor$, and $v$, as words, ranges from 1 to $V$. Specifically, the topics at first stage is generated according to the same sampling procedure as described in the main text. For the follow-up stages, we introduce a hyper-parameter $\phi$ which controls the longitudinal transition variance. An example of resulting topics can be found in Figure \ref{fig:simulated_topics}. Noticeably, as $\phi$ value increases, the longitudinal variations among the topics become larger.
\begin{figure}[H]
    \centering
    \includegraphics[width=0.32\textwidth]{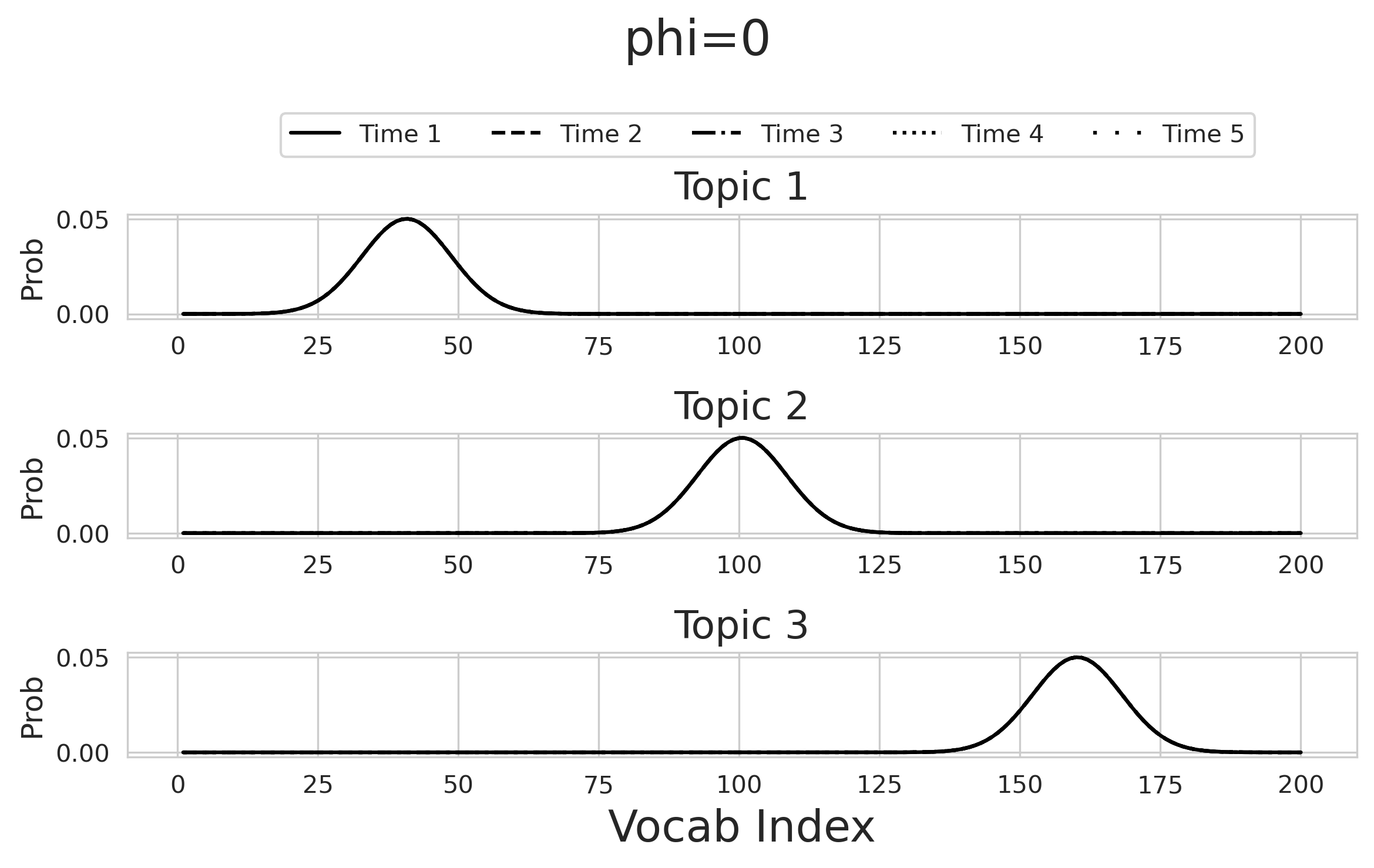}
    \includegraphics[width=0.32\textwidth]{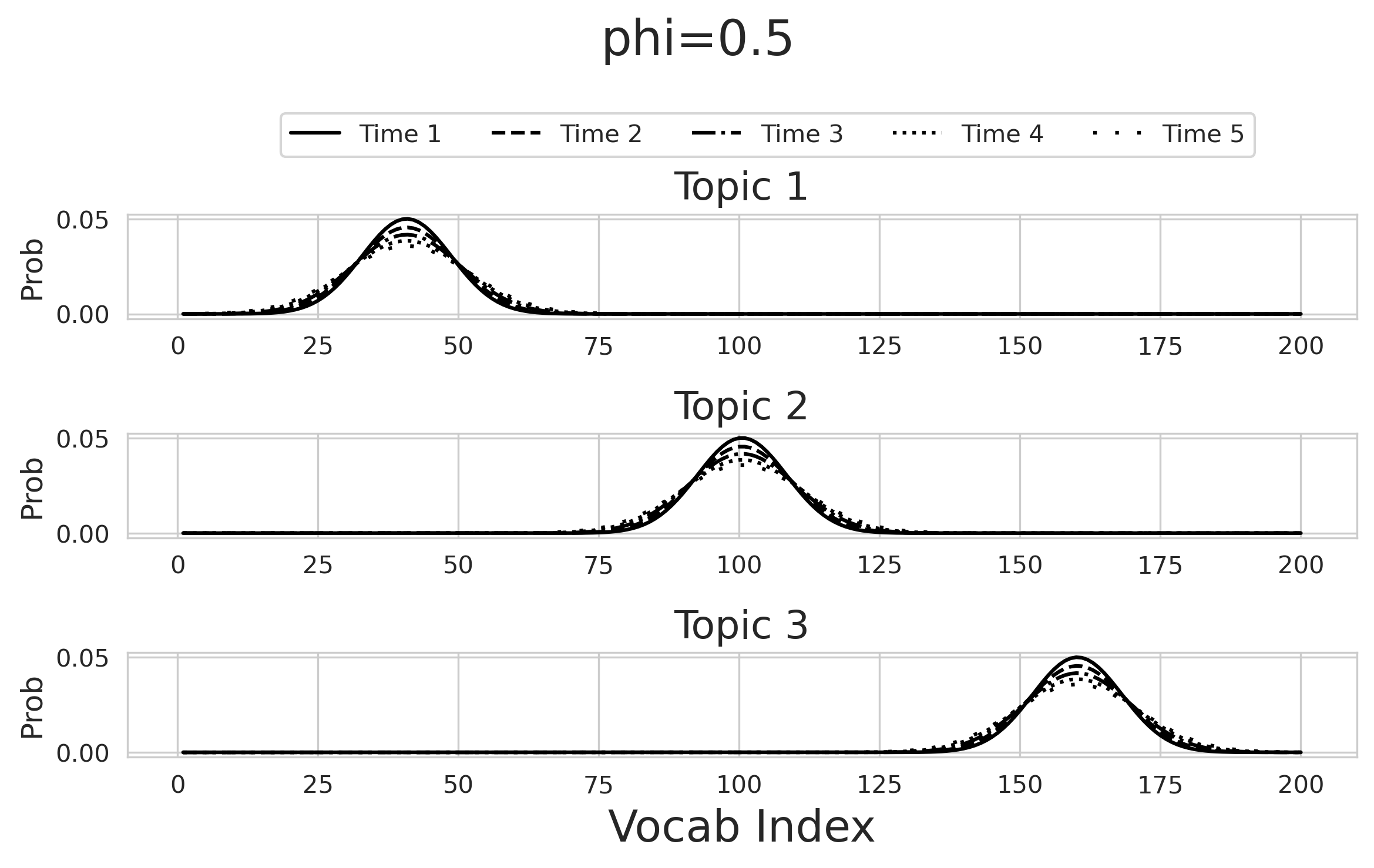}
    \includegraphics[width=0.32\textwidth]{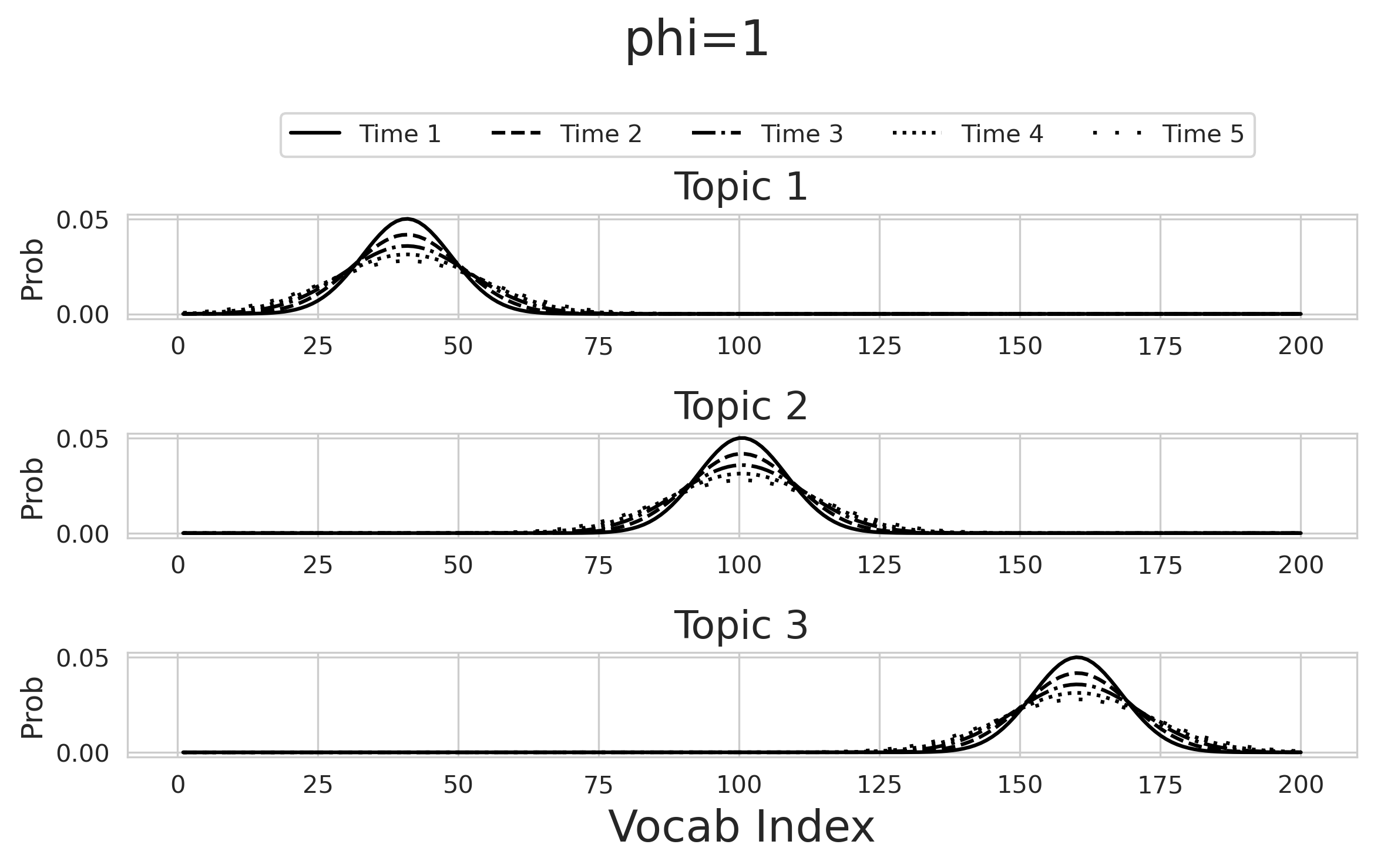}
    \caption{\small An illustration of simulated topic distributions. The number of topics is 3, number of stage is 5, number of word is 200, and the longitudinal dependency $\phi$ is set to 0 (left), 0.5 (middle), and 1 (right).}
  \label{fig:simulated_topics}
\end{figure}

After constructing the ground-true topic distributions, and simulating the topic proportions and documents same way as the main text, we summarize the results from 20 repeated experiments in Table \ref{tab:topic-change-K3} ($K=3$, $T=5$) and Table \ref{tab:topic-change-K5} ($K=5$, $T=5$). By comparing with individual competing method, we obtain the following three observations. First, our method still has the best-performing results, though the improvement rate decreases as the number of topics increases and the topics have larger transition variance. This is expected since more latent topics increases the number of parameters need to be estimated, and a larger transition variance indicates a greater deviation from the time-consistent topics assumption. However, our model exhibits robustness to slight misspecifications in consistent topic assumptions.

Secondly, the mdLDA method is sensitive to the selection of topic transition prior. In simulations, we provide mdLDA-true model with a near-optimal transition prior and demonstrate that a substantial improvement can be made to the mdLDA model with the default prior. However, finding such an optimal transition prior could be challenging in practice. Additionally, without the incorporation of document metadata, the mdLDA-true model fails to account for individual heterogeneity, and thus, under-performs our proposed method.

In addition, SCHOLAR and LDA, in general, have the next best-performing results. Since both methods are single-stage, the estimated topics do not depend on the results from the first stage and can be more adaptive to each individual stage. However, one has to manually categorize the obtained topics to the same theme, which imposes additional difficulties when the number of topics and stages increases and the topics are not visually separable.

\begin{table}[H]
    \centering
    \def\arraystretch{1.35}%
    \resizebox{\textwidth}{!}{
\begin{tabular}{|l|l|lllllll|l|}
\hline
metrics & $\phi$ &        HCF-DTM &           mdLDA  &      mdLDA-true &        SCHOLAR &            LDA &            SLDA &         prodLDA & imp-rate \\
\hline
KL & 0.0 &  \textbf{3.879} (1.10) &  19.318 (3.07) &  11.924 (1.66) &  7.561 (1.05) &  13.965 (2.64) &  17.118 (0.46) &  22.074 (0.02) &   48.70\% \\
          & 0.5 &  \textbf{3.263} (0.88) &  15.091 (3.98) &  10.282 (2.05) &  5.812 (1.08) &  11.404 (2.39) &  13.970 (0.47) &  19.029 (0.02) &   43.86\% \\
          & 1.0 &  \textbf{3.310} (1.26) &  13.685 (3.80) &   9.558 (2.29) &  4.799 (0.89) &   9.568 (1.95) &  10.137 (0.23) &  16.427 (0.02) &   31.03\% \\ \hline
coherence & 0.0 &  \textbf{3.208} (0.05) &   1.028 (0.32) &   2.075 (0.26) &  2.489 (0.11) &   2.424 (0.33) &  -0.909 (0.23) &  -1.120 (0.05) &   28.89\% \\
          & 0.5 &  \textbf{2.822} (0.06) &   0.954 (0.39) &   1.758 (0.22) &  2.175 (0.10) &   1.880 (0.32) &  -0.748 (0.17) &  -1.145 (0.06) &   29.75\% \\
          & 1.0 &  \textbf{2.514} (0.07) &   0.944 (0.50) &   1.429 (0.30) &  1.973 (0.08) &   1.602 (0.36) &  -0.994 (0.12) &  -1.071 (0.08) &   27.42\% \\ \hline
perplexity & 0.0 &  \textbf{3.681} (0.01) &   4.047 (0.04) &   3.921 (0.06) &  4.152 (0.03) &   3.889 (0.13) &   9.210 (1.08) &   5.301 (0.00) &    5.35\% \\
          & 0.5 &  \textbf{3.868} (0.01) &   4.094 (0.04) &   4.079 (0.05) &  4.266 (0.03) &   4.068 (0.12) &   8.646 (0.90) &   5.303 (0.00) &    4.92\% \\
          & 1.0 &  \textbf{4.029} (0.01) &   4.132 (0.05) &   4.209 (0.06) &  4.348 (0.03) &   4.203 (0.11) &   8.218 (0.82) &   5.301 (0.00) &    2.49\% \\ \hline
dominant\_acc & 0.0 &  \textbf{0.965} (0.01) &   0.565 (0.18) &   0.942 (0.11) &  0.929 (0.01) &   0.948 (0.06) &   0.309 (0.32) &   0.307 (0.11) &    1.79\% \\
          & 0.5 &  \textbf{0.963} (0.01) &   0.509 (0.21) &   0.942 (0.11) &  0.924 (0.02) &   0.837 (0.12) &   0.283 (0.20) &   0.306 (0.09) &    2.23\% \\
          & 1.0 &  \textbf{0.965} (0.01) &   0.546 (0.21) &   0.935 (0.14) &  0.929 (0.03) &   0.770 (0.16) &   0.279 (0.23) &   0.353 (0.09) &    3.21\% \\ \hline
group\_acc & 0.0 &  \textbf{0.973} (0.01) &   0.963 (0.01) &   0.963 (0.01) &  0.940 (0.01) &   0.960 (0.01) &   0.962 (0.01) &   0.808 (0.04) &    1.04\% \\
          & 0.5 &  \textbf{0.974} (0.01) &   0.963 (0.01) &   0.962 (0.01) &  0.940 (0.01) &   0.961 (0.01) &   0.963 (0.01) &   0.812 (0.07) &    1.14\% \\
          & 1.0 &  \textbf{0.974} (0.01) &   0.963 (0.01) &   0.962 (0.01) &  0.940 (0.01) &   0.961 (0.01) &   0.963 (0.01) &   0.794 (0.04) &    1.14\% \\ \hline

\end{tabular}
}
\caption{\footnotesize Dynamically changing topic distribution: model performances when $K=3$ and $T=5$, where $\phi$ represents the magnitude of topic temporal transition variances (larger the greater). Standard errors are summarized in the parentheses next to the estimated means. The improvement rate compares HCF-DTM against the best performer.}
\label{tab:topic-change-K3}
\end{table}

\begin{table}[H]
    \centering
    \def\arraystretch{1.35}%
    \resizebox{\textwidth}{!}{
\begin{tabular}{|l|l|lllllll|l|}
\hline
metrics & $\phi$ &        HCF-DTM &           mdLDA  &      mdLDA-true &        SCHOLAR &            LDA &            SLDA &         prodLDA & imp-rate \\
\hline
KL & 0.0 &  \textbf{14.721} (1.26) &  24.984 (2.20) &  15.259 (2.54) &  16.874 (0.94) &  17.088 (2.22) &  22.761 (1.59) &  27.001 (0.01) &    3.53\% \\
          & 0.5 &  \textbf{12.479} (1.21) &  22.190 (3.73) &  13.312 (2.01) &  13.412 (1.05) &  15.196 (2.43) &  18.515 (1.27) &  24.478 (0.01) &    6.26\% \\
          & 1.0 &  \textbf{11.028} (1.21) &  18.566 (3.35) &  11.828 (3.19) &  11.434 (1.18) &  13.537 (2.45) &  14.997 (1.57) &  22.222 (0.01) &    3.55\% \\ \hline
coherence & 0.0 &   \textbf{2.025} (0.26) &   0.541 (0.26) &   1.361 (0.29) &   1.752 (0.17) &   1.971 (0.14) &  -0.335 (0.16) &  -0.884 (0.04) &    2.74\% \\
          & 0.5 &   \textbf{1.738} (0.33) &   0.614 (0.38) &   1.322 (0.20) &   1.535 (0.17) &   1.657 (0.17) &  -0.555 (0.08) &  -0.869 (0.04) &    4.89\% \\
          & 1.0 &   \textbf{1.563} (0.28) &   0.579 (0.36) &   1.127 (0.19) &   1.284 (0.18) &   1.408 (0.22) &  -0.521 (0.08) &  -0.830 (0.05) &   11.01\% \\ \hline
perplexity & 0.0 &   \textbf{3.497} (0.01) &   3.808 (0.05) &   3.585 (0.08) &   4.020 (0.03) &   3.593 (0.07) &   9.163 (0.96) &   5.300 (0.00) &    2.45\% \\
          & 0.5 &   \textbf{3.681} (0.01) &   3.795 (0.04) &   3.747 (0.06) &   4.118 (0.03) &   3.764 (0.06) &   8.661 (0.85) &   5.299 (0.00) &    1.76\% \\
          & 1.0 &   \textbf{3.831} (0.01) &   3.881 (0.03) &   3.862 (0.04) &   4.210 (0.03) &   3.891 (0.06) &   8.415 (0.81) &   5.301 (0.00) &    0.80\% \\ \hline
dominant\_acc & 0.0 &   \textbf{0.905} (0.10) &   0.291 (0.17) &   0.546 (0.27) &   0.490 (0.09) &   0.806 (0.22) &   0.152 (0.15) &   0.196 (0.09) &   12.28\% \\
          & 0.5 &   \textbf{0.872} (0.13) &   0.314 (0.21) &   0.476 (0.34) &   0.502 (0.10) &   0.703 (0.23) &   0.182 (0.15) &   0.224 (0.08) &   24.04\% \\
          & 1.0 &   \textbf{0.865} (0.13) &   0.327 (0.18) &   0.551 (0.32) &   0.477 (0.07) &   0.669 (0.24) &   0.199 (0.14) &   0.194 (0.10) &   29.30\% \\ \hline
group\_acc & 0.0 &   \textbf{0.983} (0.01) &   0.963 (0.01) &   0.964 (0.01) &   0.937 (0.01) &   0.960 (0.01) &   0.964 (0.01) &   0.915 (0.02) &    1.97\% \\
          & 0.5 &   \textbf{0.984} (0.01) &   0.964 (0.01) &   0.964 (0.01) &   0.937 (0.01) &   0.959 (0.01) &   0.964 (0.01) &   0.895 (0.04) &    2.07\% \\
          & 1.0 &   \textbf{0.984} (0.01) &   0.964 (0.00) &   0.963 (0.00) &   0.938 (0.01) &   0.960 (0.01) &   0.964 (0.01) &   0.906 (0.02) &    2.07\% \\ \hline

\end{tabular}
}
\caption{\footnotesize Dynamically changing topic distribution: model performances when $K=5$ and $T=5$, where $\phi$ represents the magnitude of topic temporal transition variances (larger the greater). The improvement rate compares HCF-DTM against the best performer.}
\label{tab:topic-change-K5}
\end{table}

\subsubsection{Increasing number of groups} Our proposed method introduces counterfactual topic distributions to account for group-wise heterogeneity. By maximizing the inter-distributional distances during the optimization procedure,
we aim to disentangle the group identities directly from the estimated topics. In previous simulations, we only consider the cases with two groups and utilize the mutual information (MI) as the distance measure. Here, we extend experiments to the multi-group setting. 

Suppose there are $G$ number of groups and Subject $i$ belongs to Group 1, i.e., $Y=1$. In this case, the original topic distribution for Subject $i$ is $Q(\theta_i \;|\; W_i, X_i, Y_i=1)$, while the counterfactual distributions are $Q(\theta_i | W_i, X_i, Y_i=2),...,Q(\theta_i \;|\; W_i, X_i, Y_i=G)$. Currently, there are two ways to augment the inter-distributional distances: information radius \citep{sibson1969information} and average divergence score \citep{sgarro1981informational}. The information radius considers the joint distribution $M \doteq \sum_{g=1}^G Q(\theta_i \;|\; W_i, X_i, Y_i=g)$ and calculates the total divergence to the average distribution via,
\begin{align}
    &\text{Dist}\{Q(\theta_i \;|\; W_i, X_i, Y_i=1) \;||\; Q(\theta_i | W_i, X_i, Y_i=2),...,Q(\theta_i \;|\; W_i, X_i, Y_i=G)\} \notag \\
    &= \sum_{g=1}^G \frac{1}{G} \cdot \mathbb{KL}(Q(\theta_i \;|\; W_i, X_i, Y_i=g) \;||\; M),
    \label{eq:KL-tot}
\end{align}
while the average divergence score computes the averaged divergence to the reference (original) distribution as,
\begin{align}
    &\text{Dist}\{Q(\theta_i \;|\; W_i, X_i, Y_i=1) \;||\; Q(\theta_i | W_i, X_i, Y_i=2),...,Q(\theta_i \;|\; W_i, X_i, Y_i=G)\} \notag \\
    &= \sum_{g=2}^G \frac{1}{G-1} \cdot \mathbb{KL}(Q(\theta_i \;|\; W_i, X_i, Y_i=1) \;||\; Q(\theta_i \;|\; W_i, X_i, Y_i=g)). 
    \label{eq:KL-ave}
\end{align}
Specifically, in a two-group scenario, the information radius is equivalent to the mutual information (MI) or Jensen Shannon divergence (JSD). In the following, we conduct simulations when the number of groups is set to 2, 3, and 4, and the distance metrics are calculated via information radius. According to Table \ref{tab:number-groups}, our method still reaches the highest group-wise separation accuracy, though the improvement margin decreases as more groups are involved. It is worth noting that sLDA reaches the next-highest group-wise accuracy. However, because sLDA relies on an additional classifier, there is no guarantee that a strong group classification result indicates interpretable and separable underlying topic distributions. In contrast, our model directly maximizes group-wise heterogeneity on the topic distribution level, and thus retains a high level of interpretation and topics recovery rate compared to other methods.
\begin{table}[H]
    \centering
    \def\arraystretch{1.4}%
    \resizebox{0.95\textwidth}{!}{
\begin{tabular}{|l|l|lllllll|l|}
\hline
metrics & G &        HCF-DTM &           DTM  & DTM-true  &      SCHOLAR &            LDA &            SLDA &         prodLDA & imp-rate \\
\hline
KL & 2 &  \textbf{12.536} (3.59) &  30.741 (2.80) &  22.608 (2.40) &  20.857 (1.22) &  23.322 (1.88) &  31.850 (0.99) &  31.806 (0.01) &   39.90\% \\
          & 3 &   \textbf{9.347} (4.23) &  30.072 (2.85) &  17.786 (1.83) &  19.277 (1.57) &  22.135 (2.04) &  30.736 (1.03) &  31.803 (0.01) &   47.45\% \\
          & 4 &   \textbf{5.514} (1.78) &  26.819 (2.37) &  16.011 (3.10) &  18.449 (1.38) &  20.993 (2.46) &  31.101 (0.37) &  31.810 (0.01) &   65.56\% \\ \hline
coherence & 2 &   \textbf{3.193} (0.33) &   0.771 (0.27) &   1.653 (0.33) &   2.999 (0.13) &   2.507 (0.26) &  -0.496 (0.07) &  -1.036 (0.03) &    6.47\% \\
          & 3 &   \textbf{3.510} (0.33) &   0.909 (0.26) &   1.788 (0.36) &   2.945 (0.22) &   2.706 (0.34) &  -0.963 (0.03) &  -1.082 (0.04) &   19.19\% \\
          & 4 &   \textbf{3.830} (0.12) &   1.454 (0.29) &   2.087 (0.41) &   3.359 (0.21) &   2.875 (0.43) &  -0.947 (0.03) &  -1.113 (0.04) &   14.02\% \\ \hline
perplexity & 2 &   \textbf{3.492} (0.02) &   3.789 (0.04) &   3.641 (0.05) &   4.245 (0.03) &   3.661 (0.06) &   9.624 (0.87) &   5.300 (0.00) &    4.09\% \\
          & 3 &   \textbf{3.476} (0.02) &   3.876 (0.04) &   3.730 (0.05) &   4.216 (0.03) &   3.709 (0.07) &   9.383 (0.66) &   5.301 (0.00) &    6.28\% \\
          & 4 &   \textbf{3.468} (0.03) &   3.710 (0.10) &   3.757 (0.10) &   4.157 (0.02) &   3.732 (0.07) &   9.238 (0.38) &   5.300 (0.00) &    6.52\% \\ \hline
dominant\_acc & 2 &   \textbf{0.909} (0.05) &   0.346 (0.15) &   0.790 (0.18) &   0.710 (0.06) &   0.821 (0.12) &   0.233 (0.14) &   0.178 (0.07) &   10.72\% \\
          & 3 &   \textbf{0.934} (0.05) &   0.304 (0.13) &   0.630 (0.18) &   0.785 (0.07) &   0.838 (0.12) &   0.173 (0.15) &   0.191 (0.07) &   11.46\% \\
          & 4 &   \textbf{0.960} (0.03) &   0.395 (0.10) &   0.617 (0.17) &   0.786 (0.07) &   0.852 (0.11) &   0.212 (0.14) &   0.197 (0.07) &   12.68\% \\ \hline
group\_acc & 2 &   \textbf{0.951} (0.02) &   0.855 (0.01) &   0.860 (0.01) &   0.807 (0.01) &   0.845 (0.03) &   0.861 (0.01) &   0.811 (0.02) &   10.45\% \\
          & 3 &   \textbf{0.866} (0.03) &   0.791 (0.01) &   0.799 (0.01) &   0.760 (0.01) &   0.764 (0.03) &   0.802 (0.01) &   0.727 (0.02) &    7.98\% \\
          & 4 &   \textbf{0.815} (0.02) &   0.756 (0.01) &   0.758 (0.01) &   0.747 (0.01) &   0.708 (0.04) &   0.770 (0.01) &   0.658 (0.02) &    5.84\% \\ \hline
\end{tabular}
}
\caption{\footnotesize Increasing number of groups: model performances when $K=5$, $T=5$, and $\phi=0$, where $G$ represents the number of groups. Standard errors are summarized in the parentheses next to the estimated means. The improvement rate compares HCF-DTM against the best performer of the competing methods.}
\label{tab:number-groups}
\end{table}

\subsubsection{Specifications of distance metrics} 

An important component in our group-wise separation procedure is to calculate the distances between counterfactual distributions. As a result, specifications of distance metrics are crucial. In this ablation study, we examine the sensitivities of different distance metrics and test the significance of including such distance metrics. Specifically, in additional to the KL-divergence scores in equations \eqref{eq:KL-tot} and \eqref{eq:KL-ave}, we consider the Manhattan distance (L1-norm), Euclidean distance (L2-norm), and Chebyshev distance (supremum-norm). To signify our model without the distance metrics, we remove the group-wise distance component from our objective function and denote this model as the ``None'' model. In this simulation setting, we also reduce the group effects from the generating process and fix four groups ($G=4$) to examine models' group-wise separation abilities when there is minimal signal in the topic proportions indicating group-wise differences. Simulation results are summarized in Table \ref{tab:distance-ablation}.

As shown, the incorporation of distance metrics outperforms our model without the group-wise component (``None'') at most by 4\% in the group-wise classification accuracy, and meanwhile, generally improves the coherence scores. This result indicates that maximizing the inter-distributional distances can indeed improve group separation and topic interpretations. In addition, by comparing against the KL-type distance metrics, the distance norms can further improve the KL-divergence of the proposed method. One possible reason is that these norm metrics only maximize the distances between variational topic posterior means, which has a more straightforward calculation on the latent topic spaces. KL-type metrics, on the other hand, measures distribution-wise distances and additionally utilize the noisy variational posterior variance. Nevertheless, KL-type metrics improves the interpretation and group-wise separation between topics and is more suitable for the variational Bayes setting where the posterior distribution is attainable.

\begin{table}[H]
    \centering
    \def\arraystretch{1.3}%
    \resizebox{0.9\textwidth}{!}{
\begin{tabular}{|l|lllll|}
\hline
Distance &  KL &     coherence &    perplexity &  dominant\_acc &     group\_acc \\
\hline
None        &  13.444 (3.01) &  3.076 (0.27) &  \textbf{3.880} (0.03) &  0.790 (0.08) &  0.794 (0.01) \\ \hline \hline
KL-radius   &  13.599 (2.90) &  \textbf{3.134} (0.25) &  3.889 (0.03) &  \textbf{0.823} (0.07) &  0.823 (0.01) \\ \hline
KL-averaged &  13.717 (2.99) &  3.118 (0.28) &  3.890 (0.03) &  0.809 (0.08) &  \textbf{0.833} (0.02) \\ \hline
L1          &  13.424 (2.80) &  3.108 (0.27) &  3.881 (0.02) &  0.789 (0.07) &  0.829 (0.01) \\ \hline
L2          &  13.566 (2.86) &  3.088 (0.27) &  3.881 (0.02) &  0.782 (0.07) &  0.809 (0.01) \\ \hline
L-inf       &  \textbf{13.173} (2.75) &  3.123 (0.26) &  \textbf{3.880} (0.02) &  0.794 (0.07) &  0.802 (0.01) \\ \hline
\end{tabular}
}
\caption{\footnotesize Distance metrics ablation studies: Proposed model performances when $K=5$, $T=5$, $\phi=0$, and $G=4$. Standard errors are summarized in the parentheses next to the estimated means.}
\label{tab:distance-ablation}
\end{table}

\subsubsection{Sensitivity analysis of group effects} Our proposed topic separation approach maximizes group-wise heterogeneity to better reveal distinct trends between groups. Under normal scenarios, the definition of groups indicates that there exist some level of group effects/heterogeneity. However, in this study, we discuss our model performance when this group effect is minimal. Specifically, we remove the direct group membership effect from Equation (9) specified in our main text, and only consider covariates main effects in the topic proportion generative process below, i.e.,
\begin{equation}
\theta_{t,i,k} = \sigma(\underbrace{\gamma_{t,k}^m \cdot X_{i,t,.}}_{\text{covariates main effect}} + \underbrace{\gamma_t^\theta \cdot \theta_{t-1,i,k}}_{\text{dependency from previous $\theta$}}).
\end{equation}

According to the results shown in Table \ref{tab:Group-effects}, we observe our method has a significant performance drop in KL divergence, and some level of performance decrease among other metrics when the group disparity is removed. This result is expected as our method is designed to maximize group-wise heterogeneity, which, in this case, the group-wise signal does not exist. However, surprisingly, our method still remains a high level of group-wise classification accuracy. While other methods have about $50\%$ group classification accuracy, ours is able to reach $72.9\%$. This suggests that maximizing group-wise heterogeneity on the topic proportions can still efficiently improve differentiating group memberships.  

\begin{table}[H]
    \centering
    \def\arraystretch{1.4}%
    \resizebox{0.95\textwidth}{!}{
\begin{tabular}{|l|l|lllllll|l|}
\hline
metrics & Group effects &        HCF-DTM &           mdLDA &      mdLDA-true &        SCHOLAR &            LDA &            SLDA &         prodLDA & imp-rate \\
\hline
KL & True &  \textbf{3.879} (1.10) &  19.318 (3.07) &  11.924 (1.66) &           7.561 (1.05) &  13.965 (2.64) &  17.118 (0.46) &  22.074 (0.02) &   48.70\% \\
          & False &  \textbf{5.826} (0.65) &  18.056 (3.70) &   9.603 (3.71) &           5.845 (0.52) &  15.941 (1.57) &  25.960 (0.14) &  22.070 (0.02) &    0.33\% \\ \hline
coherence & True &  \textbf{3.208} (0.05) &   1.028 (0.32) &   2.075 (0.26) &           2.489 (0.11) &   2.424 (0.33) &  -0.909 (0.23) &  -1.120 (0.05) &   28.89\% \\
          & False &  \textbf{2.700} (0.03) &   1.091 (0.25) &   1.716 (0.31) &           1.380 (0.08) &   2.035 (0.38) &  -1.228 (0.05) &  -1.262 (0.09) &   32.68\% \\ \hline
perplexity & True &  \textbf{3.681} (0.01) &   4.047 (0.04) &   3.921 (0.06) &           4.152 (0.03) &   3.889 (0.13) &   9.210 (1.08) &   5.301 (0.00) &    5.35\% \\
          & False &  \textbf{4.223} (0.01) &   4.413 (0.03) &   4.387 (0.05) &           4.525 (0.01) &   4.420 (0.04) &   7.518 (0.04) &   5.301 (0.00) &    3.74\% \\ \hline
dominant\_acc & True &  \textbf{0.965} (0.01) &   0.565 (0.18) &   0.942 (0.11) &           0.929 (0.01) &   0.948 (0.06) &   0.309 (0.32) &   0.307 (0.11) &    1.79\% \\
          & False &           0.963 (0.01) &   0.469 (0.13) &   0.780 (0.15) &  \textbf{0.968} (0.02) &   0.800 (0.12) &   0.272 (0.16) &   0.338 (0.03) &   -0.52\% \\ \hline
group\_acc & True &  \textbf{0.973} (0.01) &   0.963 (0.01) &   0.963 (0.01) &           0.940 (0.01) &   0.960 (0.01) &   0.962 (0.01) &   0.808 (0.04) &    1.04\% \\
          & False &  \textbf{0.729} (0.14) &   0.499 (0.01) &   0.502 (0.01) &           0.499 (0.01) &   0.500 (0.01) &   0.503 (0.01) &   0.499 (0.01) &   44.93\% \\ \hline
\end{tabular}
}
\caption{\footnotesize Group-effects analysis: model performances when $K=3$, $T=5$, $\phi=0$, and $G=2$. Standard errors are summarized in the parentheses next to the estimated means. The improvement rate compares HCF-DTM against the best performer of the competing methods.}
\label{tab:Group-effects}
\end{table}

\subsection{Extension to the dynamically-changing topics} \label{A:sim:extension}

Our proposed HCF-DTM is built upon the time-consistent topic assumption, which is useful to uncover long-lasting topics and provide readily usable topic proportions to track the evolution trend of each topic with guaranteed consistent theme meanings. However, admittedly, in a more general use case, topics tend to evolve across time. In this section, we provide a brief overview of extending our method to adapt a dynamically-changing topic scenario.

To account for evolving topics, we can adopt the topic generative prior $\beta_t \sim \mathcal{N}(\beta_{t-1}, \sigma_0^2)$ as in the multistage dynamic LDA \citep{blei2006dynamic} and incorporate the sampled variational topics, i.e., $\Tilde{\beta}_t = \mu_t + \sigma_t \cdot \epsilon_t$, where $\epsilon_t \sim \mathcal{N}(0,1)$, and ($\mu_t, \sigma_t$) are the mean and scale parameters for the variational normal distribution. Then, the reconstruction error component of the derived evidence lower bound now becomes:
\begin{equation}
    \sum_{t=1}^T \mathop{\mathbb{E}}_{\theta_t \sim q_{\psi_t}, \beta_t \sim    q_{(mu_t, \sigma_t}} (\log P(\bm{w}_t| \theta_t, \beta_t)).
    \label{eq:reconstruction-changing-topics-new}
\end{equation}
Additionally, to make the generated topic pertain to its generative prior, we minimize their multistage KL divergence, similarly to the topic proportions, i.e.,
\begin{equation}
    \mathbb{KL} ( q(\beta_1; \mu_1, \sigma_1) \;||\; p(\beta_1)) + \sum_{t=2}^T  \mathbb{E}_{\beta_{t-1} \sim q_{\mu_{t-1}, \sigma_{t-1}}}  \mathbb{KL} ( q(\beta_t| \mu_t, \sigma_t) \;||\; p(\beta_t | \beta_{t-1}, \sigma_0^2)).
    \label{eq:KL-changing-topics-new}
\end{equation}
By plugging equations \eqref{eq:reconstruction-changing-topics-new} and \eqref{eq:KL-changing-topics-new} to our original optimization bound, we obtain the optimization bound for evolving topics. 

Next, to demonstrate our HCF-DTM with evolving topics, we conduct simulations under the same setting as Section \ref{A:sim:break-consistent}. The results are summarized in Table \ref{tab:HCF-DTM-D}, where $\sigma^2_0$ of dynamic topic models are grid-searched among $\{10^{-3},0.01,0.1,0.5,1\}$. Notably, our method with consistent topics still has the best-performing results due to the minimal number of topic parameters need to estimate (i.e., $\beta$ v.s. $\{\beta_{t}\}_{t=1}^T$). On the other hand, when comparing our model with changing topics against the mdLDA, our model improves the interpretability of the topics, indicated by higher coherence and dominant topic accuracy scores. However, the inclusion of additional topics variational parameters increases the complexity of optimization procedure, leading to a larger perplexity score and notable performance differences compared to the HCF-DTM with time-consistent topics. This simulation further demonstrates that assuming $\sigma_0^2=0$ for all topics can simplify our model optimization with one set of topics, and meanwhile, align with our specific application needs.

\begin{table}[H]
    \centering
    \def\arraystretch{1.1}%
    \resizebox{0.8\textwidth}{!}{
\begin{tabular}{|l|l|lllll|}
\hline
method &  $\sigma_0^2$ &            KL &     coherence &  dominant\_acc &     group\_acc &    perplexity \\ \hline
 HCF-DTM &  0 &  3.263 (0.88) &  2.822 (0.06) &  0.963 (0.01) &  0.974 (0.01) &  3.868 (0.01) \\\hline
HCF-DTM & 1e-3 &  15.097 (2.47) &  1.476 (0.60) &  0.769 (0.27) &  0.950 (0.01) &  4.983 (0.34) \\
mdLDA & 1e-3 & 15.091 (3.98) &  0.954 (0.39) &  0.509 (0.21) &  0.963 (0.01) &  4.094 (0.04) \\ \hline
\end{tabular}
}
\caption{\footnotesize Dynamically changing topic distribution: model performances when $K=3$, $T=5$, $\phi=0.5$, and $G=2$. Standard errors are summarized in the parentheses next to the estimated means.}
\label{tab:HCF-DTM-D}
\end{table}

\subsection{Additional simulation tables}
This section includes more model performance results to Section 5 in our main text. Under the same simulation setting with non-linear generative prior functions, we present all dominant topic matching accuracy and document group membership prediction accuracy in Table \ref{tab:appendix-dominant-acc} and \ref{tab:appendix-group-acc} respectively. 

\begin{table}[hbt]
    \centering
        \def\arraystretch{1.5}%
    \resizebox{0.9\textwidth}{!}{
\begin{tabular}{|l|l|lllllll|}
\hline
  K & T &             LDA &            sLDA &         prodLDA &         SCHOLAR &            mdLDA &                   HCF-DTM &          Imp-rate \\
\hline
3 & 3 &  0.856 (0.146) &  0.254 (0.303) &  0.307 (0.089) &  0.640 (0.083) &  0.721 (0.165) &  \textbf{0.947} (0.010) &  \textbf{10.631}\% \\
  & 5 &  0.855 (0.140) &  0.287 (0.261) &  0.355 (0.089) &  0.700 (0.069) &  0.646 (0.194) &  \textbf{0.944} (0.008) &  \textbf{10.409}\% \\
  & 8 &  0.847 (0.141) &  0.236 (0.220) &  0.351 (0.068) &  0.743 (0.039) &  0.552 (0.106) &  \textbf{0.945} (0.013) &  \textbf{11.570}\% \\ \hline
5 & 3 &  0.754 (0.155) &  0.236 (0.285) &  0.174 (0.098) &  0.578 (0.085) &  0.723 (0.136) &  \textbf{0.854} (0.122) &  \textbf{13.263}\% \\
  & 5 &  0.745 (0.178) &  0.201 (0.248) &  0.231 (0.082) &  0.547 (0.063) &  0.578 (0.113) &  \textbf{0.871} (0.088) &  \textbf{16.913}\% \\
  & 8 &  0.751 (0.189) &  0.261 (0.247) &  0.224 (0.078) &  0.568 (0.079) &  0.502 (0.142) &  \textbf{0.896} (0.048) &  \textbf{19.308}\% \\ \hline
8 & 3 &  0.681 (0.233) &  0.113 (0.150) &  0.167 (0.076) &  0.399 (0.088) &  0.731 (0.137) &  \textbf{0.904} (0.104) &  \textbf{23.666}\% \\
  & 5 &  0.674 (0.206) &  0.115 (0.190) &  0.111 (0.059) &  0.351 (0.084) &  0.586 (0.119) &  \textbf{0.855} (0.087) &  \textbf{26.855}\% \\
  & 8 &  0.668 (0.207) &  0.108 (0.116) &  0.124 (0.057) &  0.405 (0.082) &  0.428 (0.073) &  \textbf{0.845} (0.051) &  \textbf{26.497}\% \\
\hline
\end{tabular}
}
\caption{Dominant topic accuracy from estimated topic proportions when the generative prior function is non-linear. Standard errors are summarized in the parentheses next to the estimated means. The improvement rate compares HCF-DTM against the best performer of the competing methods.}
\label{tab:appendix-dominant-acc}
\end{table}

\begin{table}[hbt]
    \centering
        \def\arraystretch{1.5}%
    \resizebox{0.9\textwidth}{!}{
\begin{tabular}{|l|l|lllllll|}
\hline
  K & T &             LDA &            sLDA &         prodLDA &         SCHOLAR &            mdLDA &                   HCF-DTM &          Imp-rate \\
\hline
3 & 3 &  0.846 (0.014) &  0.855 (0.009) &  0.703 (0.049) &  0.784 (0.014) &  0.853 (0.010) &  \textbf{0.906} (0.035) &   \textbf{5.965}\% \\
  & 5 &  0.843 (0.016) &  0.852 (0.008) &  0.722 (0.037) &  0.787 (0.017) &  0.851 (0.009) &  \textbf{0.895} (0.038) &   \textbf{5.047}\% \\
  & 8 &  0.841 (0.021) &  0.853 (0.009) &  0.721 (0.025) &  0.800 (0.011) &  0.853 (0.010) &  \textbf{0.895} (0.027) &   \textbf{4.924}\% \\ \hline
5 & 3 &  0.843 (0.010) &  0.858 (0.010) &  0.798 (0.026) &  0.822 (0.012) &  0.858 (0.009) &  \textbf{0.932} (0.025) &   \textbf{8.625}\% \\
  & 5 &  0.842 (0.012) &  0.855 (0.009) &  0.790 (0.019) &  0.820 (0.010) &  0.856 (0.010) &  \textbf{0.908} (0.028) &   \textbf{6.075}\% \\
  & 8 &  0.849 (0.009) &  0.862 (0.009) &  0.801 (0.017) &  0.821 (0.009) &  0.862 (0.008) &  \textbf{0.905} (0.030) &   \textbf{4.988}\% \\ \hline
8 & 3 &  0.843 (0.010) &  0.857 (0.008) &  0.802 (0.022) &  0.825 (0.010) &  0.858 (0.009) &  \textbf{0.975} (0.010) &  \textbf{13.636}\% \\
  & 5 &  0.847 (0.011) &  0.860 (0.009) &  0.809 (0.016) &  0.833 (0.011) &  0.861 (0.010) &  \textbf{0.931} (0.028) &   \textbf{8.130}\% \\
  & 8 &  0.845 (0.012) &  0.861 (0.009) &  0.807 (0.016) &  0.827 (0.009) &  0.860 (0.008) &  \textbf{0.901} (0.028) &   \textbf{4.646}\% \\
\hline
\end{tabular}
}
\caption{Group-membership accuracy from estimated topic proportions when the generative prior function is non-linear. Standard errors are summarized in the parentheses next to the estimated means. The improvement rate compares HCF-DTM against the best performer of the competing methods.}
\label{tab:appendix-group-acc}
\end{table}

Similarly to the results established in Sections \blue{5.2} and \blue{5.3}, both tables show that the proposed method outperform the rest of the competing methods. Specifically, Table \ref{tab:appendix-dominant-acc} shows that, as the number of stages increases under each specification of number of latent topics, our margin of improvement in the dominant topic accuracy is also enlarged. This illustrates that our method can effectively estimate the topic proportions with the documents metadata and group-membership, especially when the inference task involves a large number of stages or latent topics. On the other hand, we observe that the improvement rate of the document group membership prediction accuracy decreases with the number of stages based on Table \ref{tab:appendix-group-acc}. This decline suggests a reduction in the group-wise heterogeneity effect but a strengthened longitudinal dependencies among the topics over time. However, despite the decrease in the improvement rate, the proposed method still reaches the best performance and further improves the accuracy with more latent topics under the same number of stages.

Next, we present the empirical KL divergence results when the generative prior is linear in Table \ref{tab:appendix-KL}, Once again, our method showcases its best-performing capability of reducing KL divergence in every scenario. However, compared to the non-linear results, the improvement rate is slightly smaller in the linear setting. This is due to the reduced functional complexity leading to a more competitive result from the competing methods. Nevertheless, as the number of stages increases, the inference complexity rebounds, making our improvement rate similar to the ones under the non-linear setting.

\begin{table}[h]
    \centering
        \def\arraystretch{1.5}%
    \resizebox{0.9\textwidth}{!}{
\begin{tabular}{|l|l|lllllll|}
\hline
  K & T &             LDA &            sLDA &         prodLDA &         SCHOLAR &            mdLDA &                   HCF-DTM &          Imp-rate \\
\hline
3 & 3 &  15.414 (2.363) &  19.227 (0.184) &  22.070 (0.026) &  15.516 (0.516) &   8.804 (4.917) &   \textbf{4.206} (0.622) &  \textbf{52.226}\% \\
  & 5 &  15.372 (2.316) &  19.253 (0.247) &  22.069 (0.016) &  15.197 (0.484) &  12.131 (4.212) &   \textbf{4.096} (0.664) &  \textbf{66.235}\% \\
  & 8 &  15.402 (2.317) &  19.214 (0.273) &  22.065 (0.013) &  15.098 (0.653) &  14.006 (2.915) &   \textbf{3.739} (0.576) &  \textbf{73.304}\% \\ \hline
5 & 3 &  18.968 (1.944) &  23.124 (1.104) &  26.997 (0.014) &  22.084 (0.557) &  13.527 (3.219) &  \textbf{12.021} (1.969) &  \textbf{11.133}\% \\
  & 5 &  18.992 (1.863) &  23.029 (0.739) &  27.005 (0.013) &  21.853 (0.586) &  16.696 (2.395) &  \textbf{10.298} (2.554) &  \textbf{38.321}\% \\
  & 8 &  19.093 (1.784) &  22.901 (0.687) &  27.003 (0.009) &  21.632 (0.475) &  19.251 (2.403) &   \textbf{8.767} (3.103) &  \textbf{54.083}\% \\ \hline
8 & 3 &  23.579 (1.035) &  27.854 (0.892) &  31.007 (0.009) &  27.108 (0.343) &  24.038 (0.780) &  \textbf{19.101} (1.320) &  \textbf{18.991}\% \\ 
  & 5 &  23.463 (1.151) &  27.593 (0.633) &  30.999 (0.008) &  26.984 (0.288) &  25.488 (0.824) &  \textbf{14.566} (2.391) &  \textbf{37.919}\% \\
  & 8 &  23.511 (1.014) &  27.777 (0.501) &  31.000 (0.006) &  26.822 (0.270) &  26.165 (0.950) &  \textbf{12.130} (2.377) &  \textbf{48.407}\% \\
\hline
\end{tabular}
}
\caption{Empirical KL divergence between the estimated and actual topic word distributions when the generative prior function is linear. Standard errors are summarized in the parentheses next to the estimated means. The improvement rate compares HCF-DTM against the best performer of the competing methods.}
\label{tab:appendix-KL}
\end{table}

\section{Real data application}
\subsection{Data descriptive statistics}
The mental health clinical notes from CHOC records a comprehensive review of a patient, such as summary of past notes, medication history, lab test results and treatment plan. In our analysis, we focus on the text portion in the Background/Assessment section of the notes and augment with the patients' vital measurements from a structural database. A descriptive summary of the selected covariates can be found in Table \ref{tab:descriptive_statistics}. As shown, a majority of inpatient children are teenager girls with an averaged age 14.5 years old. In addition, we notice there are a lot of missing values among the vital recordings. To preprocess the metadata, we impute the missing covariate values by random forests \citep{tang2017random}. Together with the bag of 273 selected words, we prepared the input data to the proposed model.

\begin{table}[hbt]
    \centering
        \bgroup
    \def\arraystretch{1.05}%
    \resizebox{0.9\textwidth}{!}{
    \begin{tabular}{lllllll}
    \hline 
    & Count & Mean & Median & Min & Max & S.D. \\
    \hline
    \textbf{Demographics} & & & & & & \\ \hline
    Female & 3313 & 0.664 & 1 & 0 & 1 & 0.472 \\
    Ethnicity: Hispanic & 3313 & 0.459 & 0 & 0 & 1 & 0.498 \\
    Race: White & 3313 & 0.626 & 1 & 0 & 1 & 0.483 \\
    Race: Asian & 3313 & 0.070 & 0 & 0 & 1 & 0.255 \\
    Race: Black & 3313 & 0.051 & 0 & 0 & 1 & 0.220 \\
    Age & 2564 & 14.48 & 14.90 & 4.50 & 18.90 & 2.44 \\
    \hline
    \textbf{Basic Measurements} & & & & & & \\ \hline
    Weight & 2530 & 61.42 & 58.00 & 14.70 & 164.00 & 19.99 \\
    Height & 2366 & 158.40 & 160.50 & 15.70 & 195.00 & 18.12 \\
    Body Temperature & 2534 & 36.54 & 36.50 & 31.70 & 38.20 & 0.29 \\
    Heart Rate & 2540 & 84.48 & 84.00 & 52.40 & 137.00 & 9.99 \\
    Systolic Blood Pressure & 2539 & 114.29 & 113.70 & 83.00 & 151.00 & 8.50 \\
    Diastolic Blood Pressure & 2539 & 70.87 & 70.60 & 45.30 & 114.00 & 5.72 \\
    \hline
    \end{tabular}
}
    \caption{Descriptive statistics of all obtained covariates for inpatient children during their first visist.}
    \label{tab:descriptive_statistics}
    \egroup
\end{table}

\subsection{Verification of generative process assumption}

\subsubsection{Time-consistent topic assumption justification}

To validate if our motivated clinical data indeed contains time-invariant topics but with disparate proportions for different groups over time, we conduct preliminary data analysis using the popular multistage dynamic LDA (mdLDA) \citep{blei2006dynamic}. This method assumes the topics follow the generative process as follows,
\begin{equation}
    \beta_t \sim \mathcal{N}(\beta_{t-1}, \sigma_0^2),
\end{equation}
where $\sigma_0^2$ is specified as the topic transition variance prior. In our case, we assume $\sigma_0^2=0$. However, it is worth noting that the topic posterior distribution still can shift over time with the invariant topic transition prior, and the mdLDA is unable to utilize any document metadata, such as subjects' measurement and group information. Though not perfectly pertained to our need, mdLDA is sufficient as an explanatory data analysis tool and the results with three topics are displayed in Figure \ref{fig:explanatory-analysis}. As shown, each fitted topic contains consistent words throughout the five-year time stages, and can recover the common classic mental health topics, i.e., ``anxiety" and ``suicide." In addition, as the fitted topic posterior does not deviate much from the prior, we show that there exist several static topics in our data and the time-invariant topic assumption is justified under our research interests.
\begin{figure}[H]
    \centering
    \includegraphics[width=0.85\textwidth]{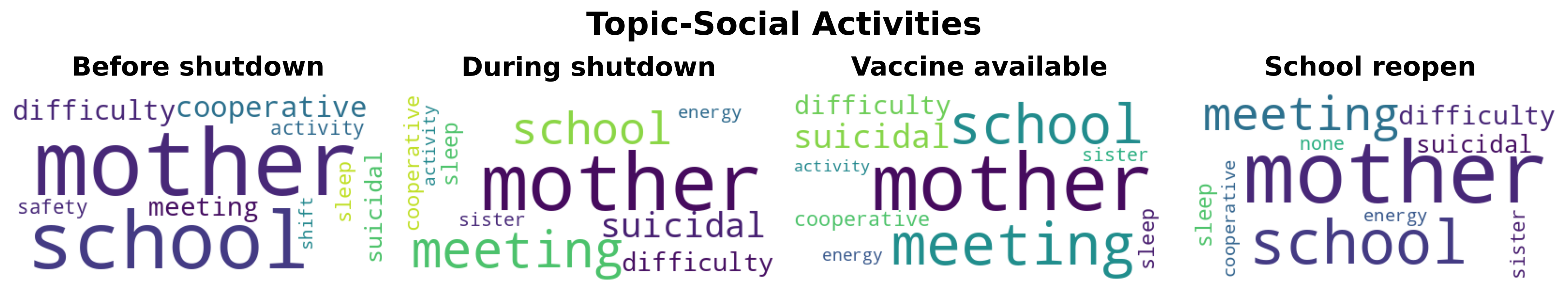}
    \includegraphics[width=0.85\textwidth]{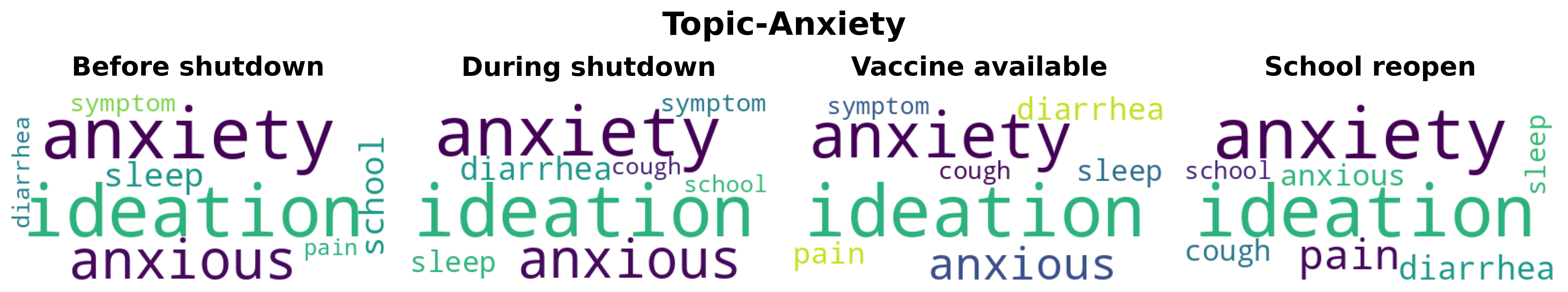}
    \includegraphics[width=0.85\textwidth]{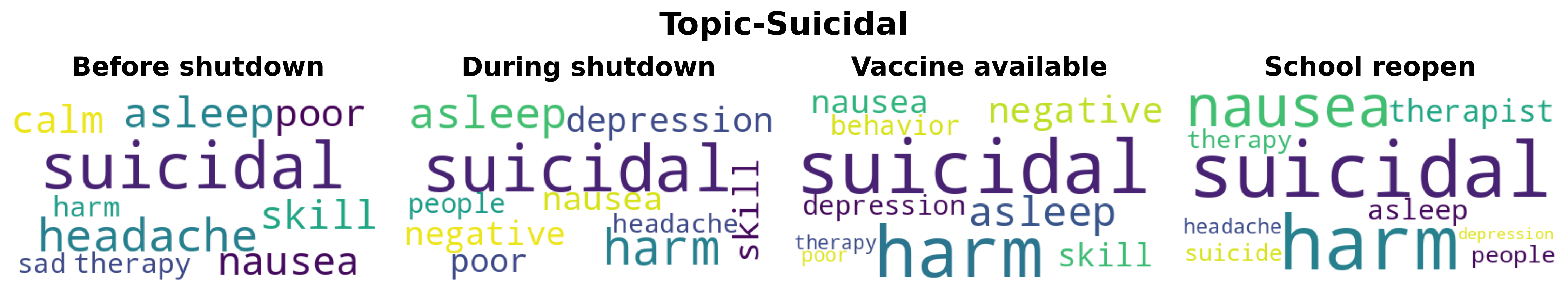}
    \caption{\footnotesize Three explanatory topics fitted by multistage dynamic LDA \citep{blei2006dynamic} with the topic transition variance prior specified as 0 ($\sigma^2_0=0$).}
    \label{fig:explanatory-analysis}
\end{figure}

To further justify the necessity of assuming time-consistent topics, we re-fit the mdLDA  with package default topics transition prior ($\sigma^2_0=0.1)$.  According to the results displayed in Figure \ref{fig:real-LLDA-editor}, it is noticeable that during the first three time points, topic 1 is about suicidal thoughts and anxiety, whereas topic 3 can be interpreted as family interactions. But, by the fourth time stage, the theme of topics 1 and 3 seems to be exchanged: words like ``family'' and ``school'' emerged in topic 1, and ``suicidal'' appeared in topic 3. This is one illustrating example due to the poor selection of the transition prior $\sigma^2_0$. Unfortunately, currently there is no better approach in the literature to choose the optimal transition prior. While some rely on metrics like coherence or perplexity scores to examine the goodness-of-fit for the model, these methods still require manual inspection of the word-clouds at each time stage to interpret and ensure the topics are under the same theme. As the number of stages and topics increases, the cumbersome nature of this process will inevitably impose a significant challenge in topic interpretations and representations.

In addition, even if one successfully extracts consistent themes from the latent topics, it is still challenging to interpret the topic proportions at each time stage. For example, suppose we observe a marginal upward trend in the topic proportions within topic 1 during the first two time stages. However, considering that the word-cloud of topic 1 at the second time stage in Figure \ref{fig:real-LLDA-editor} includes additional ``mother'' as a significant term, it would be difficult to attribute this increasing proportion trend solely to a higher prevalence of the common theme, i.e., ``suicidal ideation,'' rather than the inclusion of newly significant words. In other words, the resulting topic proportions cannot reflect topic progressions due to the lack of longitudinal consistency from the change in topic distributions at each time point.

\begin{figure}[H]
    \centering
    \includegraphics[width=0.85\textwidth]{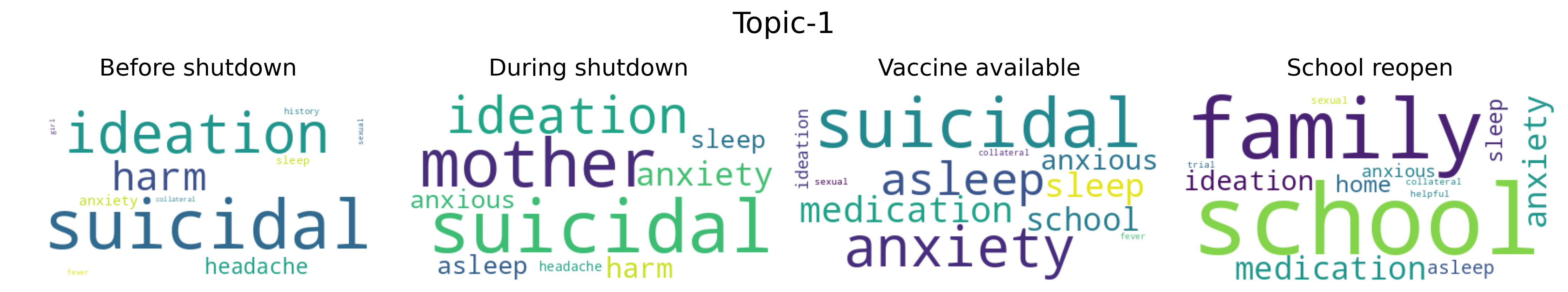}
    \includegraphics[width=0.85\textwidth]{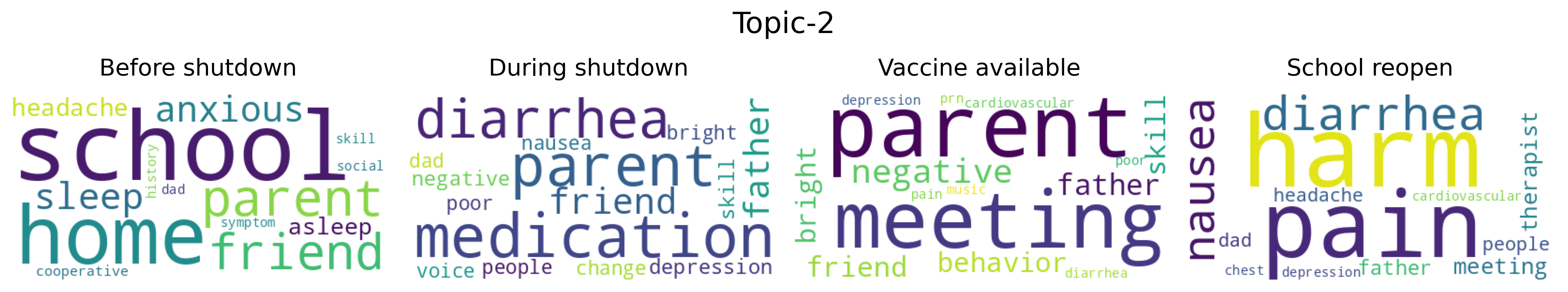}
    \includegraphics[width=0.85\textwidth]{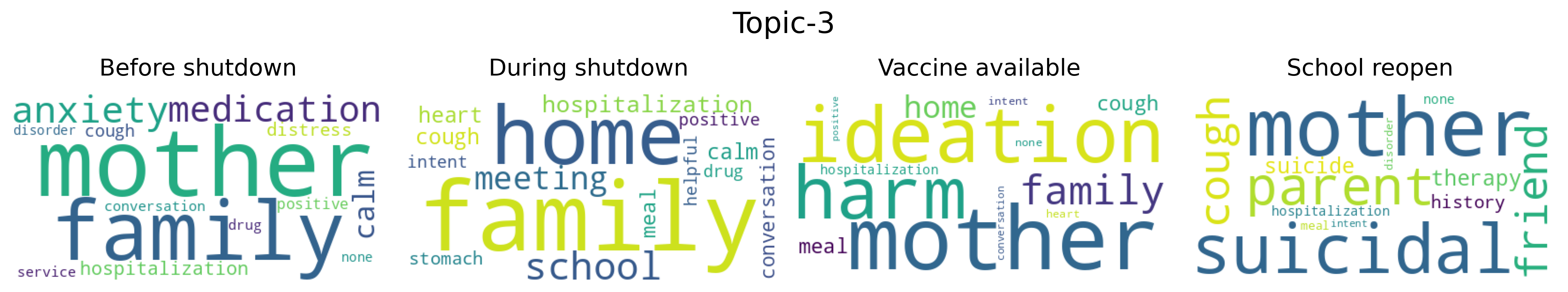}
    \caption{ \footnotesize An exemplary topic found by the multistage dynamic LDA \citep{blei2006dynamic} with topic transition variance prior specified as $0.1$ (package default setting). }
    \label{fig:real-LLDA-editor}
\end{figure}

Lastly, in comparison to the dynamic topic assumption, we revisit our static topic assumption to the mental health application. First, our generated topics persist over time and are guaranteed to belong to the same theme without further manual inspections. This facilitates the recovery of consistent topics, such as negative emotions (suicidal thoughts) and social interactions (family, school), from the documents. Secondly, the consistency of the topics enables the utilization of topic proportions to depict the longitudinal trajectory of topic prevalence. This allows us to detect dynamic patterns in patients' mental status over time, such as whether children patients experienced more social interactions or not. From an optimization standpoint, since our prior belief is to discover consistent themes/topics, it is more efficient to set the transition prior $\sigma^2_0=0$ and only optimize one set of topics, instead of setting a very small prior and optimizing topics at each stage. This is further demonstrated from the additional simulation settings with dynamically changing topics we conducted in Section \ref{A:sim:extension}.

\subsubsection{Existence of group-wise heterogeneity}
In the next step, we conduct explanatory data analysis to demonstrate there exists group-wise heterogeneity in the topic proportions. Specifically, based on the mdLDA model fitted in Figure \ref{fig:real-LLDA-editor}, we visualize the percentages of clinical notes with ``Suicidal'' as their dominant topic according to the sexual and gender identity of each child patient.  From Figure \ref{fig:group-wise-exp}, it is noticeable that the sexual and gender minority (SGM) children have more suicidal intention when the state shutdowns and it decreases faster after the vaccine becomes available. The differences in the trend demonstrate the existence of group-wise disparity in the topic proportions over time. In fact, according to previous mental health studies, girls and SGM children tend to report their symptoms differently compared to boys and non-SGM youth \citep{afifi2007gender,rosenfield2013gender,marshal2011suicidality,ploderl2015mental,russell2016mental}. Therefore, it is reasonable to assume group-wise heterogeneity persists within the topic proportions.
\begin{figure}[H]
    \centering
    \includegraphics[width=0.9\textwidth]{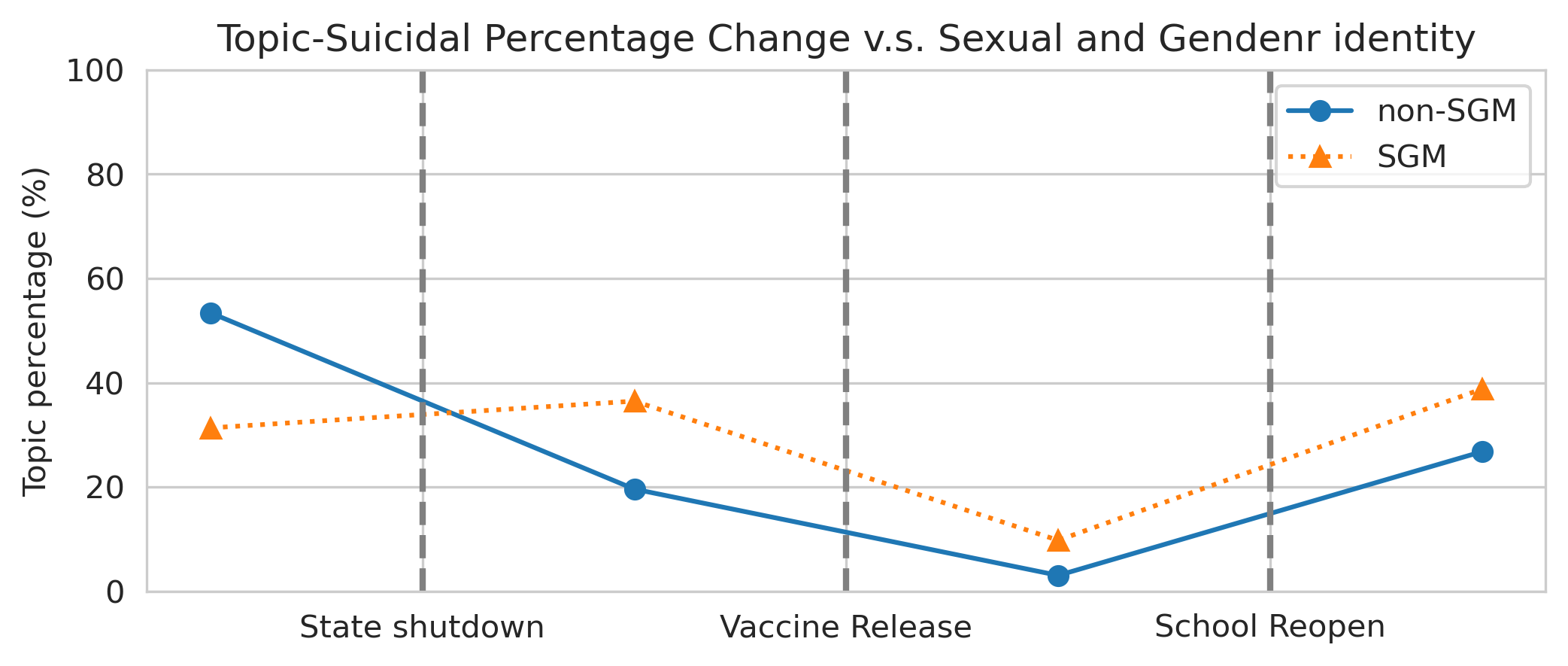}
    \caption{\footnotesize Percentage changes in ``Suicidal'' topic fitted by mdLDA based on sexual and gender identities.}
    \label{fig:group-wise-exp}
\end{figure}

\subsection{Choice of number of topics}
In the literature, one can choose the number of topics based on the widely-adopted evaluation metrics, such as coherence and perplexity scores. Generally, a higher coherence score and a lower perplexity score indicate a model which is more easily interpretable by humans and better fits the data. However, it is important to note that the highest coherence score does not always correspond to the optimal choice of the number of topics. Studies by \cite{stevens2012exploring} and \cite{zvornicanin2023coherence} show that coherence scores tend to increase with the number of topics.

For a real case, we adopt a two-fold approach to determine the number of topics. First, we evaluate the previous coherence and perplexity metrics, and secondly, we conduct preliminary word-cloud analysis to provide an initial estimate of the number of topics, while taking into account our targeted themes based on our study interests. The guiding principle is to achieve decent coherence scores while ensuring that the results remain readily interpretable without posing excessive challenges during evaluation. To be more specific, we consider the word-clouds from the explanatory multistage dynamic topic model \citep{blei2006dynamic} with default parameters in Figure \ref{fig:real-LLDA-editor}. Without delving into topic longitudinal progression and only focusing on the keywords, we observe potential themes such as negative sentiment (``anxiety'', ``suicidal'', ``depression'', ``negative'', ), social interactions (``mother'', ``parent'', ``family'', ``friend'', ``people'', ``school''), physical discomfort (``headache'', ``nausea'', ``pain'', ``harm'', ``cough''), medication (``treatment'', ``hospitalization'', ``medication'', ``therapy''), and positive emotions (``positive'', ``cooperative'', ``calm''). While the topic models may not provide a clear segregation of these themes, this analysis offers a reasonable range of topic numbers to consider. 

In the following, we list the word-cloud results under $K=4,5, \text{and } 8$. Across all settings, the topic model is able to identify anxiety/suicidal intentions, family and social interactions, as well as physical discomfort and medication themes. In particular, when the number of topics increases, the identified topics become more specific and can be manually grouped into broader themes. For instance, when $K=8$, we can categorize topics 1, 4, and 7 to the negative sentiment theme; topics 2, 6, and 8 to the family and social theme; and topics 3 and 5 to the physical discomfort theme. Additionally, with more specific topics, the eight topics tend to reach the highest coherence score. As a result, in our application, we choose to select three topics as they still effectively capture the three major themes present in the real data, based on the models with more topics.
\begin{figure}[H]
    \centering
    \includegraphics[width=\textwidth]{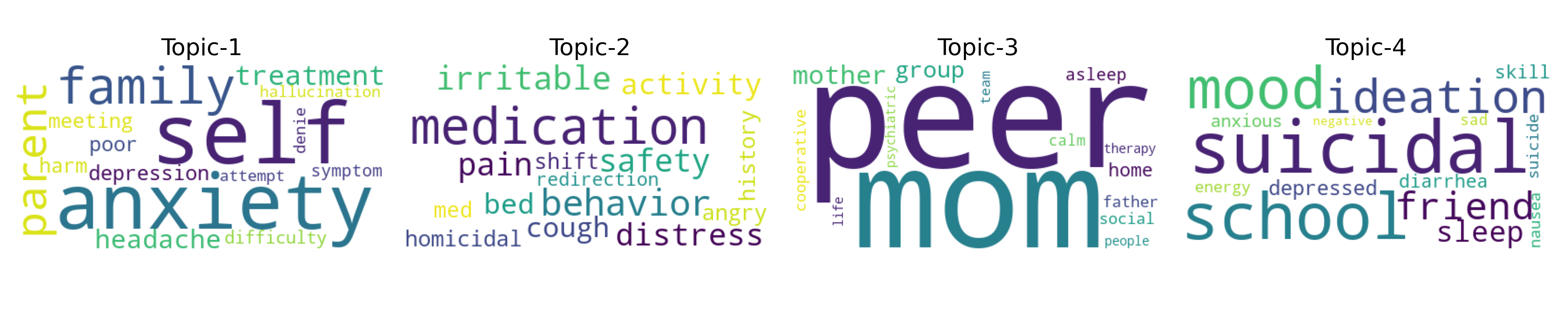}
    \caption{Word-clouds when four topics are specified ($K=4$).}
    \label{fig:K=4}
\end{figure}
\begin{figure}[H]
    \centering
    \includegraphics[width=\textwidth]{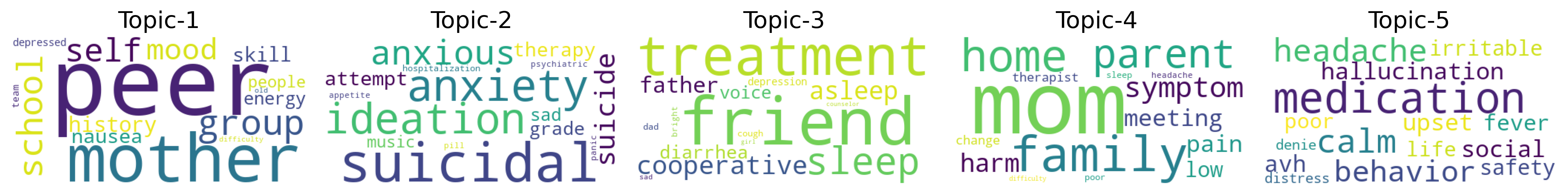}
    \caption{Word-clouds when five topics are specified ($K=5$).}
    \label{fig:K=4}
\end{figure}
\begin{figure}[H]
    \centering
    \includegraphics[width=\textwidth]{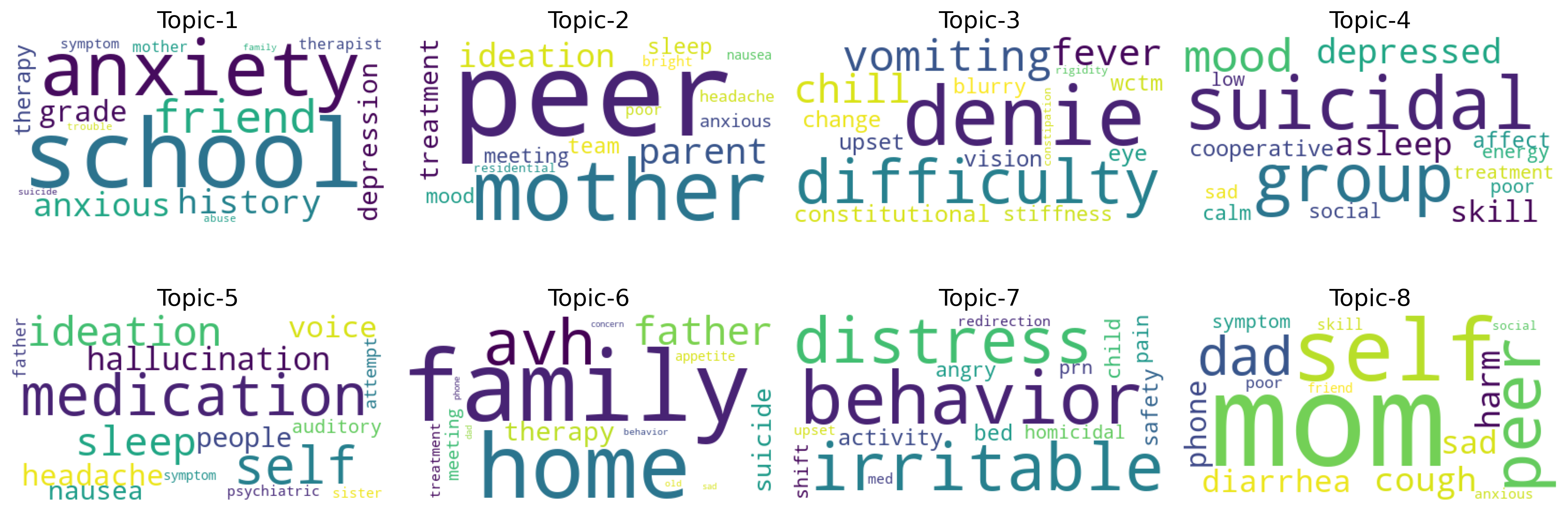}
    \caption{Word-clouds when eight topics are specified ($K=8$).}
    \label{fig:K=8}
\end{figure}

\newpage
{
\bibliographystyle{imsart-nameyear} 
\bibliography{bibliography}       
}